\documentclass[twoside,11pt,abbrvbib]{article}

\usepackage[preprint]{jmlr2e}

\usepackage{amsmath}
\usepackage{amsfonts}
\usepackage{mathtools}
\usepackage{bbm}
\usepackage{subcaption}
\usepackage{enumitem}
\usepackage{mathrsfs}
\numberwithin{equation}{section} \usepackage{colortbl} \usepackage[font=footnotesize]{caption}

\usepackage[capitalise, nameinlink]{cleveref}

\usepackage{booktabs}
\usepackage{arydshln}
\usepackage{multirow}
\usepackage{stmaryrd}
\usepackage{thmtools}

\usepackage{regexpatch}
\makeatletter
\xpatchcmd\thmt@restatable{\csname #2\@xa\endcsname\ifx\@nx#1\@nx\else[{#1}]\fi
}{\ifthmt@thisistheone
\csname #2\@xa\endcsname\ifx\@nx#1\@nx\else[{#1}]\fi
\else
\csname #2\@xa\endcsname[{Restated}]
\fi}{}{}
\makeatother

\usepackage[margin=1in]{geometry}

\usepackage{graphicx}    
\usepackage[dvipsnames]{xcolor}
\usepackage{ninecolors}

\usepackage{tikz}
    \usetikzlibrary{calc}
    \usetikzlibrary{positioning}
    \usetikzlibrary{intersections}
    \usetikzlibrary{arrows.meta}
    \usetikzlibrary{tikzmark}
    \usetikzlibrary{patterns}
\usepackage{pgfplots}
    \pgfplotsset{compat=1.18}
    \usepgfplotslibrary{groupplots}
    \usepackage{pgfplotstable}
    \usepgfplotslibrary{fillbetween}
     \usepackage{pgfplotstable}
 	\usepackage{etoolbox}
    
\pgfdeclarelayer{background layer}
\pgfdeclarelayer{foreground layer}
\pgfdeclarelayer{frontmost layer}
\pgfsetlayers{background layer, main, foreground layer, frontmost layer}

\usepackage{algorithm} \usepackage{algpseudocodex}

\newcounter{mylineno}
\renewcommand{\themylineno}{(\roman{mylineno})} 
\newcommand{\myline}[2]{\refstepcounter{mylineno}{\footnotesize#1\themylineno}:\hspace{0.5em}#2}

\usepackage{cancel}

\usepackage[most]{tcolorbox}

\usepackage{xspace} 

\usepackage{wrapfig}

\def\mydefb#1{\expandafter\def\csname bf#1\endcsname{\mathbf{#1}}}
\def\mydefallb#1{\ifx#1\mydefallb\else\mydefb#1\expandafter\mydefallb\fi}
\mydefallb aAbBcCdDeEfFgGhHiIjJkKlLmMnNoOpPqQrRsStTuUvVwWxXyYzZ\mydefallb

\def\mydefb#1{\expandafter\def\csname #1bb\endcsname{\mathbb{#1}}}
\def\mydefallb#1{\ifx#1\mydefallb\else\mydefb#1\expandafter\mydefallb\fi}
\mydefallb aAbBcCdDeEfFgGhHiIjJkKlLmMnNoOpPqQrRsStTuUvVwWxXyYzZ\mydefallb

\def\mydefb#1{\expandafter\def\csname #1cal\endcsname{\mathcal{#1}}}
\def\mydefallb#1{\ifx#1\mydefallb\else\mydefb#1\expandafter\mydefallb\fi}
\mydefallb aAbBcCdDeEfFgGhHiIjJkKlLmMnNoOpPqQrRsStTuUvVwWxXyYzZ\mydefallb

\def\mydefgreek#1{\expandafter\def\csname bf#1\endcsname{\text{$\boldsymbol{\csname #1\endcsname}$}}}
\def\mydefallgreek#1{\ifx\mydefallgreek#1\else\mydefgreek{#1}\lowercase{\mydefgreek{#1}}\expandafter\mydefallgreek\fi}
\mydefallgreek {alpha}{Alpha}{beta}{Beta}{gamma}{Gamma}{delta}{Delta}{epsilon}{Epsilon}{zeta}{Zeta}{eta}{Eta}{theta}{Theta}{iota}{Iota}{kappa}{Kappa}{lambda}{Lambda}{mu}{Mu}{nu}{Nu}{omicron}{Omicron}{pi}{Pi}{rho}{Rho}{sigma}{Sigma}{tau}{Tau}{upsilon}{Upsilon}{phi}{Phi}{xi}{Xi}{chi}{Chi}{psi}{Psi}{omega}{Omega}\mydefallgreek

\def\mydefb#1{\expandafter\def\csname T#1\endcsname{\boldsymbol{\mathcal{\MakeUppercase{#1}}}}}
\def\mydefallb#1{\ifx#1\mydefallb\else\mydefb#1\expandafter\mydefallb\fi}
\mydefallb aAbBcCdDeEfFgGhHiIjJkKlLmMnNoOpPqQrRsStTuUvVwWxXyYzZ\mydefallb

\DeclareMathOperator*{\argmin}{arg\, min}

\DeclareMathOperator{\myvec}{vec}

\DeclareMathOperator{\subjectto}{s.t.}

\definecolor{EmoryBlue}{RGB}{1, 33, 105} 
\definecolor{EmoryDarkBlue}{RGB}{12, 35, 64} 
\definecolor{EmoryMediumBlue}{RGB}{0, 51, 160} 
\definecolor{EmoryLightBlue}{RGB}{0, 125, 186} 
\definecolor{EmoryYellow}{RGB}{242, 169, 0} 
\definecolor{EmoryGold}{RGB}{181, 133, 0} 
\definecolor{EmoryMetallicGold}{RGB}{132, 117, 78} 

\definecolor{TuftsBlue}{RGB}{49,114,174}
\definecolor{TuftsBrown}{RGB}{94,75,60}

\definecolor{matplotlib0}{HTML}{1f77b4}
\definecolor{matplotlib1}{HTML}{ff7f0e}
\definecolor{matplotlib2}{HTML}{2ca02c}
\definecolor{matplotlib3}{HTML}{d62728}
\definecolor{matplotlib4}{HTML}{9467bd}
\definecolor{matplotlib5}{HTML}{8c564b}
\definecolor{matplotlib6}{HTML}{e377c2}
\definecolor{matplotlib7}{HTML}{7f7f7f}
\definecolor{matplotlib8}{HTML}{bcbd22}
\definecolor{matplotlib9}{HTML}{17becf}

\definecolor{matlab1}{rgb}{0.0000, 0.4470, 0.7410}
\definecolor{matlab2}{rgb}{0.8500, 0.3250, 0.0980}
\definecolor{matlab3}{rgb}{0.9290, 0.6940, 0.1250}
\definecolor{matlab4}{rgb}{0.4940, 0.1840, 0.5560}
\definecolor{matlab5}{rgb}{0.4660, 0.6740, 0.1880}
\definecolor{matlab6}{rgb}{0.3010, 0.7450, 0.9330}
\definecolor{matlab7}{rgb}{0.6350, 0.0780, 0.1840}

\newcommand{\DataSpace}{\Dcal}

\newcommand{\NSamp}{N}
\newcommand{\InputSpace}{\Xcal}
\newcommand{\NInput}{n_{\rm in}}
\newcommand{\TargetSpace}{\Ycal}
\newcommand{\NTarget}{n_{\rm target}}

\newcommand{\NFeat}{n_{\rm feat}}
\newcommand{\ThetaSpace}{\Theta}
\newcommand{\NTheta}{n_{\theta}}
\newcommand{\WSpace}{\Wcal}
\newcommand{\NW}{n_{w}}
\newcommand{\FctnClass}{\Fcal}

\newcommand{\Reduced}[1]{{#1^\downarrow}}
\newcommand{\SAA}[1]{#1_{\NSamp}}
\newcommand{\Taylor}[1]{#1^{(m)}}
\newcommand{\Opt}[1]{#1_{\star}}

\newcommand{\ObjFunctional}{\Jcal}
\newcommand{\ObjFctn}{J}

\newcommand{\RegFunctional}{\Rcal}
\newcommand{\RegFctn}{R}
\newcommand{\WOpt}{\bfw_{{\color{black}\star}}}
\newcommand{\WOptTaylor}{\Taylor{\Opt{\bfw}}}

\newcommand{\LossFunctional}{\Lcal}
\newcommand{\LossFctn}{L}

\newcommand{\lossfctn}{\ell}

\newcommand{\Featurizer}{z}
\newcommand{\FeaturizerMat}{A}

\newcommand{\caRho}{\rho^{(m)}}
\newcommand{\caLambda}{\lambda_w^{(m)}}

\newcommand{\caDeltaLam}{\Delta(\caLambda)}

\newcommand{\caL}{\SAA\LossFunctional[f^{(m)}]}
\newcommand{\caDL}{\nabla\caL}
\newcommand{\caDDL}{\nabla^2\caL}

\newcommand{\caAmat}{\FeaturizerMat_{\bftheta}}

\newcommand{\caAmatm}{\FeaturizerMat_{\bftheta}^{(m)}}

\newcommand{\caA}{\caAmat(\cdot)}
\newcommand{\caAm}{\caAmatm(\cdot)}

\newcommand{\caW}{\WOptTaylor(\bftheta)}
\newcommand{\caWLam}{\WOptTaylor(\bftheta; \caLambda)}

\newcommand{\caG}{{\bfg}_{\bftheta}^{(m)}}
\newcommand{\caH}{{\bfH}_{\bftheta}^{(m)}}
\newcommand{\caHLam}{\caH +\caLambda \bfI_{\NW}}

\newcommand{\cah}{h_\star^{(m)}} 

\newcommand{\caQ}{\SAA\Qcal^{(m)}}
\newcommand{\caQmat}{\SAA Q^{(m)}}

\newcommand{\cahN}{h_{\diamond}^{(m)}}

\newcommand{\XGBoost}{\texttt{XGBoost}\xspace}
\newcommand{\VPBoost}{\texttt{VPBoost}\xspace}
\newcommand{\AdaBoost}{\texttt{AdaBoost}\xspace}

\newcommand{\GDBoost}{\texttt{GDBoost}\xspace}

\crefname{lemma}{Lemma}{Lemmas}
\crefname{assumption}{Assumption}{Assumptions}
\crefname{proposition}{Proposition}{Propositions}
\crefname{remark}{Remark}{Remarks}
\crefname{corollary}{Corollary}{Corollaries}
\crefname{definition}{Definition}{Definitions}
\crefname{conjecture}{Conjecture}{Conjectures}
\crefname{axiom}{Axiom}{Axioms}

\Crefname{lemma}{Lemma}{Lemmas}
\Crefname{assumption}{Assumption}{Assumptions}
\Crefname{proposition}{Proposition}{Propositions}
\Crefname{remark}{Remark}{Remarks}
\Crefname{corollary}{Corollary}{Corollaries}
\Crefname{definition}{Definition}{Definitions}
\Crefname{conjecture}{Conjecture}{Conjectures}
\Crefname{axiom}{Axiom}{Axioms}

\newcommand{\R}{\mathbb{R}}

\usepackage[textsize=footnotesize]{todonotes}
\usepackage{tabularx}
\usepackage{array}
\usepackage{makecell}

\newcolumntype{C}[1]{>{\centering\arraybackslash}p{#1}}

\definecolor{matplotlib0}{HTML}{1f77b4}
\definecolor{matplotlib1}{HTML}{ff7f0e}
\definecolor{matplotlib2}{HTML}{2ca02c}
\definecolor{matplotlib3}{HTML}{d62728}
\definecolor{matplotlib4}{HTML}{9467bd}
\definecolor{matplotlib5}{HTML}{8c564b}
\definecolor{matplotlib6}{HTML}{e377c2}
\definecolor{matplotlib7}{HTML}{7f7f7f}
\definecolor{matplotlib8}{HTML}{bcbd22}
\definecolor{matplotlib9}{HTML}{17becf}

\colorlet{colorGB}{matplotlib0}
\colorlet{colorVP}{matplotlib3}
\colorlet{colorVPStart}{matplotlib2}
\colorlet{colorVPEnd}{matplotlib4}
\colorlet{colorVPStartEnd}{matplotlib8}
\colorlet{colorFullGB}{matplotlib5}
\colorlet{colorFullVP}{matplotlib6}
\colorlet{colorXGBoost}{black}

\pgfplotsset{
	styleShared/.style={thick, line width=1.5pt},
	styleGBBoost/.style={colorGB, mark=o},
	styleVPBoost/.style={colorVP, mark=square},
	styleVPStart/.style={colorVPStart, mark=triangle},
	styleVPEnd/.style={colorVPEnd, mark=pentagon*},
	styleVPStartEnd/.style={colorVPStartEnd, mark=diamond*},
styleXGBoostBase/.style={colorXGBoost, dashed, line width=1.5pt} }

\pgfplotsset{
	styleShared/.style={thick, line width=1.5pt},
	style_gb/.style={styleShared, colorGB, mark=o},
	style_vp/.style={styleShared, colorVP, mark=square},
	style_vp_at_start/.style={styleShared, colorVPStart, mark=triangle},
	style_vp_at_end/.style={styleShared, colorVPEnd, mark=pentagon*},
	style_vp_at_start_and_end/.style={styleShared, colorVPStartEnd, mark=diamond*},
	style_full_gb/.style={styleShared, colorGB, mark=*},
	style_full_vp/.style={styleShared, colorVP, mark=square*},
style_xgboost/.style={styleShared, colorXGBoost, dashed, line width=1.5pt} }

\pgfplotstableset{
        create on use/methodname/.style={
        create col/set list={\GDBoost, VP@Start, VP@End, VP@Start+End, \VPBoost, \XGBoost}},
        columns/methodname/.style={column type=l, string type, column name={}}
}

\pgfplotstableset{
        create on use/methodnamewfull/.style={
        create col/set list={\GDBoost, VP@Start, VP@End, VP@Start+End, \VPBoost, \XGBoost, Full NN, Full NN+VP}},
        columns/methodnamewfull/.style={column type=l, string type, column name={}}
}

\pgfplotstableset{
        create on use/method/.style={
        create col/set list={gb, vp_at_start, vp_at_end, vp_at_start_and_end, vp, xgboost}},
        columns/method/.style={column type=l, string type, column name={}}
}

\pgfplotstablenew[columns={method, methodname}]{6}\boostingmethodtable

\newcommand{\targetplotB}[2]{
    \addplot [
		#2, forget plot,
	] table[x=learner, y=target_metric_mean, col sep=comma] {#1}; 
\addplot [name path=upper, draw=none, no marks, forget plot] 
		table[x=learner, y expr=\thisrow{target_metric_mean}+\thisrow{target_metric_std}, col sep=comma] {#1};
	\addplot [name path=lower, draw=none, no marks, forget plot] 
		table[x=learner, y expr=\thisrow{target_metric_mean}-\thisrow{target_metric_std}, col sep=comma] {#1};
	\addplot [#2, opacity=0.25, forget plot] fill between[of=upper and lower];
}

\newcommand{\createcolmeanstd}[3]{
    \pgfplotstablecreatecol[
       create col/assign/.code={\getthisrow{#2_mean}\vala
         \getthisrow{#2_std}\valb
         \edef\newentry{$\pgfmathprintnumber{\vala} \pm \pgfmathprintnumber{\valb}$}\pgfkeyslet{/pgfplots/table/create col/next content}\newentry
       }]{#3}{#1}
   }
   
 \newcommand{\createcolmean}[3]{
    \pgfplotstablecreatecol[
       create col/assign/.code={\getthisrow{#2_mean}\vala
         \edef\newentry{$\pgfmathprintnumber{\vala}$}\pgfkeyslet{/pgfplots/table/create col/next content}\newentry
       }]{#3}{#1}
   }

\pgfplotstableset{
test_metrics_table_style/.style={
every head row/.style={before row=\toprule, after row=\midrule},
         every last row/.style={after row=\bottomrule},
every row 0 column 0/.style={
            postproc cell content/.append style={
                /pgfplots/table/@cell content/.add={\cellcolor{colorGB!25}}{},
            }
        },
        every row 1 column 0/.style={
            postproc cell content/.append style={
                /pgfplots/table/@cell content/.add={\cellcolor{colorVPStart!25}}{},
            }
        },
        every row 2 column 0/.style={
            postproc cell content/.append style={
                /pgfplots/table/@cell content/.add={\cellcolor{colorVPEnd!25}}{},
            }
        },
        every row 3 column 0/.style={
            postproc cell content/.append style={
                /pgfplots/table/@cell content/.add={\cellcolor{colorVPStartEnd!25}}{},
            }
        },
    },
    every row 4 column 0/.style={
            postproc cell content/.append style={
                /pgfplots/table/@cell content/.add={\cellcolor{colorVP!25}}{},
            }
        },
}

\pgfplotstableset{
    my number style/.style={
        every column no/.style={
            /pgf/number format/.cd,
            precision=4,
            fixed,
            fixed zerofill,
            1000 sep={.},
        }, 
        columns/0/.style={string type},
    },
}

\pgfplotstableset{
    columns/0/.style={string type},
    every col no/.style={
        /pgf/number format/.cd,
        precision=4,
        fixed,
        fixed zerofill,
        1000 sep={.},
    },
}

 \usepackage{lineno}

\usepackage{lastpage}
\jmlrheading{26}{2026}{1-\pageref{LastPage}}{3/20; Revised MM/DD}{MM/DD}{21-0000}{Abhijit Chowdhary, Elizabeth Newman, and Deepanshu Verma}

\ShortHeadings{VPBoost}{Chowdhary, Newman, and Verma}
\firstpageno{1}

\begin{document}

\title{Boost Like a (Var)Pro: Trust-Region Gradient Boosting via Variable Projection}

\author{\name Abhijit Chowdhary \email abhijit.chowdhary@tufts.edu\\
       \addr Department of Mathematics\\
       Tufts University\\
       Medford, MA 02155, USA
       \AND
       \name Elizabeth Newman \email e.newman@tufts.edu \\
       \addr Department of Mathematics\\
       Tufts University\\
       Medford, MA 02155, USA
       \AND
       \name Deepanshu Verma \email dverma@clemson.edu \\
       \addr School of Mathematical and Statistical Sciences\\
	Clemson University\\
	Clemson, SC 29634
       }

\editor{My editor}

\maketitle

\begin{abstract}

Gradient boosting, a method of building additive ensembles from weak learners, has established itself as a practical and theoretically-motivated approach to approximate functions, especially  using decision tree weak learners. 
Comparable methods for smooth parametric learners, such as neural
networks, remain less developed in both training methodology and theory. 
To this end,  we introduce \texttt{VPBoost} ({\bf V}ariable {\bf P}rojection {\bf Boost}ing), a gradient
boosting algorithm for separable smooth approximators, i.e., models with a smooth
nonlinear featurizer followed by a final linear mapping.  
\texttt{VPBoost} fuses variable projection, a training paradigm for separable models that enforces optimality of the linear weights, with a second-order weak learning strategy. 
The combination of second-order boosting, separable models, and variable projection give rise to a closed-form solution for the optimal linear weights and a natural interpretation of \VPBoost as a functional trust-region method. 
We thereby leverage trust-region theory to prove \VPBoost converges to a stationary point under mild geometric conditions and, under stronger assumptions, achieves a superlinear convergence rate.   
Comprehensive numerical experiments on synthetic data, image recognition, and scientific machine learning benchmarks demonstrate that \VPBoost learns an ensemble with improved evaluation metrics in comparison to gradient-descent-based boosting and attains competitive performance relative to an industry-standard decision tree boosting algorithm.

\end{abstract}

\begin{keywords}
    Gradient boosting, variable projection, convergence analysis, functional optimization, trust-region, subspace regularity, ensembles, weak learners, separable models, neural networks
\end{keywords}

\section{Introduction} 
\label{sec:introduction}

Global weather forecasting, personalized large-language models, ground-breaking
medical discovery, and beyond: machine learning is tackling the most ambitious
and exciting scientific challenges of today. 
The complexity of these challenges has propelled the deployment of massive
data-driven models, which require equally massive computational resources to
train. 
The behind-the-scenes design and training of machine learning
architectures is itself a computationally demanding task; one must solve a
high-dimensional, non-linear, non-convex optimization problem and calibrate over multiple trials.  
Even upon surmounting the training task, there is computational, fiscal, and environmental pressure to accelerate the evaluation of such models.  
This work takes on these challenges by pairing practical model design with fast, reliable, structure-exploiting training schemes.

Through the lens of gradient boosting~\citep{friedman_greedy_2001}, we construct
an additive ensemble by sequentially training \emph{weak learners}, lightweight
models with limited predictive power.
By design, the smaller scale of weak learners can reduce the cost of training and the additive structure can potentially be exploited for more efficient ensemble evaluation. 
Modern boosting implementations dominate tabular data benchmarks \citep{chen_xgboost_2016, ke_lightgbm_2017, prokhorenkova_catboost_2018} and are supported by well-established convergence theory~\citep{friedman_greedy_2001, friedman_stochastic_2002}. 

This marked success has propelled boosting beyond decision trees to other structured weak learner families, including modern smooth, nonlinear, parameterized weak learners (e.g., neural networks). 
However, even for small weak learners, parameterized models are notoriously difficult to train.   
Standard derivative-based methods must navigate a complex optimization landscape in parameter space and the nonlinearity makes rigorous convergence analysis challenging.   
To simplify the optimization landscape, we impose a practical \emph{separable structure} on each weak learner. 
Separable models encompass a broad class of smooth approximators consisting of a nonlinear featurizer followed by a final linear mapping. 
One can exploit the linearity through partial optimization and thereby reduce the optimization problem to one over the nonlinear parameters only. 
The process of eliminating the linear weights is known as \emph{variable projection (VarPro)}~\citep{golub_differentiation_1973}. 
Notably, training with VarPro improves the conditioning of the optimization problem~\citep{sjoberg_separable_1997, newman_train_2021}.

The additive ensemble structure pairs beautifully with VarPro-trained separable weak learners when using second-order gradient boosting. 
The second-order formulation leads to an optimization problem in parameter space that is quadratic in the linear weights. 
This is a windfall for VarPro, which admits a closed-form expression for the optimal linear weights when applied to a quadratic function.  
Second-order boosting, separable weak learner ensembles, and variable projection come together seamlessly
and give rise to our new algorithm \VPBoost (\textbf{V}ariable \textbf{P}rojection \textbf{Boost}ing).
Our main contributions are outlined below. 
	\begin{itemize}
		\item {\bf Unified Framework:} Gradient boosting operates over a function space and VarPro operates in parameter space. 
		\VPBoost serves as the bridge between the two spaces, enabling optimization in parameter space to align with updates in function space.  
		To the best of our knowledge, this is the first time such a connection has been made explicitly and realized algorithmically. 
		
		\item {\bf Robust Boosting (Section~\ref{sec:vpboost}):}  
		Fundamentally, a second-order boosting strategy determines a candidate weak learner by minimizing a local quadratic model. 
		The optimal linear parameters of separable VarPro weak learners effectively control the scale and directionality of the candidate to be added to the ensemble. 
		Thus, \VPBoost operates as a trust-region method in function space and inherits the adaptivity characteristic of such methods, such as the mechanisms to adjust the weak learner scale autonomously.  
		In the context of machine learning, this adaptivity also reduces the number of hyperparameters one has to tune, further simplifying the underlying optimization problem.

		\item {\bf Convergence Guarantees (Section~\ref{sec:convergence_analysis})}: 
      Using the functional trust-region framework, 
      we prove \VPBoost converges to a stationary point under so-called \emph{subspace
      regularity} conditions. 
      These mild conditions ensure that \VPBoost bridges the parameter-function space gap. 
      Unlike typical ensemble-level weak learning conditions in recent literature, subspace regularity is imposed upon each weak learner during training and enforces weak learner design on the fly. 
      To the best of our knowledge, the subspace regularity conditions and resulting convergence analysis are new and distinctive to \VPBoost.

		\item {\bf Empirical Success (Section~\ref{sec:numerical_experiments}):} Across small-scale synthetic tasks,  image recognition, 
large-scale tabular data classification, and a high-dimensional scientific
machine learning benchmark, \VPBoost consistently achieves the top performance, even compared to  industry-standard decision-tree algorithm. 
Results hold across multiple featurizer architectures, 
demonstrating the flexibility of the separable smooth approximator framework. 
		
\end{itemize}

The remainder of this paper is organized as follows.
Section~\ref{sec:literature_review} provides an extensive literature review of related work on gradient boosting and
neural network boosting and positions \VPBoost relative
to existing methods.
Section~\ref{sec:background} establishes the mathematical framework, including
the function-space formulation of gradient boosting, the separable model class,
and the variable projection theory that underpins our approach.
Section~\ref{sec:vpboost} introduces the \VPBoost algorithm and its functional
trust-region interpretation at the ensemble level.
Section~\ref{sec:convergence_analysis} establishes convergence guarantees,
including the subspace regularity condition and the results on stationarity.
Section~\ref{sec:numerical_experiments} presents numerical experiments on
regression and classification benchmark problems
comparing \VPBoost against gradient descent-based boosting of neural networks and an industry-standard decision-tree algorithm.
Section~\ref{sec:conclusion} concludes with directions for future work.
     \subsection{Literature Review}
\label{sec:literature_review}
At a high level, boosting refers to algorithms that construct a strong learner (i.e., an accurate model) by sequentially building an additive ensemble of weak learners (i.e., models that are slightly more predictive than random guessing) \citep{schapire_strength_1990}. 
It is well-documented that, with a sufficiently expressive class of weak learners \citep{vapnik_uniform_1971}, boosted models can reduce bias/overfitting, increase predictive power/generalizability compared to a single strong learner, and improve computational efficiency of model evaluation \citep{zhou_ensemble_2012, opitz_popular_1999}.

Gradient boosting (GB) is a subclass of weak learning algorithms in which one builds an ensemble greedily by optimizing subsequent weak learners to reduce the current model error  \citep{friedman_greedy_2001, smola_functional_2000, friedman_stochastic_2002, he_gradient_2019, daouia_optimization_2021}. 
Conveniently, many popular boosting algorithms, such as the distribution-aware \AdaBoost (Adaptive Boosting) \citep{freund_experiments_1996}, can be re-framed as GB for a particular objective functional \citep{mason_boosting_1999}.  
\citet{chen_xgboost_2016} recognized the opportunity to accelerate GB using second-order information and developed  \XGBoost (eXtreme Gradient Boosting), a highly-optimized, open-access library for boosting decision trees (DTs). 
It remains one of the most successful and widely-used DT boosting algorithms to date.\footnote{\url{https://github.com/dmlc/xgboost/tree/master/demo}}
While, historically, GB grew popular for nonparametric, piecewise DTs, the techniques are applicable to ensembles comprised of parametrized, smooth learners, including radial basis functions \citep{powell_radial_1987}, splines \citep{schmid_boosting_2008}, wavelets \citep{daubechies_nonlinear_1993}, and neural networks (NNs) \citep{rumelhart_learning_1986}.

The modern computational challenges of training large-scale machine learning models have renewed interest in GB, particularly for NNs \citep{rambhatla_empirical_2022}.   
It has been shown that boosted NNs can outperform boosted decision trees\footnote{Currently, tree-based learning remains state-of-the-art for medium-sized tabular data tasks \citep{grinsztajn_why_2022}.} even, in some cases, on tabular data with sufficient regularity \citep{opitz_popular_1999, popov_neural_2019, mcelfresh_when_2024}.
Theoretical advancements have connected GB to residual NNs \citep{he_deep_2016} through layer-wise boosting accompanied by convergence guarantees and generalization bounds \citep{veit_residual_2016, huang_learning_2018, nitanda_functional_2018}. 
Algorithmically, early works directly applied \AdaBoost to fully-connected NNs \citep{schwenk_adaboosting_1997, schwenk_boosting_2000} for classification tasks. 
More recently, state-of-the-art GB algorithms have been adapted to particular NN architectures, including for convolutional NNs \citep{frazao_weighted_2014, moghimi_boosted_2016, taherkhani_adaboost-cnn_2020, rahul_boosted_2023}, graph NNs \citep{ivanov_boost_2020, oono_optimization_2021}, and physics-informed NNs \citep{raissi_physics-informed_2019, fang_ensemble_2024}.  
Our new boosting algorithm, \VPBoost, is tailored to separable smooth approximators (i.e., nonlinear featurizer + linear mapping) that arise in NNs and beyond.   

\VPBoost accelerates boosting of separable models by exploiting two core principles: (i) incorporate second-order information during weak learning training and (ii) exploit the convexity of the training objective with respect to the final linear mapping. 
To our knowledge, \VPBoost is the first boosting algorithm to integrate these principles accompanied by convergence guarantees. 
However, the opportunity to benefit from each principle individually has been recognized in the literature.

\paragraph{Second-Order Boosting}   
For DTs, Newton-based boosting has been well-explored \citep{friedman_greedy_2001, sigrist_gradient_2021, zheng_functional_2012}, with \XGBoost being one of the most portable and scalable variants~\cite{chen_xgboost_2016}. 
Recently, \citet{luo_trboost_2023} proposed a generic trust-region-style boosting algorithm for DTs, \texttt{TRBoost}, to take advantage of the ability of trust-region methods to optimize non-convex objective functions. 
Beyond DTs, \citet{badirli_gradient_2020} developed \texttt{GrowNet} for second-order boosting of NNs. 
\texttt{GrowNet} utilized additional performance-enhancing features, including passing previous learner features as inputs into subsequent learners and re-training the ensemble globally after the addition of a new learner. 

 \VPBoost is a second-order NN boosting algorithm that is naturally represented as a functional trust-region algorithm for smooth learners. 
 Unlike \texttt{GrowNet}, \VPBoost trains each weak learner once and does not explicitly pass prior feature information as inputs to new weak learners. 
 As a result, the \VPBoost ensembles are truly additive, leading to efficient model evaluation, and the training cost remains low and dependent only on the size of a weak learner.

\paragraph{Boosting with Optimal Linear Weights} 
The benefits of optimizing the final linear weights have been explored in the context of architecture-adaptive NN boosting.  
\citet{bengio_convex_2005} proposed an incremental convex NN algorithm that sequentially built a weighted ensemble of linear classifiers where, for each new classifier, the weights were re-optimized through convex minimization. 
\cite{cortes_adanet_2017} re-framed boosting as a tool to adapt NN structure in \texttt{AdaNet}. 
The central two-step iteration augmented a base NN architecture with a subnetwork of greater depth and then optimized a final sparse linear layer based on a Rademacher-like complexity bound. 
Other works have explored boosting as a tool to adapt underlying NN architectures
through functional gradient descent in an infinite-dimensional space.
\citet{atsushi_nitanda_functional_2020} proposed \texttt{ResFGB}, which treated each
residual block as a functional gradient and stacked them sequentially, growing the
network deeper with each boosting iteration. 
In their framework, the linear weights were trained once at initialization and then held fixed throughout all boosting iterations, reducing the problem to optimizing the featurizer alone in function space.

Instead of pre- or post-optimizing the linear layer, \VPBoost embeds the optimal linear weights, determined analytically, directly into each weak learning training iteration. 
This gives rise to provably better optimization properties along a lower-dimensional manifold (\Cref{sec:varpro}). 
As presented, \VPBoost is not a structurally-adaptive algorithm and instead favors extremely small weak learners with few dense linear weights.

\paragraph{Theoretical Guarantees for Boosting} 

Convergence guarantees and associated rates have been well-documented in GB literature. 
In many cases, GB theoretical analysis begins with a weak learning condition to describe the expressabilty of an ensemble. 
For example, \citet{lu_randomized_2020} introduced the Minimal Cosine Angle (MCA), which measures the density of weak learners in function space, and subsequently proved convergence with a linear rate of their randomized GB algorithm, \texttt{RGBM}. 
For NNs, 
\citet{atsushi_nitanda_functional_2020} subsequently improved upon \texttt{ResFGB} with
\texttt{ResFGB-FW} with the Frank-Wolfe method and new theoretical analysis based on a margin condition, an assumption on how well functions from a learnable hypothesis class can separate data with different labels.  
This milder condition led to a sublinear convergence rate,  improved statistical guarantees, and was extended
to graph NNs in \citet{oono_optimization_2021}.

In comparison, \VPBoost requires no ensemble-level weak learning condition and no assumption on the data distribution; descent is guaranteed automatically at every iteration, and convergence follows from mild geometric conditions on the featurizer that we term \emph{subspace regularity}.  
The analysis is applicable to both regression-based and classification-based tasks and extends beyond NNs to any smooth separable approximator.

Table~\ref{tab:related_work} summarizes the algorithmic design choices and theoretical guarantees of the methods discussed above.

\begin{table}[t]
\centering
\small
\setlength{\tabcolsep}{5pt}
\begin{tabular}{@{}l c cc c @{\hspace{2pt}} ccc@{}}
\toprule
& \multicolumn{3}{c}{\textit{Algorithm}}
& 
& \multicolumn{3}{c}{\textit{Theory}} \\
\cmidrule(lr){2-4}\cmidrule(l){6-8}
Method & Linear Weights & Ensemble & $2^\circ$ && Convergence & Rate & Condition \\
\midrule
\multicolumn{8}{@{}l}{\textit{Classical gradient boosting}} \\[2pt]
\quad GBM~[\citenum{friedman_greedy_2001}]
  & ---  & Additive & $\times$ && --- & --- & --- \\
\quad RGBM~[\citenum{lu_randomized_2020}]
  & ---  & Additive & $\times$ && Global min. & Linear & MCA \\
\quad AGBM~[\citenum{lu_accelerating_2020}]
  & ---  & Additive & $\times$ && Global min. & Accel. & MCA \\
\quad TRBoost~[\citenum{luo_trboost_2023}]
  & ---  & Additive & \checkmark && Global min. & Linear & MCA \\
\quad \XGBoost~[\citenum{chen_xgboost_2016}]
  & --- & Additive & \checkmark && $^{\dagger}$ & --- & --- \\[4pt]
\multicolumn{8}{@{}l}{\textit{Neural network boosting}} \\[2pt]
\quad Bengio et al.~[\citenum{bengio_greedy_2006}]
  & Post-hoc  & Compositional & $\times$ && --- & --- & --- \\
\quad AdaNet$^{\star}$~[\citenum{cortes_adanet_2017}]
  & Post-hoc & Architectural & $\times$ && Gen. bound & --- & --- \\
\quad GrowNet$^{\star}$~[\citenum{badirli_gradient_2020}]
  & Post-hoc & Additive & \checkmark && --- & --- & --- \\
\quad ResFGB~[\citenum{nitanda_functional_2018}]
  & Decoupled & Compositional & $\times$ && Gen. bound & --- & --- \\
\quad ResFGB-FW~[\citenum{atsushi_nitanda_functional_2020}]
  & Decoupled & Compositional & $\times$ && Global min. & Sublinear & Margin cond. \\
\quad \VPBoost~(ours)
  & Integrated & Additive & \checkmark && Stationary pt. & --- & Subspace reg. \\
\midrule 
\midrule
\multicolumn{8}{@{}p{0.95\textwidth}}{\footnotesize
$^{\dagger}$~[\citenum{ki_what_2026}] establish a near-minimax rate for the
 least-squares estimator over the \XGBoost function class; whether the \XGBoost
 algorithm itself converges to this optimum is an open problem. 
$^{\star}$~Weak learner inputs include outputs of previous weak learners.
}\\
\bottomrule
\end{tabular}
{ \caption{Comparison of gradient boosting methods.} \label{tab:related_work}}
\end{table}     
\section{Background}
\label{sec:background}

Consider a data space $\DataSpace \subset \InputSpace \times \TargetSpace$
with input space $\InputSpace \subset \Rbb^{\NInput}$ and target space
$\TargetSpace \subset \Rbb^{\NTarget}$, and let $y: \InputSpace \to \TargetSpace$
be the input-target pairing function (e.g., a labeling function for
classification).
We seek a function $f: \InputSpace \to \Rbb^{\NTarget}$ that satisfies $f(\bfx) =
y(\bfx)\in \Rbb^{\NTarget}$ for all input-target pairs $(\bfx, y(\bfx))\in
\DataSpace$. 
We assume $f$ belongs to a hypothesis class $\FctnClass \coloneqq L^2(\InputSpace, \mu; \Rbb^{\NTarget})$, the Hilbert space
of square-$\mu$-integrable, $\Rbb^{\NTarget}$-valued functions on $\InputSpace$.
We pose the learning problem as \begin{align}
  \label{eq:expected_loss}
  \min_{f\in \FctnClass} \ObjFunctional[f]
  \equiv \LossFunctional[f] + \lambda \RegFunctional[f]
  \quad \text{with} \quad
  \LossFunctional[f]
  = \int_{\InputSpace} \lossfctn\bigl(f(\bfx), y(\bfx)\bigr)\, d\mu(\bfx),
\end{align}
where $\ObjFunctional: \FctnClass \to \Rbb$ is the objective functional,
$\LossFunctional: \FctnClass \to \Rbb$ is the loss functional, 
$\lossfctn: \Rbb^{\NTarget} \times \TargetSpace \to \Rbb$ is the loss function per datum, 
$\RegFunctional: \FctnClass \to \Rbb$ is the regularization functional.
Here, $\LossFunctional$ measures the approximation error and $\RegFunctional$ promotes desirable function properties (e.g., smoothness).   
The regularization parameter $\lambda > 0$ balances the data fit and regularization.

This work considers a deterministic variant of \eqref{eq:expected_loss} via
a sample average approximation (SAA).
Given a finite training set $\DataSpace_N = \{(\bfx_i, \bfy_i)\}_{i=1}^N \subset \DataSpace$ with
$\bfy_i = y(\bfx_i)$ that is large and representative of the data space, we
replace $\mu$ by the empirical measure $\mu_N = \frac{1}{N}\sum_{i=1}^N
\delta_{\bfx_i}$ and redefine our hypothesis class as $\FctnClass \coloneqq
L^2(\InputSpace, \mu_N; \Rbb^{\NTarget})$.
The resulting SAA objective is
\begin{equation}\label{eq:saa_objective_functional}
\begin{alignedat}{3}
  & \min_{f\in \FctnClass}
  &&\SAA\ObjFunctional[f]
  &&\equiv \SAA\LossFunctional[f] + \lambda \RegFunctional[f]\\
  &\text{where} \quad
  &&\SAA\LossFunctional[f]
  &&= \int_{\InputSpace} \lossfctn\bigl(f(\bfx), y(\bfx)\bigr)\, d\mu_N(\bfx)
     = \frac{1}{N}\sum_{i=1}^N \lossfctn\bigl(f(\bfx_i), \bfy_i\bigr).
\end{alignedat}
\end{equation}
We impose the following smoothness and curvature conditions on the loss function, all of which are satisfied for common machine learning losses, including mean squared error and cross-entropy. 
\begin{assumption}[Loss Regularity]
  \label{assump:loss_functional}
  We assume the loss function $\ell: \Rbb^{\NTarget} \times \TargetSpace \to
  \Rbb$,
  \begin{enumerate}[label={\bf (L\arabic*)}, leftmargin=*]
    \item \label{assump:ell_bounded_below}
      is bounded from below, i.e., there exists some constant
      $\kappa_{\rm low}>-\infty$ such that
      $\lossfctn(\hat{\bfy}, \bfy) \ge \kappa_{\rm low}$ for all
      $(\hat{\bfy}, \bfy) \in \Rbb^{\NTarget} \times \TargetSpace$, and
    \item \label{assump:ell_twice_continuously_differentiable}
      is twice continuously differentiable with respect to
      $\hat{\bfy}$; i.e., $\nabla_{\hat{\bfy}}^2 \lossfctn(\hat{\bfy}, \bfy)$
      exists and is continuous in $\hat{\bfy}$ for every $(\hat{\bfy}, \bfy) \in
      \Rbb^{\NTarget} \times \TargetSpace$, and
    \item \label{assump:ell_convexity}
      is convex in its first argument, i.e.,
      $\nabla_{\hat{\bfy}}^2 \lossfctn(\hat{\bfy}, \bfy) \succeq 0$ for all
      $(\hat{\bfy}, \bfy) \in \Rbb^{\NTarget} \times \TargetSpace$, and
    \item \label{assump:ell_bounded_hessian}
      has a uniformly bounded Hessian with respect to the first 
      argument, i.e.,
      $\|\nabla_{\hat{\bfy}}^2 \lossfctn(\hat{\bfy}, \bfy)\| \le \beta$ for all
      $(\hat{\bfy}, \bfy) \in \Rbb^{\NTarget} \times \TargetSpace$ and some
      $\beta > 0$.
  \end{enumerate}
  These imply matching differentiability, convexity, and bounded-curvature
  properties on $\SAA\LossFunctional$ under the empirical measure.
\end{assumption}
Under Assumption~\ref{assump:loss_functional}, $\SAA\LossFunctional$ is
Fr\'{e}chet differentiable on $\FctnClass$, and by the Riesz representation
theorem, its derivative is represented by a unique
functional gradient $\nabla \SAA\LossFunctional[f] \in \FctnClass$, see
Appendix~\ref{app:functional_derivatives}. 
One can show, under the empirical measure $\mu_N$, the directional Fr\'{e}chet derivative
in direction $h \in \Fcal$ reduces to
\begin{align}\label{eq:functional_gradient_inner_product}
  \langle \nabla \SAA\LossFunctional[f],\, h \rangle
  = \frac{1}{N}\sum_{i=1}^N
    \nabla_{\hat{\bfy}}\lossfctn\bigl(f(\bfx_i), \bfy_i\bigr)^\top h(\bfx_i).
\end{align}

\subsection{Gradient Boosting 101}
\label{sec:gradient_boosting}

Ensemble learning is an approach for approximating a solution to
\eqref{eq:saa_objective_functional} by constructing a weighted additive model of $M + 1$ learners
$h^{(i)} \in \FctnClass$ for  $i = 0, \dots, M$; that is
\begin{align*}f \approx f^{(M)} = h^{(0)} + \sum_{m=1}^M \alpha^{(m)} h^{(m)}.
\end{align*} 
where $\alpha^{(m)} > 0$ for  $m = 1,\dots, M$ is the \emph{boosting rate} for the $m^{\rm th}$ weak learner (i.e., a step size in function space). 
Each additive component, $h^{(m)}$, is designed to be simple to optimize at the cost of
limited predictive power; hence, we call the components \emph{weak learners}.
Gradient boosting algorithms determine weak learners greedily. 
The procedure often commences by assigning the initial learner to be an optimal constant $h_0(\cdot) \equiv \bfc_0$ by solving
\begin{align*}
   h^{(0)} \in \argmin_{\text{$h\in \FctnClass^{(0)}$}} \SAA\ObjFunctional[h] \quad \text{where} \quad  \FctnClass^{(0)} = \{h \in \FctnClass \mid h(\cdot) \equiv \bfc \text{ for some } \bfc\in \Rbb^{\NTarget}\}. 
\end{align*}
Subsequent weak learners are trained to minimize the residual
\begin{align}\label{eq:grad_boost_opt_prob}
  h^{(m)} \in \argmin_{h \in \FctnClass^{(m)} \subset \FctnClass} \SAA\LossFunctional[f^{(m-1)} + h]+ \lambda^{(m)} \RegFunctional^{(m)}[h],
\end{align}
where $\FctnClass^{(m)}$ can, in principle, be some other hypothesis class varying in $m$.
We allow for a separate regularization functional, $\RegFunctional^{(m)}$, and
regularization parameter, $\lambda^{(m)} > 0$, per weak learner. 
In principle, one can view weak learners as descent directions in function space relative to the loss functional.
A natural choice of descent direction is $-\nabla \SAA\LossFunctional[f^{(m-1)}]$, hence the term ``gradient boosting'' is effectively a form of gradient descent in the function space.

\begin{wrapfigure}{r}{0.3\linewidth}
    \centering
    \includegraphics[width=\linewidth]{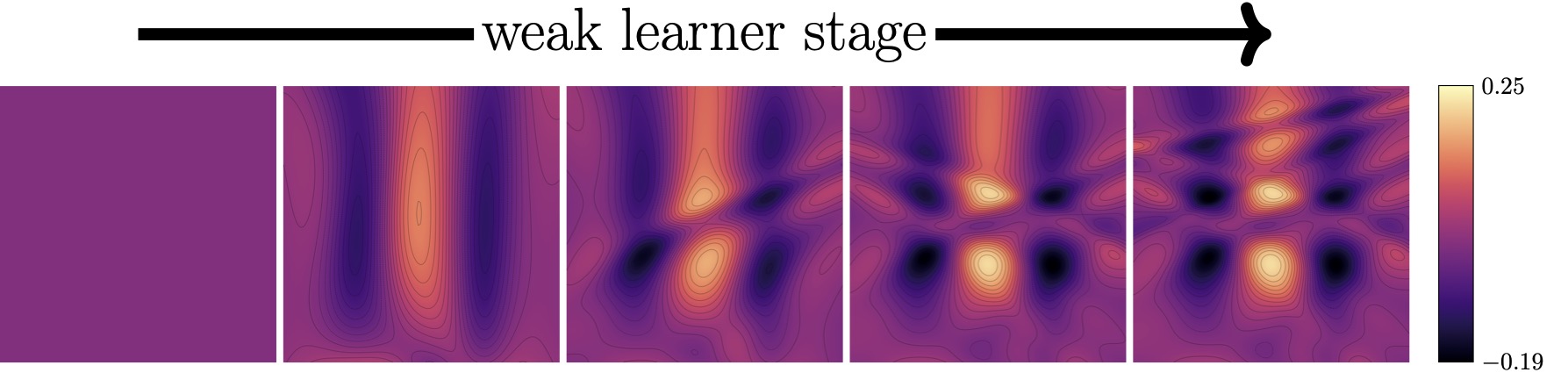}
    \caption{Weak learners progressively capture higher frequencies.}
    \label{fig:single_training_vp}
\end{wrapfigure}
This architecture, and method of construction, has proven fruitful over years of research.
As noted above, learners are typically designed to be small, primarily for the implicit practical purpose of regularization and combating overfitting.
Hence, compared to large-scale learners, training a gradient boosting ensemble is far less memory intensive, needing merely a single weak learner in memory at any given time.
Moreover, the individual optimization problems across the stages has relatively low dimensionality, often resulting in significantly less complex optimization landscapes. 
Finally, note that weak learners essentially minimize the residual at stage $m$.  
This loosely suggests that weak learners  decrease approximation error of progressively higher frequencies (Figure~\ref{fig:single_training_vp}). 
Similar ideologies inform residual learning~\citep{he_deep_2016}, multigrid~\citep{trottenberg_multigrid_2001}, multiscale learning~\citep{haber_learning_2018}, fine tuning~\citep{hu_lora_2021}, iterative architectures~\citep{he_iresnet_2025}, and more.

\subsection{Separable Weak Learners}
\label{sec:separable_models}

In order to solve \eqref{eq:grad_boost_opt_prob} in-practice, one often
selects a subclass of functions in $\Fcal$ that can be well-represented
computationally. 
The specific subclass is flexible, ranging from nonparametric decision trees
\citep{friedman_greedy_2001} to modern neural networks
\citep{badirli_gradient_2020}. 
In this work, we consider a specialized class of smooth, \emph{separable} parameterized
models, which consists of a nonlinear feature extractor followed by a linear
mapping. 
A linear combination of basis functions or neural networks with a linear head are
prototypical examples of a separable model.
We formalize separability in function space before appealing to the finite-dimensional
point of view used in practice.

Let $\FeaturizerMat_{\bftheta}: \InputSpace \to \Rbb^{\NTarget \times \NW}$
denote a nonlinear feature-extraction map, parameterized by $\bftheta \in
\Rbb^{\NTheta}$, that returns a linear operator acting on $\bfw \in \Rbb^{\NW}$.
Henceforth, we consider learners of the form
\begin{equation*}
h(\bfx; \bftheta, \bfw) = \FeaturizerMat_{\bftheta}(\bfx) \bfw.
\end{equation*} 
Here, $\FeaturizerMat_{\bftheta}$ is understood as an element of the Bochner
operator space $\Bcal = L^2(\InputSpace, \mu_{N};
\Rbb^{\NTarget\times\NW})$, equipped with the operator norm on the target space
$\Rbb^{\NTarget \times \NW}$.
Note, 
by the definition of the induced operator norm, one can show that separable models enjoy a 
submultiplicative property under these choices of norm.
\begin{lemma}\label{lem:operator_norm}
   Given Bochner space $\Bcal$ with induced operator norm
  $\|\FeaturizerMat_{\bftheta}\|_{\Bcal} \coloneqq \sup_{\|\bfw\|_2=1}
  \|\FeaturizerMat_{\bftheta}(\cdot)\bfw\|_{\FctnClass}$,
  \begin{align*}\|\FeaturizerMat_{\bftheta}(\cdot)\bfw\|_{\FctnClass}
    \le \|\FeaturizerMat_{\bftheta}\|_{\Bcal}\,\|\bfw\|_2,
    \qquad \text{for all $(\bfw, \FeaturizerMat_{\bftheta}) \in \Rbb^{\NW} \times \Bcal$}.
\end{align*}
\end{lemma}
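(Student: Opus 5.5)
The inequality follows directly from the definition of $\|\FeaturizerMat_{\bftheta}\|_{\Bcal}$ as a supremum over unit vectors, together with homogeneity of the $\FctnClass$-norm. I would split into the cases $\bfw = \mathbf{0}$ and $\bfw \neq \mathbf{0}$.

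If $\bfw = \mathbf{0}$, then $\FeaturizerMat_{\bftheta}(\cdot)\bfw \equiv \mathbf{0}$ in $\FctnClass$. Both sides vanish, so the inequality holds trivially, even if $\|\FeaturizerMat_{\bftheta}\|_{\Bcal}$ were infinite.

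If $\bfw \neq \mathbf{0}$, set $\bfu = \bfw/\|\bfw\|_2$, so $\|\bfu\|_2 = 1$. The map $\bfv \mapsto \FeaturizerMat_{\bftheta}(\cdot)\bfv$ is linear pointwise in $\bfx$. Hence $\FeaturizerMat_{\bftheta}(\cdot)\bfw = \|\bfw\|_2\, \FeaturizerMat_{\bftheta}(\cdot)\bfu$ as elements of $\FctnClass$. Since $\|\cdot\|_{\FctnClass}$ is a norm, absolute homogeneity gives
\begin{align*}
\|\FeaturizerMat_{\bftheta}(\cdot)\bfw\|_{\FctnClass} = \|\bfw\|_2\, \|\FeaturizerMat_{\bftheta}(\cdot)\bfu\|_{\FctnClass} \le \|\bfw\|_2 \sup_{\|\bfv\|_2 = 1} \|\FeaturizerMat_{\bftheta}(\cdot)\bfv\|_{\FctnClass} = \|\FeaturizerMat_{\bftheta}\|_{\Bcal}\, \|\bfw\|_2 .
\end{align*}
The inequality step holds because $\bfu$ is one of the unit vectors in the supremum.

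The only point that needs care is that $\FeaturizerMat_{\bftheta}(\cdot)\bfv$ really is an element of $\FctnClass$ and that the supremum is finite. Under the empirical measure $\mu_N$ this is immediate. The quantity
\[
\|\FeaturizerMat_{\bftheta}(\cdot)\bfv\|_{\FctnClass}^2 = \frac{1}{N}\sum_{i=1}^N \|\FeaturizerMat_{\bftheta}(\bfx_i)\bfv\|_2^2
\]
is a finite sum. Each term is bounded by $\|\FeaturizerMat_{\bftheta}(\bfx_i)\|^2 \|\bfv\|_2^2$ using the matrix operator norm. Therefore the supremum is at most $\bigl(\frac{1}{N}\sum_{i=1}^N \|\FeaturizerMat_{\bftheta}(\bfx_i)\|^2\bigr)^{1/2} = \|\FeaturizerMat_{\bftheta}\|_{L^2(\mu_N)} < \infty$. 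This verification also shows that the induced norm is dominated by the Bochner norm, which is consistent with the space $\Bcal$ as defined.
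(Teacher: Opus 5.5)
Your proof is correct and matches the paper's approach: the paper gives no separate proof and only remarks that the bound follows from the definition of the induced operator norm, which is exactly your step of normalizing by $\|\bfw\|_2$ and comparing with the supremum. Your extra check that the supremum is finite under $\mu_N$ (bounded by the $L^2(\mu_N)$ Bochner norm of the featurizer) fills in a detail the paper leaves implicit, and it also makes your aside about an ``infinite'' norm in the $\bfw=\mathbf{0}$ case unnecessary.
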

Note, for the rest of this paper, we do not explicitly indicate which norm is
being considered. 
We always use the default norm per space: the $L^2$-norm for function space, the operator norm for Bochner space, and the $2$-norm for $\Rbb^n$. 
See Appendix~\ref{app:bochner} for details on the associated spaces.

\subsection{Optimizing Separable Weak Learners with Variable Projection (VarPro)}
\label{sec:varpro}

By parameterizing separable weak learners with weights $(\bfw, \bftheta)$, one can approximate the infinite-dimensional weak learner training problem in  \eqref{eq:grad_boost_opt_prob} by a finite-dimensional deterministic optimization problem in parameter space
\begin{equation}
\begin{alignedat}{3}\label{eq:separable_nn_objective}
  &\min_{\bfw\in \Rbb^{\NW}, \bftheta \in \Rbb^{\NTheta}} 
  &&\SAA\ObjFctn(\bfw, \bftheta) 
  &&\equiv \SAA\LossFctn(\bfw, \bftheta) + \lambda_w \RegFctn_w(\bfw) + \lambda_\theta \RegFctn_\theta(\bftheta)\\
  &\quad \text{where} \quad 
  &&\SAA\LossFctn(\bfw, \bftheta) 
  &&= \frac{1}{\NSamp} \sum_{i=1}^\NSamp \lossfctn(\FeaturizerMat_{\bftheta}(\bfx_i)\bfw, \bfy_i) .
\end{alignedat}
\end{equation}
Here, $\SAA \ObjFctn: \Rbb^{\NW} \times \Rbb^{\NTheta} \to \Rbb$ is the objective function, $\RegFctn_w: \Rbb^{\NW} \to \R$ and $\RegFctn_\theta: \Rbb^{\NTheta} \to
\R$ are appropriately defined regularization functions.  

Strategically exploiting the separability of NNs has been shown to accelerate
training and produce more
accurate models
\citep{newman_train_2021, cyr_robust_2020}. 
This is because the two blocks of weights, $\bfw$ and $\bftheta$, are
strongly coupled; that is, a good choice of linear mapping depends on the extracted features. 
Variable projection (VarPro)  explicitly
captures this coupling through partial optimization of the linear weights. 
Formally, VarPro transforms the full optimization
problem~\eqref{eq:separable_nn_objective} into a reduced, bi-level optimization
problem of the form
\begin{subequations}\label{eq:separable_nn_reduced_prob}
  \begin{alignat}{3}
    \label{eq:separable_nn_objective_reduced}
    &\min_{\bftheta \in \Rbb^{\NTheta}} \quad &\Reduced{\SAA\ObjFctn}(\bftheta) 
    &&&\equiv \SAA\ObjFctn(\WOpt(\bftheta), \bftheta)
    \\
    \label{eq:w_opt}
    & \subjectto \quad & \WOpt(\bftheta) 
    &&&\in \argmin_{\bfw\in \Rbb^{\NW}} \SAA\ObjFctn(\bfw, \bftheta).
  \end{alignat}
\end{subequations}
The function $\Reduced{\SAA\ObjFctn}$ is commonly referred to as
the reduced objective, and depends only on $\bftheta$.
\begin{wrapfigure}{t}{0.3\linewidth}
\centering
\includegraphics[width=\linewidth]{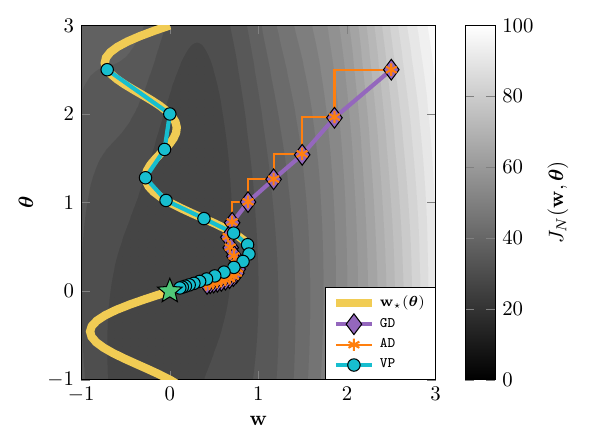}
\caption{Trajectories of gradient descent (\texttt{GD}), alternating directions (\texttt{AD}), and variable projection (\texttt{VP}). 
Geometrically, \texttt{VP} iterates traverse the curve $\bfw_{\star}(\bftheta)$.}
\label{fig:varpro_illustrative_example}
\end{wrapfigure}
Optimizing with $\Reduced{\SAA\ObjFctn}$ amounts to iterating over
a projection of $\Rbb^{\NW}\times \Rbb^{N_\theta}$ onto $\Rbb^{N_\theta}$ where each point in
the space is paired with its corresponding linear parameters in $\Rbb^{\NW}$, hence
the name ``variable projection.'' 
We provide an illustrative example to demonstrate the geometric
interpretation of optimization with VarPro in
Figure~\ref{fig:varpro_illustrative_example}.

Originally, VarPro was designed for separable nonlinear least
squares problems \citep{golub_differentiation_1973, kaufman_variable_1975,
oleary_variable_2013}.
It has since found success in inverse problems
\citep{chung_numerical_2006, espanol_variable_2023}, and has been extended to
manifolds \citep{noferini_nearest_2025}, rational approximation
\citep{hokanson_least_2018}, and nonsmooth settings
\citep{van_leeuwen_variable_2021}.
Neural network training has also benefited from VarPro in both deterministic
\citep{newman_train_2021, cyr_robust_2020} and stochastic settings
\citep{newman_slimtrain---stochastic_2022}.

Under appropriate assumptions on the objective function, the optimal linear weights,
$\WOpt(\bftheta)$ from~\eqref{eq:w_opt}, are uniquely defined and continuously
differentiable. 
\begin{assumption}[Objective Regularity] \label{assump:vp}
  We will assume $\SAA\ObjFctn: \Rbb^{\NW} \times \Rbb^{\NTheta} \to \Rbb$
	\begin{enumerate}[label={\bf (J\arabic*)}, leftmargin=*]
	\item  \label{assump:vp_smoothness} is twice continuously differentiable and $K$-smooth; that is, there exists a constant $K > 0$ such that 
        \begin{align*}
            \|\nabla \SAA\ObjFctn(\bfw, \bftheta) - \nabla \SAA\ObjFctn(\bfw', \bftheta')\|^2 \le K^2\left(\|\bfw - \bfw'\|^2 + \|\bftheta - \bftheta'\|^2\right),
        \end{align*}
    for any $(\bfw, \bftheta), (\bfw', \bftheta')\in \Rbb^{\NW} \times \Rbb^{\NTheta}$. 

  \item \label{assump:vp_convexity} is $\lambda_w$-strongly convex in its
    first argument; 
    that is, there exists $\lambda_w > 0$
such that
    \begin{align*}
      \nabla_{\bfw}^2 \SAA\ObjFctn(\bfw, \bftheta)
      \succeq \lambda_w \bfI_{\NW} \succ 0,
    \end{align*}
    for all $(\bfw, \bftheta) \in \Rbb^{\NW} \times \Rbb^{\NTheta}$. 
	\end{enumerate}
\end{assumption}

The continuous differentiability of $\bfw_{\star}(\bftheta)$ is a consequence of the Implicit Function Theorem
\citep{krantz_implicit_2002} applied to the gradient operator
$\nabla_{\bfw}\SAA\ObjFctn: \Rbb^{\NW} \times \Rbb^{\NTheta} \to \Rbb^{\NW}$. 
This leads to a powerful relationship between the gradients of the full and reduced objective functions.

\begin{restatable}[Gradient of Reduced Objective Function $\Reduced{\SAA\ObjFctn}$]{mylem}{reducedgradientequality}\label{lem:reduced_gradient_equality}
  Under Assumptions~\ref{assump:vp_smoothness} and~\ref{assump:vp_convexity},
  the gradient of the reduced objective function $\Reduced{\SAA\ObjFctn}$ is
  equal to the gradient of the full objective function $\SAA\ObjFctn$,
  evaluated at the optimal linear weights; that is, 
	\begin{align*}
    \nabla\Reduced{\SAA\ObjFctn}(\bftheta) 
    = \left.
      \nabla_{\bftheta} {\SAA\ObjFctn}(\bfw, \bftheta)
    \right|_{\bfw = \WOpt(\bftheta)}.
	\end{align*}
\end{restatable}

A key consequence of Lemma~\ref{lem:reduced_gradient_equality} is that
first-order derivative-based optimization algorithms for $\bftheta$ need
not differentiate through the optimal linear parameters $\WOpt(\bftheta)$.
It suffices to evaluate gradients of $\SAA\ObjFctn$ with respect to $\bftheta$
at $\WOpt(\bftheta)$.
Appendix~\ref{app:vp_vs_gd_proofs} provides a comprehensive suite of VarPro theory, including proof that the reduced optimization problem becomes no worse, and often better, conditioned than the full joint problem.

\begin{remark}
  Unlike classic gradient descent theory, we need not assume strong convexity in
  $\bftheta$ or jointly in $(\bfw,\bftheta)$; only in the linear block $\bfw$.
  Additionally, the notation for the strong-convexity constant $\lambda_w$
  intentionally overlaps with the regularization strength for $\bfw$ in
  \eqref{eq:separable_nn_objective}.
  Indeed, in Assumption~\ref{assump:loss_functional}, we assume that the loss
  functional is convex in its first argument.
 In the case of  Tikhonov regularization, $\Rcal_w(\bfw) = \|\bfw\|^2$, the strong convexity constant is precisely $\lambda_w$. 
\end{remark}

\subsection{A Note on Notation}

The central goal of this work is to synthesize gradient boosting and VarPro for effective ensemble generation. 
In doing so, we must go between infinite-dimensional function space, where gradient boosting operates, and finite-dimensional parameter space, where VarPro applies. 
To delineate the two spaces as clearly as possible, we use calligraphic letters to denote sets/spaces (e.g.,
$\DataSpace$,  $\FctnClass$, $\InputSpace$, $\TargetSpace$) or
functionals (e.g., $\ObjFunctional$, $\LossFunctional$, $\RegFunctional$) and lowercase Latin letters
denote functions (e.g., $f$, $h$, $\ell$). 
We reserve bold lowercase letters (Latin or Greek) to denote finite-dimensional vectors
(e.g., $\bfx$, $\bfy$,  $\bfw$,   $\bftheta$), bold uppercase letters
denote matrices (e.g., $\bfH$), and uppercase Latin letters denote functions operating on parameter space (e.g., $\ObjFctn$, $\LossFctn$, $\RegFctn$).

With few exceptions, common notation conventions will be used throughout the paper. 
Specifically  lowercase (Latin or Greek) letters will denote 
 scalars (e.g., $c$, $r$,  $\alpha$, $\beta$, $\gamma$) and indexing integers (e.g., $i$, $j$, $k$, $m$, $n$).  
Whenever possible, $i$ will index data samples, $j$ and $m$ will index weak learners, and $k$ will index optimization iterations. 
In general, we will use $n$ with an informative subscript to denote the
dimension of a particular vector space. 
We denote the $n\times n$ identity matrix by $\bfI_n$.  
We use $N$ to enumerate the training data samples and indicate the use of an SAA formulation. 
The gradient operator  in both function and parameter space will be denoted as $\nabla$. 
Whenever possible, we use $f$ to denote an ensemble and $h$ to denote a weak learner. 
Special notation of an uppercase Latin $\FeaturizerMat$ will be reserved for a matrix-valued featurizer.

\section{VarPro Gradient Boosting (\VPBoost)}
\label{sec:vpboost}

Variable projection gradient boosting (\VPBoost) greedily constructs an ensemble of separable weak learners whose linear weights are optimal; i.e., a \VPBoost ensemble at stage $m$ is of the form 
	\begin{align*}
		f^{(m)}(\cdot) = \bfc_0 + \sum_{j=1}^{m-1}\FeaturizerMat_{\bftheta}^{(j)}(\cdot)\bfw_\star^{(j)}(\bftheta).
	\end{align*}
We will call any separable weak learner with optimal linear weights a VarPro weak learner. 

Building a \VPBoost ensemble consists of two nested algorithms, one at the  \emph{weak learner/parameter space} level (Section~\ref{sec:weak_learner_training}) and one at the \emph{ensemble/function space} level (Section~\ref{sec:ensemble_training}).  
To distinguish between the two levels, we will use subscripts to denote dependence on parameter space iteration and superscripts to denote dependence on the ensemble stage. 
For example, at the $k^{\rm th}$ training iteration, a separable weak learner with weights $(\bfw_k, \bftheta_k)$ will be denoted as $\FeaturizerMat_{\bftheta_k}(\cdot) \bfw_k$. 
At the $m^{\rm th}$ weak learner stage,  we will remove the iteration counters on $\bfw$ and $\bftheta$ and write a separable weak learner as $\caAm \bfw^{(m)}$. 
Because the optimal linear weights will depend explicitly on the current ensemble, $f^{(m)}$, we will always indicate the dependence on the weak learner stage; i.e., $\bfw_{\star}^{(m)}(\bftheta_k)$ at the $k^{\rm th}$ training iteration and $\caW$ at the ensemble level.  
For later convenience, we will denote the regularization parameter for the linear weights as $\caLambda$, indicating a dependence on weak learner stage.
The rationale for this notation will be made clear in Section~\ref{sec:ensemble_training}.

\subsection{Training Weak Learners with VarPro}
\label{sec:weak_learner_training}
At the weak learner level, \VPBoost adopts a second-order strategy championed by the decision-tree-based \XGBoost~\citep{chen_xgboost_2016}. 
The main idea is to approximate the loss functional about the current ensemble, $f^{(m)}$, with a quadratic model functional $\caQ: \Fcal \to \Rbb$ defined as
	\begin{align}\label{eq:quadratic_functional}
		\Lcal_N[f^{(m)} + h] \approx \caQ[h] \coloneqq \caL + \langle \caDL, h \rangle + \tfrac{1}{2} \langle h, \caDDL h \rangle. 
	\end{align}
To determine the $m^{\rm th}$ separable weak learner, $h^{(m)}(\cdot) = \FeaturizerMat_{\bftheta}^{(m)}(\cdot) \bfw^{(m)}$, we minimize the quadratic functional combined with Tikhonov regularization,\footnote{Decision-tree specific regularization (e.g., number of leaves) is proposed in \XGBoost.} i.e., $\Rcal[h] = \|h\|^2$. 
Specifically,  we solve
	\begin{align}\label{eq:quadratic_functional_def}
		\FeaturizerMat_{\bftheta}^{(m)}(\cdot) \bfw^{(m)} \in \argmin_{\caA \bfw\in \Fcal_m}  \caQ[\caA \bfw] + \frac12 \lambda \|\caA \bfw\|^2
	\end{align}
with regularization parameter $\lambda > 0$. 
An advantage of the separable model formulation is that the quadratic functional is equivalent to a quadratic \emph{function} in $\bfw$; that is, $\caQ[\caA \bfw] \equiv \caQmat(\bfw, \bftheta)$ where $\caQmat: \Rbb^{\NW}  \times \Rbb^{\NTheta} \to \Rbb$ is given by
	\begin{align}\label{eq:quadratic_function}
		\caQmat(\bfw, \bftheta) \equiv \caL + (\caG)^\top \bfw + \tfrac{1}{2} \bfw^\top \caH \bfw.
	\end{align}
Here,  the reduced gradient $\caG \in \Rbb^{\NW}$ and reduced Hessian $\caH \in \Rbb^{\NW\times \NW}$  are, respectively,
	\begin{subequations}\label{eq:grad_hess_shorthand}
	\begin{alignat}{3}
	\caG  &= \langle \FeaturizerMat_{\bftheta}, \nabla \SAA\LossFunctional[f^{(m)}] \rangle 
    &&= \frac{1}{N} \sum_{i=1}^N \FeaturizerMat_{\bftheta}(\bfx_i)^\top  \nabla_{\widehat{\bfy}} \lossfctn(f^{(m)}(\bfx_i), \bfy_i) \qquad \text{and} \label{eq:gbar_formula}\\ 
\caH  &= \langle \FeaturizerMat_{\bftheta}, \nabla^2 \SAA\LossFunctional[f^{(m)}]\FeaturizerMat_{\bftheta}\rangle &&= \frac{1}{N} \sum_{i=1}^N \FeaturizerMat_{\bftheta}(\bfx_i)^\top \nabla_{\widehat{\bfy}}^2 \lossfctn(f^{(m)}(\bfx_i), \bfy_i)\FeaturizerMat_{\bftheta}(\bfx_i). \label{eq:hbar_formula}
	\end{alignat}
	\end{subequations}

Following the VarPro formulation in \eqref{eq:separable_nn_objective}, we translate \eqref{eq:quadratic_functional_def} to a regularized optimization problem of \eqref{eq:quadratic_function} in parameter space 
	\begin{align}\label{eq:quadratic_function_regularized}
		(\bfw^{(m)}, \bftheta^{(m)}) \in \argmin_{\bfw\in \Rbb^{\NW}, \bftheta\in \Rbb^{\NTheta}} \caQmat(\bfw, \bftheta) + \tfrac{1}{2}\lambda_w^{(m)} \|\bfw\|^2 + \lambda_\theta R_{\theta}(\bftheta)
	\end{align}
for $\lambda_w^{(m)}, \lambda_\theta > 0$ and regularizer $R_{\theta}: \Rbb^{\NTheta} \to \Rbb$.

To train the weak learner weights, \VPBoost solves \eqref{eq:quadratic_function_regularized} using a gradient-based approach for $\bftheta$ and variable projection to eliminate $\bfw$,  
The cornerstone of \VPBoost is that \eqref{eq:quadratic_function_regularized} is now quadratic in $\bfw$, and thus admits a \emph{unique, closed-form solution} for the optimal linear weights, 
	\begin{align}\label{eq:w_opt_form}
		\caW = -\left(\caHLam\right)^{-1} \caG.
	\end{align} 
Notably, the closed-form solution holds for any convex loss function, $\ell$ (Assumption~\ref{assump:loss_functional}:~\ref{assump:ell_convexity}); the contribution of the loss is hidden in  the reduced derivatives \eqref{eq:grad_hess_shorthand}. 
Uniqueness is guaranteed because $\caHLam$ is a symmetric positive definite matrix due to the convexity of $\ell$  and the positivity of $\caLambda$. 
The resulting VarPro-reduced, non-quadratic optimization problem in $\bftheta$ is
	\begin{align}\label{eq:quadratic_solve_for_theta}
		\bftheta^{(m)} \in \argmin_{\bftheta\in \Rbb^{\NTheta}} \caQmat(\caW, \bftheta) + \tfrac{1}{2}\lambda_w^{(m)} \|\caW\|^2 + \lambda_\theta R_{\theta}(\bftheta).
	\end{align}
Algorithm~\ref{alg:weak_learner_training} presents a complete weak learner training
pipeline, comparing gradient descent (\texttt{GD}) and variable projection (\texttt{VP}).

\begin{algorithm}[t]
\caption{Weak Learner Training at Stage $m$: Gradient Descent vs. Variable Projection} \label{alg:weak_learner_training}
\begin{algorithmic}[1] \Statex {\bf Goal:} Minimize $\caQmat(\bfw, \bftheta) + \tfrac{1}{2}\lambda_w^{(m)} \|\bfw\|^2 + \lambda_\theta R_{\theta}(\bftheta)$
	\State {\bf Inputs:} 
		\Statex \begin{tabular}{clcl}
		$\bullet$ & $f^{(m)}: \Xcal \to \Rbb^{\NTarget}$ & : & current ensemble \\
		$\bullet$ & $\lambda_w^{(m)}, \lambda_\theta  > 0$ & : &  regularization parameters\\
		$\bullet$ & $\FeaturizerMat_{\bftheta}: \Xcal \times \Rbb^{\NTheta} \to \Rbb^{\NTarget \times \NW}$ & : & featurizer architecture\\
		$\bullet$ & $\bftheta_0\in \Rbb^{\NTheta}$ & : & initial nonlinear weights\\
		$\bullet$ & $\bfw_0\in \Rbb^{\NW}$ & : & initial linear weights (gradient descent only)
		\end{tabular}

	\State {\bf Output:} Trained weights of the separable weak learner $h^{(m)}(\cdot) = \FeaturizerMat_{\bftheta}^{(m)}(\cdot) \bfw^{(m)}$ 
	\Statex
	
\State Compute gradient and Hessian of $\Lcal_N$ at current ensemble, $\caDL$ and $\caDDL$

\For{$k=0,1,2,\dots$}
	\State Compute reduced gradient and Hessian for current featurizer \Comment{Equation~\eqref{eq:grad_hess_shorthand}}
		\begin{align*}
		\bfg_{\bftheta_k}^{(m)} = \langle \FeaturizerMat_{\bftheta_k}, \caDL \rangle \qquad \text{and} \qquad \bfH_{\bftheta_k}^{(m)} =  \langle \FeaturizerMat_{\bftheta_k}, \caDDL \FeaturizerMat_{\bftheta_k} \rangle
		\end{align*}
    \Statex
    \State Select step size/learning rate $\eta_k > 0$
    \State Update linear weights using gradient descent (\texttt{GD}) or variable projection (\texttt{VP}) \label{alg_line:update_linear}
    
     \Statex  \parbox[t]{0.48\linewidth}{
     \begin{tcolorbox}[boxrule=0pt, frame hidden, sharp corners, enhanced, borderline west={2pt}{0pt}{EmoryBlue}, colback=white, parbox=false, notitle,  fonttitle=\bfseries,  
	boxsep=2pt, left=2pt, right=0pt, top=2pt, bottom=2pt, 
	detach title,
overlay={
        \node[rotate=90, minimum width=1cm, anchor=south] at (frame.west) {\color{EmoryBlue}\texttt{GD}};
    }]
    \setcounter{mylineno}{0}
    \myline{7}{$\bfr_k = \bfg_{\bftheta_k}^{(m)} + \bfH_{\bftheta_k}^{(m)}\bfw_k$\vphantom{$\WOptTaylor(\bftheta_k) = -\left( \bfH_{\bftheta_k}^{(m)} + \caLambda \bfI_{\NW}\right)^{-1}  \bfg_{\bftheta_k}^{(m)}$}}\\
\myline{7}{$\bfw_{k+1} = \bfw_k - \eta_k (\bfr_k + \lambda_w^{(m)} \bfw_k)$}
    
        \end{tcolorbox}
    }\hfill\hfill\parbox[t]{0.54\linewidth}{\begin{tcolorbox}[boxrule=0pt, frame hidden, sharp corners, enhanced, borderline west={2pt}{0pt}{EmoryGold}, colback=white, parbox=false, notitle,  fonttitle=\bfseries,  
	boxsep=2pt, left=2pt, right=0pt, top=2pt, bottom=2pt, 
	detach title,
overlay={
        \node[rotate=90, minimum width=1cm, anchor=south] at (frame.west) {\color{EmoryGold}\texttt{VP}};
    }]
       \setcounter{mylineno}{0}
        \myline{7}{$\WOptTaylor(\bftheta_k) = -\left( \bfH_{\bftheta_k}^{(m)} + \caLambda \bfI_{\NW}\right)^{-1}  \bfg_{\bftheta_k}^{(m)}$}\\ \myline{7}{$\bfw_{k} \gets \WOptTaylor(\bftheta_k)$}
     \end{tcolorbox}
    }
    \State Update nonlinear weights using any first-order method, e.g., \Comment{\texttt{VP} benefits from Lemma~\ref{lem:reduced_gradient_equality}} \label{alg_line:update_nonlinear}
	\begin{align*}
	\bftheta_{k+1} = \bftheta_k - \eta_k \left(\nabla_{\bftheta} \caQmat(\bfw_{k}, \bftheta_k)+ \lambda_\theta \nabla R_{\theta}(\bftheta_k) \right) 
	\end{align*}
\EndFor
\end{algorithmic}
\end{algorithm}
 
We share a few comments on nomenclature, subtle notation implications, and natural extensions of Algorithm~\ref{alg:weak_learner_training}. 
Notationally, the residual $\bfr_k$ in Algorithm~\ref{alg:weak_learner_training}, Line~\ref{alg_line:update_linear}(i) for \texttt{GD} corresponds to the gradient of the quadratic function with respect to $\bfw$; i.e., $\bfr_k = \nabla_{\bfw} \caQmat(\bfw_{k}, \bftheta_k)$. 
An important subtlety in Algorithm~\ref{alg:weak_learner_training}, Line~\ref{alg_line:update_linear}(ii) is the index of $\bfw$. 
For \texttt{GD}, the current linear weights, $\bfw_k$, are updated and stored in $\bfw_{k+1}$. 
For \texttt{VP}, the optimal linear weights are tied to the current $\bftheta_k$, and hence stored in $\bfw_k$.  
It is the current linear weights, $\bfw_k$, that are used to update the nonlinear weights, $\bftheta_{k+1}$.  
Because of first-order optimality guarantees on $\caW$ (Lemma~\ref{lem:reduced_gradient_equality}), 
any updating strategy for $\bftheta$ based on $\nabla_{\bftheta} \caQmat(\bfw_{k}, \bftheta_k)$ is permissible without modifying the current algorithm template.   
Extensions to second-order methods for $\bftheta$ are possible, but may require differentiation through the optimal linear weights in the VarPro setting \citep{newman_train_2021}.

\subsection{\VPBoost: A Functional Trust-Region Perspective}
\label{sec:ensemble_training}

At the ensemble level, \VPBoost aggregates weak learners trained by Algorithm~\ref{alg:weak_learner_training} to minimize an objective functional \eqref{eq:saa_objective_functional}. 
Like standard gradient-based optimization, boosting methods must have a mechanism to control the contribution of each 
weak learner in order to guarantee convergence. 
A traditional approach is to rescale weak learners by a boosting rate, analogous to step sizes in finite-dimensional optimization. 
For \VPBoost, the separable structure enables the optimal linear weights to control the directionality and scale of each weak learner.  
This motivates a more natural interpretation of \VPBoost as  a trust-region algorithm in function space.

It is well-known that a Tikhonov-regularized quadratic problem can be reformulated through constrained optimization  \citep{rojas_trust-region_2002}. 
Thus, the closed-form solution for $\caW$ in \eqref{eq:w_opt_form} is equivalent to the solution of the constrained partial minimization problem
	\begin{align}\label{eq:trust_region_for_w}
		\min_{\bfw \in \Rbb^{\NW}} \caQmat(\bfw, \bftheta)
\quad \subjectto \quad \|\bfw\| \le\frac{\Delta(\lambda_w^{(m)})}{\|\caAmat\|}
	\end{align}
for some radius $\Delta(\lambda_w^{(m)}) > 0$ that is related to the regularization parameter. 
The radius prescribes a region over which one trusts the local quadratic function. 
Thus, one can interpret the Tikhonov-regularized quadratic function as a trust-region method~\citep{conn_trust_2000, nocedal_numerical_2006}.  
In the language of constrained optimization, $\caLambda$ is the dual variable. 
Furthermore, because $\caLambda > 0$ by assumption, it is inversely proportional\footnote{For $\lambda > 0$, the derivative $d\Delta(\lambda) / d\lambda$ is strictly negative and monotonically increasing. Thus, the two parameters are inversely proportional: a small trust-region radius implies a large regularization parameter and vice versa.} to $\Delta(\caLambda)$ and admits a one-to-one mapping 
	\begin{align}\label{eq:trust_region_radius_formula}
		\caDeltaLam = \|\caAmat\|\|\caW\| = \|\caAmat\|\left\|\left(\caH + \lambda_w^{(m)} \bfI_{\NW}\right)^{-1} \caG\right\|.
	\end{align} 
Due to submultiplicativity of norms (Lemma~\ref{lem:operator_norm}), a VarPro weak learner is automatically bounded by the trust-region radius, i.e.,  $\|\caAmat(\cdot)\caW\| \le \|\caAmat\|\|\caW\|  = \caDeltaLam$, and thus satisfies the constraint of the functional trust-region subproblem
	\begin{align}\label{eq:trust_region_for_weak_learner}
		\min_{\caAmat(\cdot) \bfw \in \Fcal_m} \caQ[\caAmat(\cdot)\bfw] \qquad \subjectto \quad \|\caAmat(\cdot) \bfw\| \le \caDeltaLam.
	\end{align}
For this reason, \VPBoost elegantly lends itself to a functional trust-region interpretation. 

In the spirit of the bi-level optimization VarPro formulation \eqref{eq:separable_nn_reduced_prob}, one can consider the quadratic problem in a finite-dimensional parameter space \eqref{eq:trust_region_for_w}  to be a reduced version of its infinite-dimensional functional counterpart \eqref{eq:trust_region_for_weak_learner}. 
However, the change of spaces necessitates a modification of the trust-region constraint. 
Specifically, in function space \eqref{eq:trust_region_for_weak_learner}, the entire weak learner is constrained, while in parameter space \eqref{eq:trust_region_for_w}, the constraint is applied directly to the linear weights. 
To connect the two spaces, we modify the constraint in \eqref{eq:trust_region_for_w} by re-scaling the radius based on the norm of the featurizer. 
The consequence of this ``submultiplicativity gap''  is that the VarPro weak learner $\cah$ will likely
satisfy the strict inequality $\|\cah\| < \caDeltaLam$ and be conservative.  
Hence, additional assumptions are needed to ensure that \VPBoost makes sufficient progress at the ensemble level (Section~\ref{sec:vpboost_assumptions}).

We now turn to our core algorithm \VPBoost, described as a functional trust-region method. 
Denote the $m^{\rm th}$ VarPro weak learner as $h_\star^{(m)} = \caAm \caW$, indicating the linear weights were optimized based on data from ensemble $f^{(m)}$ and the nonlinear weights were optimized by solving \eqref{eq:quadratic_solve_for_theta}.
The central actions of a basic trust-region algorithm are (i) accepting or rejecting a trial update, $h_\star^{(m)}$, and (ii) updating the trust-region radius or equivalently the regularization parameter. 
Both actions are determined by the ratio of the actual reduction to the predicted reduction
	\begin{align}\label{eq:reduction_ratio}
		\rho^{(m)} = \frac{\caL - \SAA\LossFunctional[f^{(m)} + h_\star^{(m)}]}{\caQ[0] - \caQ[h_\star^{(m)}]}.
	\end{align} 

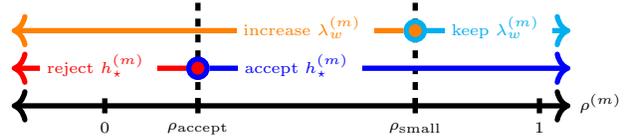
\begin{wrapfigure}{r}{0.5\linewidth}
\vspace{-8pt}
\centering

\begin{tikzpicture}[line width=2pt]
	\tiny
	
	\coordinate (left) at (-0.45\linewidth, 0);
	\coordinate (right) at (0.45\linewidth, 0);
	\coordinate (zero) at (-0.3\linewidth, 0);
	\coordinate (one) at (0.4\linewidth, 0);
	\coordinate (accept) at (-0.15\linewidth, 0);
	\coordinate (small) at (0.2\linewidth, 0);
	
	\draw[<->] (left) -- 
		node[at end, right] {$\rho^{(m)}$  \vphantom{$\rho_{(m)}$}}  
(right);
	
	\def\b{1.4}
	\draw[dashed] (accept) -- ($(accept) + (0, \b)$); 
	\draw[dashed] (small) -- ($(small) + (0, \b)$); 
	
	\def\b{0.1}
	\draw ($(zero) + (0, -\b)$) -- node[below, at start] (zero_node) {$0$} ($(zero) + (0, \b)$); 
	\draw ($(accept) + (0, -\b)$) -- node[below, at start] (accept_node) {$\rho_{\rm accept}$}  ($(accept) + (0, \b)$);  
	\draw ($(small) + (0, -\b)$) -- node[below, at start] (small_node) {$\rho_{\rm small}$}  ($(small) + (0, \b)$); 
	\draw ($(one) + (0, -\b)$) -- node[below, at start] (small_node) {$1$}  ($(one) + (0, \b)$); 
	
	\def\b{0.5}
	\def\r{0.025}
	\node[fill, circle, minimum width=\r cm, red] at ($(accept) + (0, \b)$) {};
	\node[draw, circle, minimum width=\r cm, blue]   (cutoff_accept) at ($(accept) + (0, \b)$) {};
	\draw[->, red]  (cutoff_accept) -- node[midway, fill=white] {reject $\cah$ \vphantom{$h_{(m)}^\star$}} ($(left) + (0, \b)$);  
	\draw[->, blue]  (cutoff_accept) -- node[near start, fill=white] {accept $\cah$  \vphantom{$h_{(m)}^\star$}} ($(right) + (0, \b)$); 
	
	\def\b{1}
	\node[fill, circle, minimum width=\r cm, orange] at ($(small) + (0, \b)$) {};
	\node[draw, circle, minimum width=\r cm, cyan]   (cutoff_small) at ($(small) + (0, \b)$) {};
	\draw[->, orange]  (cutoff_small) -- node[near start, fill=white] (increase) {increase $\caLambda$ \vphantom{$\lambda_{(m)}^w$}} ($(left) + (0, \b)$);  
	\draw[->, cyan]  (cutoff_small) -- node[midway, fill=white] {keep $\caLambda$  \vphantom{$\lambda_{(m)}^w$}} ($(right) + (0, \b)$); 
	
\end{tikzpicture}

\caption{Reduction ratio cutoffs and regularization parameter update in Algorithm~\ref{alg:trust_region_skeleton_vpboost}.}
\label{fig:reduction_ratio_range}

\vspace{-5pt}
\end{wrapfigure} Intuitively, $\rho^{(m)} \approx 1$ indicates the quadratic model represents the true functional well and one can safely accept the trial point. In practice, a trial point is accepted when $\rho^{(m)} > \rho_{\rm accept}$ where $\rho_{\rm accept} \in [0, 1)$ is a user-defined acceptance ratio.

A hallmark of trust-region algorithms is the automatic adaptation of the size of the region, controlled by the regularization parameter. 
Importantly, increasing $\caLambda$ leads to smaller, more conservative steps, which ensures that the algorithm proceeds cautiously if the model is a poor approximation. 
In practice, one increases the regularization parameter by a multiplicative factor $\gamma_{\rm up} > 1$ if the trial point is rejected (i.e., $\rho^{(m)} < \rho_{\rm accept}$) or if the model is overconfident (i.e., $\rho_{\rm accept} < \rho^{(m)} < \rho_{\rm small}$ for some safety cutoff ratio $\rho_{\rm small} < 1$).\footnote{The regularization parameter may also be decreased when the reduction ratio is
sufficiently large, which can improve convergence rates but is unnecessary for the
convergence guarantees; we omit this detail for simplicity.}   
Figure~\ref{fig:reduction_ratio_range} illustrates the algorithm behavior based on reduction ratio cutoffs. 
We formally describe \VPBoost in Algorithm~\ref{alg:trust_region_skeleton_vpboost}.

\begin{algorithm}[t]
\caption{\VPBoost: A Functional Trust-Region Method}\label{alg:trust_region_skeleton_vpboost}
\begin{algorithmic}[1] \Statex {\bf Goal:} Converge to a stationary point of $\SAA\LossFunctional: \Fcal \to \Rbb$
	\State {\bf Inputs:} 
		\Statex \begin{tabular}{clcl}
		$\bullet$ & $f^{(0)}: \Xcal \to \Rbb^{\NTarget}$ & : & initial ensemble \\
		$\bullet$ & $0 < \lambda_{\rm low} \le \lambda_w^{(0)}$ & : &  initial regularization parameter \\
		$\bullet$ & $0 \le \rho_{\rm accept} < \rho_{\rm small}  < 1$ & : & user-defined reduction ratio cutoffs\\
		$\bullet$ & $1 < \gamma_{\rm up}$ & : &  user-defined regularization parameter multiplicative update
		\end{tabular}

	\State {\bf Output:} VarPro ensemble $f^{(m)}: \Xcal \to \Rbb^{\NTarget}$
	\Statex
\For{$m=0, 1, 2, \dots$} 	 
		\State Train \texttt{VP} weak learner $\caAm \caW$ with optimal $\caW$ from \eqref{eq:w_opt_form} \label{alg_line:weak_learner_training} \Comment{Algorithm~\ref{alg:weak_learner_training}}
\State Compute the actual-versus-predicted reduction ratio $\rho^{(m)}$ from \eqref{eq:reduction_ratio} \label{alg_line:rho}
\Statex \dotfill
		\Statex \colorbox{lightgray!50}{\makebox[0.99\linewidth][l]{Accept or reject weak learner}} 
		\vspace{-18pt} 
		\Statex \dotfill
		\If{{\color{blue} $\rho^{(m)} > \rho_{\rm accept}$}} \Comment{\color{blue} ACCEPT}  \label{alg_line:vpboost_accept} 
			\State Accept trial update and define $f^{(m+1)} = f^{(m)} + \caAm \caW$ \label{alg_line:vpboost_update}
		\Else \Comment{\color{red} REJECT}
			\State Reject trial update and define $f^{(m+1)} = f^{(m)}$
		\EndIf
	
		 \Statex\hspace{\algorithmicindent}\dotfill
		\Statex\hspace{\algorithmicindent}\colorbox{lightgray!50}{\makebox[0.95\linewidth][l]{Update regularization parameter}} 
		\vspace{-18pt} 
		\Statex\hspace{\algorithmicindent}\dotfill
		\If{{\color{orange}$\rho^{(m)} < \rho_{\rm small}$}} \Comment{\color{orange}model too confident} \label{alg_line:model_confidence}
			\State Increase regularization parameter  $\lambda_w^{(m+1)}  = \gamma_{\rm up} \lambda_w^{(m)}$ 							\label{alg_line:reg_increase}

		\Else  \Comment{\color{cyan} model reasonable}
			\State Keep regularization parameter $\lambda_w^{(m+1)} = \lambda_w^{(m)}$ 
		\EndIf
	\EndFor
\end{algorithmic}
\end{algorithm}

\begin{remark}
Trust-region algorithms are traditionally formulated in terms of a trust-region radius updating strategy \citep[Algorithm 4.1]{nocedal_numerical_2006} \citep[Algorithm 6.4.1]{conn_trust_2000}. 
However, because the regularization parameter appears directly in the closed-form solution for $\caW$ in \eqref{eq:w_opt_form} and can be uniquely identified with a trust-region radius, it is natural for us to formulate \VPBoost in terms of $\caLambda$. 
\end{remark}

\section{\VPBoost Convergence Analysis}
\label{sec:convergence_analysis}

The ultimate goal of this section is to prove that \VPBoost (Algorithm~\ref{alg:trust_region_skeleton_vpboost}) converges to a stationary point of the loss functional; that is,  
	\begin{align}\label{eq:stationary_point}
		\lim_{m\to \infty} \|\caDL\| = 0.
	\end{align}
We proceed as follows. 
Section~\ref{sec:descent_direction} proves that VarPro weak learners are automatically
descent directions in function space.
Section~\ref{sec:vpboost_assumptions} identifies the \VPBoost assumptions needed overcome the submultiplicativity gap.
Section~\ref{sec:vpboost_lemmas} serves as a keystone for \VPBoost theory and presents three core lemmas to ensure \texttt{VP} weak learners achieve sufficient model reduction.  
Section~\ref{sec:global_convergence} proves global convergence of \VPBoost by combining the new lemmas with established trust-region arguments.

In reality, the presented theoretical results hold for any training paradigm of separable weak learners (e.g., \texttt{GD} or \texttt{VP} in Algorithm~\ref{alg:weak_learner_training}), provided the linear weights are optimized just before the ensemble update (e.g., prior to Algorithm~\ref{alg:trust_region_skeleton_vpboost}, Line~\ref{alg_line:vpboost_update}). 
The specific advantages of \VPBoost are realized at a practical level for weak learner training and at a conceptual level for ensemble construction.  
Practically,  \VPBoost employs VarPro to accelerate weak learner training  (Algorithm~\ref{alg:weak_learner_training}) and improve the conditioning of the optimization problem; see Appendix~\ref{app:vp_vs_gd_proofs}.
Conceptually, \VPBoost acts as a bridge between training weak learners in parameter space and constructing an ensemble in function space; see Figure~\ref{fig:vpboost_cartoon} for an illustration. 
Because VarPro optimizes the linear weights at every step during weak learner training, each update in parameter space simultaneously makes progress toward a stationary point in function space.  
\begin{remark}
    Our theoretical analysis strongly adheres to the proof structure in \cite[Section 4.3]{nocedal_numerical_2006}. 
    Our original contributions primarily stem from how we connect function and parameter space using \VPBoost. 
\end{remark}

\begin{figure}[!htp]
	\centering
	\begin{tikzpicture}
		
		\node (fspace) {\includegraphics[width=0.4\linewidth, trim={1cm 0cm 0cm 0cm}, clip]{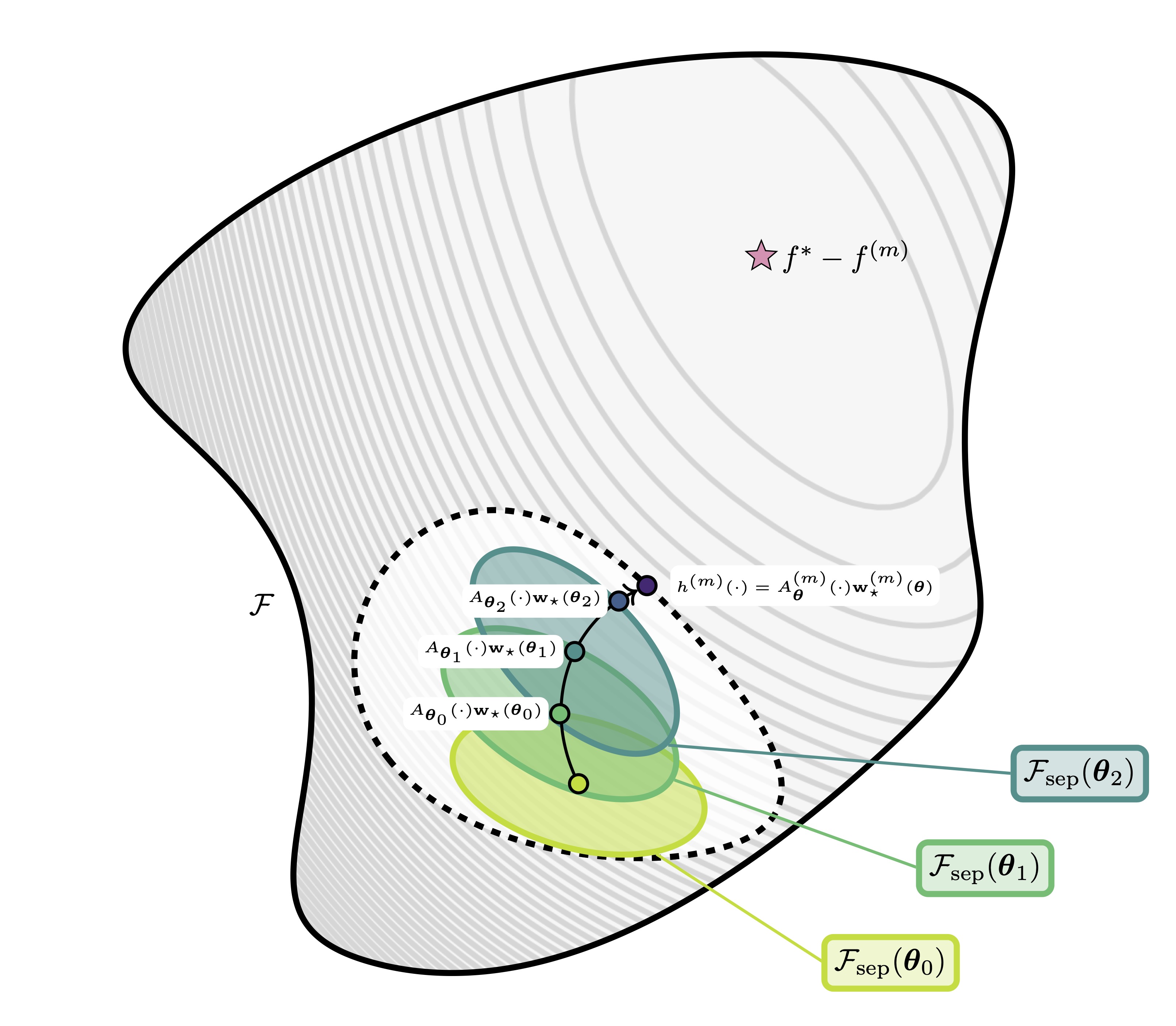}};
		\node[above=0.0cm of fspace.north, anchor=south, fill=lightgray!50, draw, rounded corners=3pt] {\scriptsize \begin{tabular}{c} Function Space $\Fcal$ \\ \emph{Ensemble Level} \end{tabular}};
		 \node[right=0.15\linewidth of fspace.east, anchor=west] (pspace) {\includegraphics[width=0.3\linewidth]{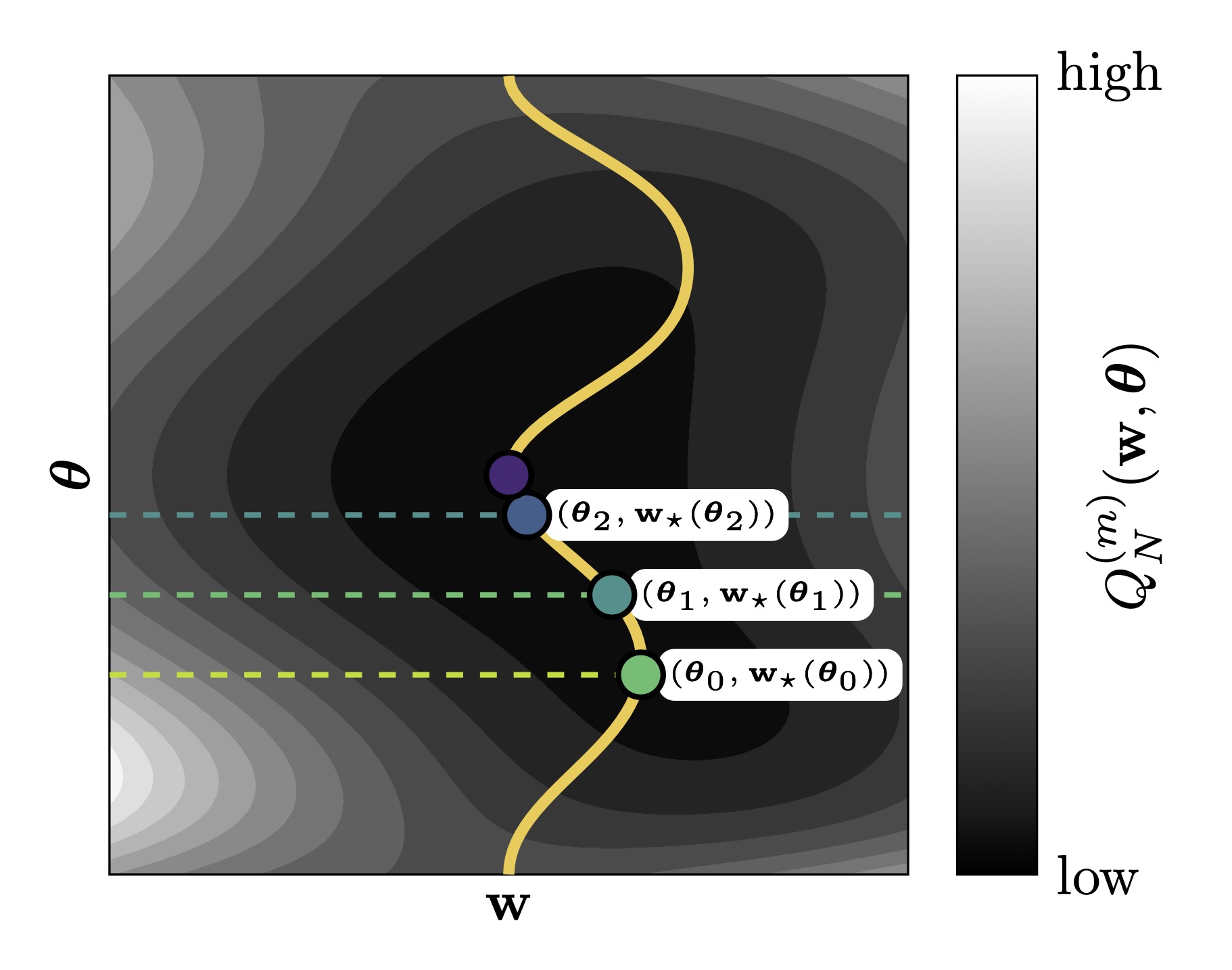}}; 
		 \node[above=0.0cm of pspace.north, anchor=south, fill=lightgray!50, draw, rounded corners=3pt] {\scriptsize \begin{tabular}{c} Parameter Space $\Rbb^{\NW} \times \Rbb^{\NTheta}$ \\ \emph{Weak Learner Level} \end{tabular}};
		 
		 \def\yy{0.5}
		 \def\xx{0.5}
		 \draw[->, line width=3pt] ($(fspace.east)+(-\xx,\yy)$) to[bend left=30] node[above, midway] {\scriptsize progress in $\Rbb^{\NTheta}$ with \texttt{VP}} ($(pspace.west)+(-\xx,\yy)$) ; 
		 \draw[->, line width=3pt] ($(pspace.west)+(-\xx,-\yy)$)  to[bend left=30]  node[below, midway] {\scriptsize simultaneous progress in $\Fcal$}  ($(fspace.east)+(-\xx,-\yy)$) ; 
	\end{tikzpicture}
	
	\caption{Illustration of the \VPBoost bridge between function space $\Fcal$ (left) and parameter space $\Rbb^{\NW} \times \Rbb^{\NTheta}$ (right). 
	The dashed region is the subset of $\Fcal$ containing separable weak learners with specific featurizer architecture, $A_{\bftheta}$.   
	 Each colorful ellipse represents a linear subspace of $\Fcal$ for fixed $\bftheta \in \Rbb^{\NTheta}$ defined as $\Fcal_{\rm sep}(\bftheta) = \{A_{\bftheta}(\cdot) \bfw \in \Fcal \mid \bfw\in \Rbb^{\NW}\}$. 
As VarPro traverses along $\bfw_{\star}(\bftheta)$ in parameter space and progresses toward a minimum (colorful circles along yellow curve), the corresponding linear subspaces $\Fcal_{\rm sep}(\bftheta)$ evolve simultaneously in function space and progress toward the target, $f^* - f^{(m)}$.}
	\label{fig:vpboost_cartoon}
\end{figure}

\subsection{\VPBoost Guarantees Descent in Function Space}
\label{sec:descent_direction}
The first step toward convergence is to show that VarPro weak learners are descent
directions in function space.
We prove that optimal linear weights alone are sufficient to guarantee descent, provided that that the reduced gradient~\eqref{eq:gbar_formula} is nonzero. 
This condition implies that the optimal linear weights, $\caW$, are nonzero following the closed-form solution \eqref{eq:w_opt_form}. 
\begin{lemma}[VarPro Guarantees Descent]\label{lem:vpboost_descent}
A VarPro weak learner, $\cah(\cdot) = \caAm\caW$, is guaranteed to be a descent direction at the current ensemble $f^{(m)}$ provided the reduced gradient is nonzero; that is, $\langle \cah,  \caDL \rangle < 0$ whenever
$\caG = \langle \caAmatm, \caDL \rangle   \not={\bf0}_{\NW \times 1}$. 
\end{lemma}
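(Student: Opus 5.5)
The plan is to move the functional inner product into parameter space using linearity of the separable model, and then use the closed-form optimal weights \eqref{eq:w_opt_form}. We write $\bfw = \caW$ and $A = \caAmatm$. The first step is to compute $\langle \cah, \caDL\rangle$ via the empirical inner product \eqref{eq:functional_gradient_inner_product}:
\begin{align*}
\langle \cah, \caDL\rangle = \frac1N\sum_{i=1}^N \nabla_{\widehat{\bfy}}\lossfctn(f^{(m)}(\bfx_i),\bfy_i)^\top A(\bfx_i)\bfw = \Bigl(\frac1N\sum_{i=1}^N A(\bfx_i)^\top \nabla_{\widehat{\bfy}}\lossfctn(f^{(m)}(\bfx_i),\bfy_i)\Bigr)^{\!\top}\bfw = (\caG)^\top \bfw .
\end{align*}
The last equality is exactly the definition \eqref{eq:gbar_formula} of the reduced gradient. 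The key point is that $\bfw$ is a constant vector, so it can be pulled out of the sum.

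Next we substitute \eqref{eq:w_opt_form}. This gives
\begin{align*}
\langle \cah, \caDL\rangle = -(\caG)^\top\bigl(\caHLam\bigr)^{-1}\caG .
\end{align*}
It then remains to show that the matrix $\bfM = \caHLam$ is symmetric positive definite. Symmetry follows from \eqref{eq:hbar_formula}, since each $\nabla^2_{\widehat{\bfy}}\lossfctn$ is symmetric. For positive semidefiniteness of $\caH$, take any $\bfv$. Then $\bfv^\top\caH\bfv = \frac1N\sum_i (A(\bfx_i)\bfv)^\top \nabla^2_{\widehat{\bfy}}\lossfctn\,(A(\bfx_i)\bfv)\ge 0$ by convexity \ref{assump:ell_convexity}. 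Adding $\caLambda\bfI_{\NW}$ with $\caLambda>0$ makes $\bfM\succeq \caLambda \bfI_{\NW}\succ 0$. Its inverse is therefore also symmetric positive definite, with smallest eigenvalue at least $1/(\|\caH\|+\caLambda)>0$.

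Finally, the quadratic form $\bfg^\top\bfM^{-1}\bfg$ is strictly positive whenever $\bfg=\caG\neq\mathbf 0$. This yields $\langle\cah,\caDL\rangle \le -\|\caG\|^2/(\|\caH\|+\caLambda) < 0$. The quantitative bound is worth recording for the later sufficient-decrease lemmas.

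The argument has no real obstacle. The only care needed is the first step: justifying that the functional inner product reduces exactly to $(\caG)^\top\bfw$. This relies on the Riesz representation \eqref{eq:functional_gradient_inner_product} under the empirical measure and on linearity of $h$ in $\bfw$. We should also confirm that $\caLambda>0$ holds at every stage. This is true because Algorithm~\ref{alg:trust_region_skeleton_vpboost} starts from $\lambda_w^{(0)}\ge\lambda_{\rm low}>0$ and only multiplies by $\gamma_{\rm up}>1$.
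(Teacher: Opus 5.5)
Your proof is correct and follows the same route as the paper. Like the paper, you rewrite the empirical inner product as $(\caG)^\top\caW$, substitute the closed form \eqref{eq:w_opt_form}, and use positive definiteness of $\caHLam$. Your explicit check of that positive definiteness via \ref{assump:ell_convexity}, and the quantitative bound $\langle\cah,\caDL\rangle \le -\|\caG\|^2/(\|\caH\|+\caLambda)$, are valid refinements that the paper leaves implicit.
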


\begin{proof}
The proof follows directly from the formulas for $\caW$ in~\eqref{eq:w_opt_form} and the reduced derivatives in~\eqref{eq:grad_hess_shorthand}.  
We compute the inner product \eqref{eq:functional_gradient_inner_product} and simplify via 
\begin{align*}
		\langle \caDL, \cah \rangle
&=\frac{1}{N}\sum_{i=1}^N\left(  \nabla_{\widehat{\bfy}} \ell(f^{(m)}(\bfx_i),\bfy_i)^\top \caAmat^{(m)}(\bfx_i)\caW\right)\\
			&={\color{gray}\underbrace{\color{black}\left(\frac{1}{N}\sum_{i=1}^N  \nabla_{\widehat{\bfy}} \lossfctn(f^{(m)}(\bfx_i),\bfy_i)^\top\caAmat^{(m)}(\bfx_i) \right) }_{(\caG)^\top}}\caW\\
			&=-(\caG)^\top (\caHLam)^{-1}\caG.
	\end{align*}
Because $\caHLam$ is symmetric positive definite, $\langle\caDL, \cah \rangle < 0$ whenever $\caG \not= \bf0$. 
By definition, $\cah$ is a descent direction. 
\end{proof}

 A strength of Lemma~\ref{lem:vpboost_descent} is that \texttt{VP} weak learners can yield descent directions in function space even with suboptimal featurizers.  
In practice, better optimization of $\bftheta$ leads to more informative weak learners and better ensemble approximations (Section~\ref{sec:numerical_experiments}).

 	\subsection{Assumptions for \VPBoost Trust-Region Convergence Analysis}
\label{sec:vpboost_assumptions}

The submultiplicativity gap of the \VPBoost trust-region formulation \eqref{eq:trust_region_for_w} necessitates additional assumptions to ensure \VPBoost makes sufficient progress. 
This leads to per-weak-learner conditions that connect derivative information in parameter and function spaces. 
We adopt the term \emph{subspace regularity} to encompass all \VPBoost assumptions. 
For the sake of exposition, we state the assumptions first and subsequently interpret and discuss the reasonableness of each condition. 

\begin{assumption}[\VPBoost Assumptions] \label{assump:vpboost}  
We require four mild assumptions per weak learner stage $m$ with VarPro weak learner  $h_\star^{(m)}(\cdot) = \caAm \caW$ and current ensemble $f^{(m)}$. 

\begin{enumerate}[label={\bf (VP\arabic*)}, leftmargin=*]
	\item  \label{vpboost_assump:gradient_alignment} {\bf Gradient Alignment:} We assume the featurizer is aligned sufficiently with the gradient; i.e., there exists a fixed constant $\kappa_{\rm align}  \in (0,1]$ such that, for all $m$, 
		\begin{align*}
			\kappa_{\rm align} \le \frac{\left\|\langle \caAm, \caDL \rangle \right\|}{\|\caAm\|\|\caDL\|}.
		\end{align*}
	The numerator is equivalent to the norm of the reduced gradient, $\caG $, in \eqref{eq:gbar_formula}. 
	\item \label{vpboost_assump:curvature_capture}  {\bf Curvature Capture:} 
	We assume that $h_{\star}^{(m)}$ captures sufficient curvature information along the gradient direction; i.e., there exists fixed constant $\kappa_{\rm curve} > 0$ such that, for all $m$, 
	\begin{align*}
		\kappa_{\rm curve} \langle \caDL, \caDDL \caDL \rangle 
			&\le  \langle h_\star^{(m)}, \caDDL h_\star^{(m)} \rangle. 
	\end{align*}
The right-hand side of the inequality is equivalent to a curvature condition on the reduced Hessian \eqref{eq:hbar_formula} in the direction of $\caW$; specifically,  
	\begin{align*}
	\langle h_\star^{(m)}, \caDDL h_\star^{(m)} \rangle = \caW^\top \caH \caW. 
	\end{align*}

	\item \label{vpboost_assump:bounded_featurizer} {\bf Bounded Featurizer:} We assume that the featurizers are uniformly bounded; i.e., there exists fixed constants $0 < \alpha_{\rm low} < \alpha_{\rm high}$ such that,  for all $m$,
		\begin{align*}
			\alpha_{\rm low} \le \|\caAmatm\|\le \alpha_{\rm high}.
		\end{align*}
		
		\item \label{vpboost_assump:sufficient_regularization} {\bf Sufficient Regularization:} We assume that the regularization parameter is uniformly bounded away from zero; i.e., there exists a fixed constant $\lambda_{\rm low} > 0$ such that, for all $m$,
		\begin{align*}
			0 < \lambda_{\rm low} \le \caLambda.
		\end{align*}
\end{enumerate}
\end{assumption}

We detail the motivation and restrictiveness of each \VPBoost assumption and contrast the conditions to those in recent gradient boosting literature. 

\paragraph{Gradient Alignment Condition~\ref{vpboost_assump:gradient_alignment}} 
One can interpret the gradient alignment ratio as the cosine similarity between the featurizer and $\caDL$.  
As such, the similarity metric is guaranteed to be a fraction. 
The additional requirement is that the similarity metric must be uniformly bounded away from zero, independent of the weak learner stage $m$. This non-degeneracy condition ensures the growing ensemble decreases $\caL$ by a quantity that does not vanish as $m\to \infty$, a stronger assumption than requiring a descent direction in Lemma~\ref{lem:vpboost_descent}.

We note that gradient alignment differs from the Minimal Cosine Angle (MCA) condition introduced in \citep[Definition 4.2]{lu_randomized_2020}.  
The MCA condition is an ensemble-level assumption regarding the density of weak learners in function space, which leads to a sufficiently expressive ensemble. 
In contrast, gradient alignment is a weak-learner-level assumption that quantifies how well $\caA$ aligns with a single direction, $\caDL$, at weak learner stage $m$.  

The ensemble-level MCA condition and the weak-learner-level gradient alignment condition each come with advantages and limitations. 
On the one hand, the MCA condition is less restrictive; it does not impose structure on weak learners and thereby encompasses a wider range of potential ensembles than \VPBoost. 
In comparison, gradient alignment requires weak learners to be separable with a particular, albeit mild, directionality criterion. 
On the other hand, gradient alignment is prescriptive and guides weak learner training on the fly to ensure sufficient algorithmic progress. 
In contrast, the MCA condition applies to a pre-determined dictionary of candidate learners and thus needs to embed some a priori knowledge of the loss functional landscape.

\paragraph{Curvature Capture Condition~\ref{vpboost_assump:curvature_capture}}  Because trust-region methods incorporate second-order information, we require that appropriate curvature information is captured by the reduced Hessian, $\caH$, from~\ref{eq:hbar_formula}.  
At each stage $m$, the VarPro weak learner must capture some positive proportion of the curvature information only in the  direction of $\caDL$. 
This assumption precludes weak learners from becoming arbitrarily close to lying in the kernel of $\caDDL$ and ensures that sufficient curvature information is retained in $\caH$. 

Because $\caA$ is already assumed to exhibit some non-degenerate alignment with $\caDL$ from \ref{vpboost_assump:gradient_alignment}, the \texttt{VP} weak learner, $\cah(\cdot) = \caA \caW$, will naturally retain some gradient information. 
It is thus reasonable to expect that curvature information along the gradient direction will be retained. 
The curvature capture condition also requires that the amount of retained Hessian information does not vanish as $m\to \infty$.

\paragraph{Bounded Featurizer Condition~\ref{vpboost_assump:bounded_featurizer}}
These bounds on $\|\caA\|$ serve two purposes. 
First, combined with the uniform-boundedness of $\caDDL$ (Assumption~\ref{assump:loss_functional}:~\ref{assump:ell_bounded_hessian}), the upper bound ensures that the reduced Hessian is also uniformly bounded by
	\begin{align}\label{eq:reduced_hessian_bound}
		\|\caH\| \le \|\caAmatm\|^2 \|\caDDL\| =\alpha_{\rm high}^2 \beta.
	\end{align}
    Hence, this condition can be viewed as an extension of the standard uniform Hessian bound trust region theory typically assumes.
The second purpose addresses the submultiplicativity gap in the trust-region subproblem for $\bfw$ \eqref{eq:trust_region_for_w}. 
The boundedness of $\|\caA\|$ ensures the size of the radius $\caDeltaLam / \|\caAmatm\|$ is ultimately controlled by $\caLambda$ and not by the scale of the featurizer.

The bounds  $\alpha_{\rm low}$ and $\alpha_{\rm high}$ are not restrictive. 
In fact, for ensembles of finitely many weak learners, the bounds can be derived a posteriori. 
Often in machine learning, the bounds are also satisfied automatically by the class of featurizers (e.g., bounded activation functions in NNs). 
Effectively, the assumption only becomes necessary for potentially unbounded featurizers in the asymptotic case of infinitely many weak learners.

\paragraph{Sufficient Regularization Condition~\ref{vpboost_assump:sufficient_regularization}} 
Bounding the regularization parameter away from zero uniformly guarantees uniqueness of $\caW$ in \eqref{eq:w_opt_form}.  
Because \VPBoost only permits increases of $\caLambda$ (Algorithm~\ref{alg:trust_region_skeleton_vpboost}, Line~\ref{alg_line:reg_increase}), this assumption is automatically satisfied algorithmically.

\begin{remark}
The optimal linear weights reveal an intrinsic relationship between regularization parameter and featurizer scale.
Omitting the dependence on $m$, let $\bfw_\star(\bftheta; \lambda_w)$ solve the trust-region subproblem \eqref{eq:trust_region_for_w} with featurizer $\caAmat$ and $\lambda_w > 0$. 
Suppose we rescale $\caAmat$ by a nonzero scalar, i.e., $\caAmat^\alpha = \alpha \caAmat$ for $\alpha \not=0$. 
Then, the solution to the corresponding trust-region subproblem is
	\begin{align*}
        \begin{split}
		\bfw_\star^{\alpha}(\bftheta; \lambda_w)
			= -\left(\alpha^2 \caH + \lambda_w  \bfI_{\NW}\right)^{-1}(\alpha \caG)
			=-\tfrac{1}{\alpha}\left(\caH + \tfrac{\lambda_w}{\alpha^2} \bfI_{\NW} \right)^{-1}\caG
			=\tfrac{1}{\alpha}\bfw_\star(\bftheta; \tfrac{\lambda_w}{\alpha^2}). 
        \end{split}
	\end{align*} 
Effectively, $\bfw_\star^{\alpha}(\bftheta; \lambda_w)$ is equal to a rescaled solution of the original problem with a modified regularization parameter. 
The bounds on $\|\caAmat\|$ in \ref{vpboost_assump:bounded_featurizer} prevent the rescaling from becoming arbitrarily small or large and enable $\lambda_w$ to control the magnitude of the optimal linear weights. 
\end{remark}

 	\subsection{\VPBoost Model Reduction and Algorithmic Bounds}
\label{sec:vpboost_lemmas}

Each \VPBoost iteration trains a weak learner (Algorithm~\ref{alg:trust_region_skeleton_vpboost}, Line~\ref{alg_line:weak_learner_training})  to decrease the local quadratic functional in \eqref{eq:trust_region_for_weak_learner}. 
We thereby need to relate the model reduction obtained from a VarPro weak learner, $\caQ[0] - \caQ[\cah]$, to the norm of the gradient of the loss functional, $\|\caDL\|$, in order to show \VPBoost converges to a stationary point. 
Lemma~\ref{lem:vp_sufficient_model_reduction} and Lemma~\ref{lem:varpro_reduction_lower_bound} formulate this relationship by connecting VarPro weak learners to a specific function, the Cauchy point $h_C^{(m)}$, that has a known relationship with $\|\caDL\|$. 
We additionally prove an algorithmic-based result in Lemma~\ref{lem:varpro_regularization_parameter_upper_bound} to ensure Algorithm~\ref{alg:trust_region_skeleton_vpboost} converges to a stationary point in the limit when $m \to \infty$.

We begin by defining the Cauchy point as a nonnegative multiple of the negative gradient that satisfies the trust-region constraint  $\|h_C^{(m)}\|\le \caDeltaLam$ in \eqref{eq:trust_region_for_weak_learner}. 
\begin{definition}[Cauchy Point]\label{def:cauchy_point}

The Cauchy point $h_C^{(m)}$ is the function parallel to the negative gradient that minimizes the quadratic functional \eqref{eq:quadratic_functional}; that is, 
	\begin{align*}
		h_C^{(m)} = -\gamma_C^{(m)} \caDL
	\end{align*}
where $\gamma_C^{(m)} > 0$ is defined as
	\begin{align*}
		\gamma_C^{(m)} =\min\left\{
						\dfrac{\caDeltaLam}{\|\caDL\|} \;, \; 
						\dfrac{\|\caDL\|^2}{\langle \caDL, \caDDL\caDL\rangle}
					\right\}
	\end{align*}
and defaults to the first scalar if $\langle \caDL, \caDDL\caDL\rangle \le 0$. 

\end{definition}

Armed with the Cauchy point and \VPBoost assumptions (Assumption~\ref{assump:vpboost}), we present the central lemma of \VPBoost convergence analysis.

\begin{lemma}[VarPro vs. Cauchy Model Reduction]\label{lem:vp_sufficient_model_reduction}

Suppose $\cah(\cdot) = \caAm \caW$ is a VarPro weak learner that satisfies Assumption~\ref{assump:vpboost}. 
Then, $\cah$ achieves sufficient model reduction relative to the Cauchy point, $h_C^{(m)}$; that is, 
	\begin{align*}c_2\left(\caQ[0] - \caQ[h_C^{(m)}]\right) \le \caQ[0] - \caQ[ \caAm \caW].
	\end{align*}
for some fixed fraction $c_2 \in (0, 1]$ that holds for all $m$. 
\end{lemma}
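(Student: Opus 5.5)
The plan is to compute both model reductions explicitly, bound the VarPro reduction from below and the Cauchy reduction from above, and compare the two bounds.

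\emph{VarPro reduction.} Write $\bfg=\caG$, $\bfH=\caH$, $\lambda=\caLambda$ and $\bfM=\bfH+\lambda\bfI_{\NW}$. Substituting $\caW=-\bfM^{-1}\bfg$ into \eqref{eq:quadratic_function} gives
\begin{align*}
\caQ[0]-\caQ[\cah] = \bfg^\top\bfM^{-1}\bfg - \tfrac12 \bfg^\top\bfM^{-1}\bfH\bfM^{-1}\bfg = \tfrac12\bfg^\top\bfM^{-1}\bfg + \tfrac{\lambda}{2}\|\bfM^{-1}\bfg\|^2 \ge \tfrac12 \bfg^\top\bfM^{-1}\bfg .
\end{align*}
By \eqref{eq:reduced_hessian_bound}, $\|\bfM\|\le \alpha_{\rm high}^2\beta+\lambda$. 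Hence $\bfg^\top\bfM^{-1}\bfg\ge \|\bfg\|^2/(\alpha_{\rm high}^2\beta+\lambda)$. Combining \ref{vpboost_assump:gradient_alignment} and \ref{vpboost_assump:bounded_featurizer} gives $\|\bfg\|\ge \kappa_{\rm align}\alpha_{\rm low}\|\caDL\|$. This yields the lower bound
\begin{align*}
\caQ[0]-\caQ[\cah]\ge \frac{\kappa_{\rm align}^2\alpha_{\rm low}^2\|\caDL\|^2}{2(\alpha_{\rm high}^2\beta+\lambda)}.
\end{align*}

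\emph{Cauchy reduction.} Set $G=\|\caDL\|$ and $c=\langle\caDL,\caDDL\caDL\rangle\ge0$. Then $\caQ[0]-\caQ[-\gamma\caDL]=\gamma G^2-\tfrac12\gamma^2 c$. For every $\gamma\ge0$ this is at most $\gamma G^2$. When $c>0$ it is also at most $G^4/(2c)$.

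\emph{Matching the two bounds.} The main obstacle is that the VarPro lower bound degrades like $1/\lambda$ for large $\lambda$, so it must be matched against the trust-region branch $\gamma_C=\caDeltaLam/G$ of the Cauchy point. In that branch the Cauchy reduction is at most $\caDeltaLam\, G$. By \eqref{eq:trust_region_radius_formula}, $\caDeltaLam\le \alpha_{\rm high}\|\bfM^{-1}\bfg\|\le \alpha_{\rm high}\|\bfg\|/\lambda$. Moreover, $\|\bfg\|\le\alpha_{\rm high}G$ by Cauchy--Schwarz and Lemma~\ref{lem:operator_norm}. So $\caDeltaLam\, G\le \alpha_{\rm high}^2G^2/\lambda$. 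On the other side, $\alpha_{\rm high}^2\beta+\lambda\le(\alpha_{\rm high}^2\beta/\lambda_{\rm low}+1)\lambda$ by \ref{vpboost_assump:sufficient_regularization}. The VarPro lower bound is therefore at least a fixed constant times $G^2/\lambda$. The ratio of the two is a constant independent of $m$, namely $\kappa_{\rm align}^2\alpha_{\rm low}^2/\bigl(2\alpha_{\rm high}^2(\alpha_{\rm high}^2\beta/\lambda_{\rm low}+1)\bigr)$.

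In the curvature branch the Cauchy reduction is at most $G^4/(2c)$, and a lower bound on $c$ is needed. I would use the same-direction comparison from \ref{vpboost_assump:curvature_capture}. It gives $c\le \kappa_{\rm curve}^{-1}\,\caW^\top\bfH\caW$, which is a wrong-direction inequality for this purpose. So instead I would bound the VarPro reduction by the term $\tfrac12\bfg^\top\bfM^{-1}\bfg$ and use $c\le\beta G^2$. That gives $G^4/(2c)\ge G^2/(2\beta)$, again the wrong direction. The right route is to use only that the Cauchy reduction never exceeds $\min\{\caDeltaLam G,\, G^4/(2c)\}\le \caDeltaLam G$ whenever $\gamma_C\le\caDeltaLam/G$, which holds in both branches. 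The trust-region-branch estimate above then applies universally. \ref{vpboost_assump:curvature_capture} serves only as a backup, for handling $c>0$ if the radius bound turns out to be loose.

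Taking $c_2$ as the minimum of the constant above and $1$ gives a fixed fraction valid for all $m$. I expect the delicate step to be verifying that $\caDeltaLam$ indeed scales like $1/\lambda$ uniformly, which relies on $\bfH\succeq0$ so that $\|\bfM^{-1}\|\le1/\lambda$.
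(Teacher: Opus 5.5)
Your proof is correct, but it is organized differently from the paper's. The paper splits on which branch of $\gamma_C^{(m)}$ is active. In the boundary branch it uses $r_C^{(m)}\le\caDeltaLam\|\caDL\|$ and $r_\star^{(m)}\ge\caLambda\|\caW\|^2$. The identity $\caDeltaLam=\|\caAmatm\|\|\caW\|$ then cancels one factor of $\|\caW\|$. After $\|\caW\|\ge\|\caG\|/(\alpha_{\rm high}^2\beta+\caLambda)$, the factor $\|\caAmatm\|$ is absorbed exactly into the gradient-alignment ratio, giving $\kappa_{\rm align}\lambda_{\rm low}/(\alpha_{\rm high}^2\beta+\lambda_{\rm low})$. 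In the curvature branch the paper applies \ref{vpboost_assump:curvature_capture} on the VarPro side, $r_\star^{(m)}\ge\tfrac12\caW^\top\caH\caW\ge\tfrac12\kappa_{\rm curve}\langle\caDL,\caDDL\caDL\rangle$. It does not use it on the Cauchy side, where, as you noticed, it points the wrong way. It then bounds $\gamma_C^{(m)}\le\alpha_{\rm high}^2/\lambda_{\rm low}$ and takes $c_2$ as the minimum of the two constants.

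Your key observation is that $r_C^{(m)}\le\gamma_C^{(m)}\|\caDL\|^2\le\caDeltaLam\|\caDL\|$ in both branches. This holds because $\langle\caDL,\caDDL\caDL\rangle\ge0$ and $\gamma_C^{(m)}\le\caDeltaLam/\|\caDL\|$ by definition. It collapses the argument to a single case and shows that \ref{vpboost_assump:curvature_capture} is not needed for this lemma. In fact, the paper's own Case~1 computation carries over verbatim to Case~2 for the same reason.

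The price of your route is a smaller constant. You bound $\|\caG\|$ separately from above ($\le\alpha_{\rm high}\|\caDL\|$) and from below ($\ge\kappa_{\rm align}\alpha_{\rm low}\|\caDL\|$) rather than exploiting $\caDeltaLam=\|\caAmatm\|\|\caW\|$. As a result, your constant equals the paper's Case~1 constant times $\kappa_{\rm align}\alpha_{\rm low}^2/(2\alpha_{\rm high}^2)$. Your expression for the VarPro reduction equals $\tfrac12\caW^\top\caH\caW+\caLambda\|\caW\|^2$. So you can keep your single-case structure and still recover the sharper constant, with no dependence on $\alpha_{\rm low}$. To do so, lower-bound by $\caLambda\|\caW\|^2$ and compare against $\caDeltaLam\|\caDL\|=\|\caAmatm\|\|\caW\|\|\caDL\|$.

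In a final write-up, delete the exploratory sentences about the curvature branch. They are dead ends that your actual argument never uses.
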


\begin{proof}
We can view $c_2$ as a lower bound of the relative model reduction obtained from VarPro weak learner, $r_\star^{(m)}$, to  the reduction from the Cauchy point, $r_C^{(m)}$, respectively defined  as 
	\begin{subequations}
	\begin{align}
		r_\star^{(m)} &\coloneqq \caQ[0] - \caQ[\caAm \caW] \label{eq:vapro_model_reduction}\quad \text{and}\\
		r_C^{(m)} &\coloneqq \caQ[0] - \caQ[h_C^{(m)}]. \label{eq:cauchy_model_reduction_main}
	\end{align} 
	\end{subequations}
The goal of our proof is to construct a lower bound of the ratio $c_2 \le r_\star^{(m)} / r_C^{(m)}$ such that $c_2$ is independent of $m$ and $c_2\in (0,1]$. 

We first derive exact formulas for each model reduction. 
For Cauchy model reduction \eqref{eq:cauchy_model_reduction_main}, we evaluate the quadratic functional at the Cauchy point $h_C^{(m)} = -\gamma_C^{(m)} \caDL$ and obtain
	\begin{align}\label{eq:reduction_cauchy}
r_C^{(m)}= \gamma_C^{(m)} \langle \caDL, \caDL \rangle  - \tfrac{1}{2}(\gamma_C^{(m)})^2  \langle \caDL, \caDDL(\caDL) \rangle.
\end{align}
We compute the model reduction for the separable weak learner using the trust region derivation and
recalling the formulas from \eqref{eq:grad_hess_shorthand}; that is,  
	\begin{align*}
			 r_\star^{(m)} &=-(\caG)^\top \caW - \tfrac{1}{2}\caW^\top \caH \caW\\
		 	&= {\color{gray} \underbrace{\color{black}\caW^\top \left(\caHLam\right)}_{-(\caG)^\top}}\caW - \tfrac{1}{2}\caW^\top \caH \caW
	\end{align*}
where the final equality substituted  $\caG = -\left(\caHLam\right)\caW$ using the closed-form solution in \eqref{eq:w_opt_form}. 
Combining like terms, the separable model reduction formula \eqref{eq:vapro_model_reduction} becomes
	\begin{align}\label{eq:reduction_separable}
r_\star^{(m)} = \caW^\top\left(\tfrac{1}{2} \caH  + \lambda_w^{(m)} \bfI_{\NW}\right)\caW.
\end{align}
We bound the ratio of the reductions by considering two cases based on the Cauchy point step size.

\bigskip

\noindent  \dotfill {\bfseries Case 1: Full Negative Gradient Step} \dotfill

\medskip

\noindent Following Definition~\ref{def:cauchy_point}, we first consider the case when $\gamma_C^{(m)} = \caDeltaLam / \|\caDL\|$, which corresponds to following the negative gradient to the trust region boundary.   
	Effectively, the Cauchy point disregards curvature information. 
	Because the Cauchy point considers only first-order information, we will determine model reduction bounds by ignoring second-order contributions.  
	As a result, we will derive a lower bound that relies on our first-order assumption, the gradient alignment condition (Assumption~\ref{vpboost_assump:gradient_alignment}). 
	
	For the Cauchy reduction \eqref{eq:reduction_cauchy}, we construct an upper
bound by omitting the $(\gamma_C^{(m)})^2$ term: \begin{align}\label{eq:cauchy_model_reduction_upper_bound_case1}
			r_C^{(m)} \le  \gamma_C^{(m)} \|\caDL\|^2 = \caDeltaLam  \|\caDL\|.
		\end{align}
	For the VarPro reduction \eqref{eq:reduction_separable}, we lower bound by
ignoring the curvature contribution of $\caH$:
		\begin{align}\label{eq:varpro_model_reduction_lower_bound_case1}
			r_\star^{(m)} \ge \lambda_w^{(m)} \|\caW\|^2. 
		\end{align} 
	Both bounds utilized the positive semidefiniteness of $\caDDL$ (Assumption~\ref{assump:ell_convexity}).  
	
	We now show that the ratio of these model reduction bounds \eqref{eq:cauchy_model_reduction_upper_bound_case1} and \eqref{eq:varpro_model_reduction_lower_bound_case1} is itself bounded below by a fixed positive fraction. 
	To connect the two bounds explicitly, recall from the reduced trust region subproblem \eqref{eq:trust_region_for_w} that $\caDeltaLam = \|\caAm\| \|\caWLam\|$. 
	Then, the ratio becomes
		\begin{align}\label{eq:reduction_ratio_case1}
			\frac{r_\star^{(m)}}{r_C^{(m)}} 
				&\ge \frac{ \lambda_w^{(m)} \|\caW\|^2 }{\caDeltaLam  \|\caDL\|}
				= \frac{ \lambda_w^{(m)} \|\caW\|}{\|\caAm\|\|\caDL\|}.
		\end{align}
	Using the analytic formula for $\caW$ in \eqref{eq:w_opt_form}, the inner product definition of the reduced gradient, $\caG$, in \eqref{eq:gbar_formula}, and the upper bound of the reduced Hessian in \eqref{eq:reduced_hessian_bound}, a lower bound of the norm of the optimal linear weights is 
		\begin{align}\label{eq:w_opt_lower_bound_case1}
		 \|\caW\| 
		 	= \left\|\left(\caHLam\right)^{-1} \caG\right\| 
		 	\ge \frac{\left\|\langle \caAm, \caDL \rangle \right\|}{\alpha_{\rm high}^2\beta +  \lambda_w^{(m)}}. 
		\end{align}
	Substituting \eqref{eq:w_opt_lower_bound_case1} into \eqref{eq:reduction_ratio_case1}, the lower bound of the reduction ratio becomes
		\begin{align}\label{eq:ratio_lower_bound_part1_case1}
			\frac{r_\star^{(m)}}{r_C^{(m)}}  \ge \frac{ \lambda_w^{(m)}}{\alpha_{\rm high}^2\beta  +  \lambda_w^{(m)}} \cdot  \frac{\left\|\langle \caAm, \caDL \rangle \right\|}{\|\caAm\|\|\caDL\|}. 
		\end{align}
By the gradient alignment condition (Assumption~\ref{vpboost_assump:gradient_alignment}), the second fraction in \eqref{eq:ratio_lower_bound_part1_case1} is uniformly bounded below by $\kappa_{\rm align} \in (0, 1]$. 
	Thus, the lower bound becomes
		\begin{align}\label{eq:c2_tilde_part1}
			\frac{r_\star^{(m)}}{r_C^{(m)}}  \ge \frac{\lambda_w^{(m)}}{\alpha_{\rm high}^2 \beta + \lambda_w^{(m)}}  \cdot \kappa_{\rm align}. 
		\end{align}
	The first fraction on the right-hand side of \eqref{eq:c2_tilde_part1} is a positive, strictly increasing fraction as $\lambda^{(m)} \to \infty$ for $\lambda_w^{(m)} > 0$. 
	Thus, the smallest feasible regularization parameter will serve as a lower bound.
	By  Assumption~\ref{vpboost_assump:sufficient_regularization}, the regularization parameter satisfies $0 < \lambda_{\rm low} \le \lambda_w^{(m)}$. 
	Thus, the reduction ratio is bounded below by 
	\begin{align}\label{eq:c2_tilde}
		\frac{r_\star^{(m)}}{r_C^{(m)}}  \ge\frac{\lambda_{\rm low}}{\alpha_{\rm high}^2 \beta + \lambda_{\rm low}} \cdot \kappa_{\rm align}.
	\end{align}
	Consequently, we have found a positive constant, independent of $m$, that
lower bounds the relative model reduction, and since
$\alpha_{\rm high}^2\beta > 0$ and $\kappa_{\rm align} \in (0,1]$,  the lower bound is also a fraction.

\bigskip

\noindent \dotfill {\bfseries Case 2: Partial Negative Gradient Step} \dotfill

\medskip

\noindent In the presence of significant curvature, the full negative gradient step to the trust region boundary may be suboptimal.   
	Because the Cauchy step is informed by curvature information in this case, we will bound the model reductions while retaining second-order information.    
	As a result, we will derive a lower bound that relies on our second-order assumption, the curvature capture condition (Assumption~\ref{vpboost_assump:curvature_capture}). 
	
	Following Definition~\ref{def:cauchy_point}, the Cauchy step size is
		\begin{align}\label{eq:cauchy_step_size_small}
		\gamma_C^{(m)} = \frac{\|\caDL\|^2}{\langle \caDL, \caDDL(\caDL) \rangle}.
		\end{align}
	For this choice of $\gamma_C^{(m)}$, the Cauchy model reduction in \eqref{eq:reduction_cauchy} simplifies to
		\begin{align}\label{eq:cauchy_model_reduction_upper_bound_case2}
			r_C^{(m)} = \tfrac{1}{2}\gamma_C^{(m)} \|\caDL\|^2.
		\end{align}
	Note that this is the exact reduction and we will not construct an upper bound in this case. 
	
	We now choose the lower bound of the VarPro model reduction in \eqref{eq:reduction_separable} that incorporates curvature information from $\caH$, namely
		\begin{align}\label{eq:varpro_model_reduction_upper_bound_case2}
			r_\star^{(m)} \ge \tfrac{1}{2}\caW^\top \caH \caW.
		\end{align}
	As before, we seek a fixed, positive lower bound of the ratio of \eqref{eq:cauchy_model_reduction_upper_bound_case2} and \eqref{eq:varpro_model_reduction_upper_bound_case2}, derived as 
		\begin{align}\label{eq:ratio_model_reduction_case2}
            \begin{split}
			\frac{r_\star^{(m)}}{r_C^{(m)}} 
				&\ge \frac{\tfrac{1}{2}\caW^\top \caH \caW}{\tfrac{1}{2}\gamma_C^{(m)} \|\caDL\|^2}\\
				&\ge  \frac{\kappa_{\rm curve}\langle \caDL, \caDDL(\caDL) \rangle}{\gamma_C^{(m)} \|\caDL\|^2}\\
				&= \frac{\kappa_{\rm curve}}{(\gamma_C^{(m)})^2}
            \end{split}
		\end{align}
	where the second inequality follows from the curvature capture requirement (Assumption~\ref{vpboost_assump:curvature_capture}) and the last equality is a simplification using the Cauchy step size in \eqref{eq:cauchy_step_size_small}. 
		
	The final step is to obtain an $m$-independent lower bound of the above inequality.  
	We approach this by finding a uniform upper bound of the denominator, $(\gamma_C^{(m)})^2$. 
	Following Definition~\ref{def:cauchy_point}, the step size $\gamma_C^{(m)}$ in \eqref{eq:cauchy_step_size_small} must be less than or equal the largest allowed step size,
	\begin{align}\label{eq:gamma_small_upper_bound}
		\gamma_C^{(m)} \le \frac{\caDeltaLam}{\|\caDL\|}.
	\end{align}
	Using \eqref{eq:trust_region_radius_formula}, we can express trust-region radius in terms of the separable learner components and obtain an upper bound using the closed-form solution for $\caW$ in \eqref{eq:w_opt_form} via
		\begin{align}\label{eq:delta_upper_bound}
            \begin{split}
			\caDeltaLam 
&= \|\caAm\| \left\|\left(\caHLam\right)^{-1} \caG\right\|\\
				&\le \frac{\|\caAm\|  \|\langle \caAm, \caDL\rangle \|}{\lambda_w^{(m)}}\\
				&\le \frac{\alpha_{\rm high}^2 \|\caDL\|}{\lambda_{\rm low}}.
            \end{split}
		\end{align}
	where the last step follows from the  sufficient regularization and bounded featurizer conditions (Assumption~\ref{vpboost_assump:sufficient_regularization}~\ref{vpboost_assump:bounded_featurizer}).
	Combining \eqref{eq:delta_upper_bound} with \eqref{eq:gamma_small_upper_bound}, we obtain the upper bound for the Cauchy step size
		\begin{align}\label{eq:cauchy_step_size_upper_bound}
			\gamma_C^{(m)} \le \frac{\alpha_{\rm high}^2}{\lambda_{\rm low}} 
		\end{align}
	Inserting \eqref{eq:cauchy_step_size_upper_bound} into the denominator of the model reduction ratio in \eqref{eq:ratio_model_reduction_case2} yields a uniform, positive lower bound 
		\begin{align}\label{eq:lower_bound_case2}
			\frac{r_\star^{(m)}}{r_C^{(m)}}  &\ge \frac{\lambda_{\rm low}^2}{\alpha_{\rm high}^4} \cdot \kappa_{\rm curve}.
		\end{align}

\bigskip

\noindent\dotfill {\bfseries Final Fixed Fraction} \dotfill

\medskip

In our final step, we determine a global lower bound on the reduction ratio by selecting the minimum of Case 1 in \eqref{eq:c2_tilde} and Case 2 in \eqref{eq:lower_bound_case2}; that is,
	\begin{align*}
		c_2 = \min\left\{\frac{\lambda_{\rm low}}{\alpha_{\rm high}^2 \beta + \lambda_{\rm low}} \cdot \kappa_{\rm align}, \frac{\lambda_{\rm low}^2}{\alpha_{\rm high}^4} \cdot \kappa_{\rm curve}\right\}.
	\end{align*}
	Because the first constant is a fraction, we guarantee $c_2 \in (0, 1]$, as desired. 
    \end{proof}
 
The key implication of  Lemma~\ref{lem:vp_sufficient_model_reduction} is a relationship between the VarPro model reduction and the norm of the gradient (see Lemma~\ref{lem:cauchy_point_reduction} and Theorem~\ref{thm:sufficient_model_reduction} in Appendix~\ref{app:trust_region_proofs}) given by 
	\begin{align}\label{eq:vp_reduction_vs_gradient_norm}
		\begin{split}
		\caQ[0] - \caQ[h_\star^{(m)}]  &\ge c_2 \left(\caQ[0] - \caQ[h_C^{(m)}]\right) \\
			&\ge c_1 \|\caDL\| \min\left\{\caDeltaLam, \frac{\|\caDL\|}{\|\caDDL\|}\right\}
		\end{split}
	\end{align}
where $c_1 = c_2 / 2 \in (0,1)$.  
The \VPBoost conditions (Assumption~\ref{assump:vpboost}) connect the trust-region radius to yield a simplified, more conservative lower bound than \eqref{eq:vp_reduction_vs_gradient_norm}.

\begin{lemma}[VarPro Model Reduction Lower Bound]\label{lem:varpro_reduction_lower_bound}
Under the \VPBoost conditions (Assumption~\ref{assump:vpboost}) and the uniform boundedness of the Hessian (Assumption~\ref{assump:ell_bounded_hessian}), the VarPro model reduction is bounded below by 
	\begin{align*}\caQ[0] - \caQ[h_\star^{(m)}]  &\ge c_1 \|\caDL\|^2 \cdot \frac{\kappa_{\rm align}}{\beta + \caLambda / \alpha_{\rm low}^2}.
	\end{align*}
\end{lemma}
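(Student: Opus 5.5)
I plan to start from the chain \eqref{eq:vp_reduction_vs_gradient_norm}, which is already available via Lemma~\ref{lem:vp_sufficient_model_reduction} and the appendix Cauchy-point bound:
\begin{align*}
\caQ[0] - \caQ[h_\star^{(m)}] \ge c_1 \|\caDL\| \min\left\{\caDeltaLam, \frac{\|\caDL\|}{\|\caDDL\|}\right\}.
\end{align*}
It then suffices to show that each of the two entries of the minimum is bounded below by $\|\caDL\|\,\kappa_{\rm align}/(\beta + \caLambda/\alpha_{\rm low}^2)$.

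\emph{Second entry.} By Assumption~\ref{assump:ell_bounded_hessian}, $\|\caDDL\| \le \beta$, so $\|\caDL\|/\|\caDDL\| \ge \|\caDL\|/\beta$. Since $\kappa_{\rm align}\le 1$ and $\caLambda/\alpha_{\rm low}^2>0$, we have $1/\beta \ge \kappa_{\rm align}/(\beta+\caLambda/\alpha_{\rm low}^2)$. If $\|\caDDL\|=0$, the minimum is simply the first entry, so this case needs no separate treatment.

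\emph{First entry.} I will use \eqref{eq:trust_region_radius_formula}, $\caDeltaLam = \|\caAmatm\|\,\|(\caHLam)^{-1}\caG\|$, together with the lower bound on the inverse,
\[
\|(\caHLam)^{-1}\caG\| \ge \|\caG\|/(\|\caH\|+\caLambda).
\]
For the reduced Hessian I will not use the cruder bound \eqref{eq:reduced_hessian_bound}. Instead I use $\|\caH\| \le \|\caAmatm\|^2\beta$ with the actual featurizer norm; write $a=\|\caAmatm\|$. By gradient alignment \ref{vpboost_assump:gradient_alignment}, $\|\caG\| \ge \kappa_{\rm align}\, a\,\|\caDL\|$. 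Combining these gives
\[
\caDeltaLam \ge \frac{a^2\kappa_{\rm align}\|\caDL\|}{a^2\beta+\caLambda} = \frac{\kappa_{\rm align}\|\caDL\|}{\beta+\caLambda/a^2}.
\]
Finally, $a\ge \alpha_{\rm low}$ by \ref{vpboost_assump:bounded_featurizer}, which yields the claimed denominator.

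Putting the two entries together and multiplying by $c_1\|\caDL\|$ gives the statement. The main subtlety is the choice made in the first entry. If one uses $\alpha_{\rm high}^2\beta$ in the Hessian bound and $\alpha_{\rm low}$ in the gradient factor, the result does not collapse to the stated form. Keeping the same $a$ in the numerator and denominator, so that it cancels into $\caLambda/a^2$, is the step I expect to require care. Monotonicity in $a$ then lets only $\alpha_{\rm low}$ appear.
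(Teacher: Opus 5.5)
Your proposal is correct and follows essentially the same route as the paper. The paper also lower-bounds $\caDeltaLam$ via $\|\caH\| \le \|\caAmatm\|^2\beta$ with the actual featurizer norm, so that it cancels into $\caLambda/\|\caAmatm\|^2$; it then applies gradient alignment and the bound $\alpha_{\rm low}$, inserts the result into \eqref{eq:vp_reduction_vs_gradient_norm} with $\|\caDDL\| \le \beta$, and notes that the first entry of the minimum is the smaller one. Your explicit handling of the case $\|\caDDL\| = 0$ is a small detail that the paper leaves implicit.
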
 

\begin{proof}
First, we leverage the relationship between the trust-region radius and the norm of the gradient  in \eqref{eq:trust_region_radius_formula}, $\caDeltaLam = \|\caAmatm\|\|\caWLam\|$, to construct a lower bound   \begin{align}\label{eq:delta_lower_bound_vp_model_reduction}
\caDeltaLam
= \|\caAmat^{(m)}\| \left\|\left(\caHLam\right)^{-1} \caG\right\| 
			\ge \frac{ \|\caAmat^{(m)}\|\|\langle \caAm, \caDL\rangle \|}{\|\caAmat^{(m)}\|^2\beta + \lambda_w^{(m)}}
\end{align}
where the inequality follows from the reduced Hessian bound in \eqref{eq:reduced_hessian_bound}.
From the gradient alignment condition (Assumption~\ref{vpboost_assump:gradient_alignment}), we have $\kappa_{\rm align} \|\caAmat^{(m)}\|\|\caDL\| \le \|\langle \caAm, \caDL\rangle \|$, which subsequently bounds \eqref{eq:delta_lower_bound_vp_model_reduction} below by
	\begin{align}\label{eq:delta_lower_bound_part2}
        \begin{split}
	\caDeltaLam  
		&\ge \frac{\kappa_{\rm align} \|\caAmat^{(m)}\|^2 \|\caDL\|}{\|\caAmat^{(m)}\|^2\beta + \lambda_w^{(m)}} \\
		&= \frac{\kappa_{\rm align} \|\caDL\|}{\beta +\lambda_w^{(m)} / \|\caAmat^{(m)}\|^2} \\
		&\ge \frac{\kappa_{\rm align} \|\caDL\|}{\beta + \lambda_w^{(m)} / \alpha_{\rm low}^2}
       \end{split}
	\end{align}
where the last inequality in \eqref{eq:delta_lower_bound_part2} comes from the uniform lower bound of the featurizer (Assumption~\ref{vpboost_assump:bounded_featurizer}).
The resulting lower-bound on the VarPro model reduction is 
	\begin{align*}
		\caQ[0] - \caQ[h_\star^{(m)}]  &\ge c_1 \|\caDL\|^2 \min\left\{ \frac{\kappa_{\rm align}}{\beta + \lambda_w^{(m)} / \alpha_{\rm low}^2}, \frac{1}{\beta}\right\}. 
	\end{align*}
Because $\kappa_{\rm align} \in (0,1]$ and $\lambda_w^{(m)} > 0$ due to the sufficient regularization condition (Assumption~\ref{vpboost_assump:sufficient_regularization}), the first fraction in the lower bound will be strictly less than $1 / \beta$ for all $m$. 
Thus, the lower bound of the model reduction simplifies to the desired result. \end{proof}

The lower bound on VarPro model reduction incorporates information about the steepness of loss landscape and the curvature captured by the reduced Hessian, $\caH$, at the current ensemble. 
Together, these inform the magnitude of the reduction one can potentially achieve from the trust-region minimizer $\cah$.
The simplified lower bound is a consequence of the submultiplicativity constraint modification in \eqref{eq:trust_region_for_w}.

We present a final lemma that, following the mechanics of the \VPBoost
trust-region algorithm in Algorithm~\ref{alg:trust_region_skeleton_vpboost}, the regularization
parameter $\lambda_w^{(m)}$ has a uniform upper bound. 

\begin{lemma}[Regularization Parameter Upper Bound]\label{lem:varpro_regularization_parameter_upper_bound}
Assume uniform boundedness of the featurizer  (Assumption~\ref{vpboost_assump:bounded_featurizer}) and the Hessian (Assumption~\ref{assump:ell_bounded_hessian}). 
Then, starting from weak learner stage $M$, \VPBoost following Algorithm~\ref{alg:trust_region_skeleton_vpboost} yields an upper bound on the regularization parameter $\caLambda$ for all $m \ge M$ given by 
	\begin{align*}
		\caLambda \le \max \left\{\lambda_w^{(M)}, \gamma_{\rm up} \cdot \tfrac{1}{2}\beta\alpha_{\rm high}^2 \cdot \frac{1 + \rho_{\rm small}}{1 - \rho_{\rm small}}\right\}.
	\end{align*}
\end{lemma}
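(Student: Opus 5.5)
The plan is to show that once $\caLambda$ exceeds a fixed threshold, the reduction ratio $\caRho$ is at least $\rho_{\rm small}$. Algorithm~\ref{alg:trust_region_skeleton_vpboost} then stops increasing $\lambda_w$, and an induction on $m \ge M$ gives the bound. Define
\begin{align*}
\bar\lambda \coloneqq \tfrac12\beta\alpha_{\rm high}^2\cdot\frac{1+\rho_{\rm small}}{1-\rho_{\rm small}}.
\end{align*}
The core claim is that $\caLambda \ge \bar\lambda$ implies $\caRho \ge \rho_{\rm small}$. I assume $\caG \ne \mathbf{0}$, so that $\caW \ne \mathbf{0}$ and $\caRho$ is well defined. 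The degenerate case $\caG=\mathbf{0}$ gives $\cah = 0$, and I would handle it by convention as a stationary-type situation with no parameter change.

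\emph{Step 1: bound $|\caRho - 1|$.} Write $h=\cah$. Because the SAA loss is a finite sum of $C^2$ functions (Assumption~\ref{assump:loss_functional}), Taylor's theorem with mean-value remainder, applied along the segment $t\mapsto f^{(m)}+th$, gives
\begin{align*}
\SAA\LossFunctional[f^{(m)}+h] = \caL + \langle \caDL, h\rangle + \tfrac12\langle h, \nabla^2\SAA\LossFunctional[f^{(m)}+th]\,h\rangle
\end{align*}
for some $t\in(0,1)$. Subtracting $\caQ[h]$ leaves only the difference of the two Hessian terms. Both Hessians are positive semidefinite with norm at most $\beta$ by \ref{assump:ell_convexity} and \ref{assump:ell_bounded_hessian}. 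Hence each quadratic form lies in $[0,\beta\|h\|^2]$, and their difference is at most $\beta\|h\|^2$ in absolute value. This yields
\begin{align*}
\bigl|\text{actual} - \text{predicted}\bigr| \le \tfrac12\beta\|h\|^2 \le \tfrac12\beta\alpha_{\rm high}^2\|\caW\|^2,
\end{align*}
where the last inequality uses Lemma~\ref{lem:operator_norm} and \ref{vpboost_assump:bounded_featurizer}.

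\emph{Step 2: lower-bound the predicted reduction.} By \eqref{eq:reduction_separable} and $\caH\succeq 0$, the predicted reduction is at least $\caLambda\|\caW\|^2$. Dividing the bound from Step 1 by this quantity, the factor $\|\caW\|^2$ cancels, and this cancellation is the key point:
\begin{align*}
|\caRho - 1| \le \frac{\beta\alpha_{\rm high}^2}{2\caLambda}.
\end{align*}
Therefore $\caRho \ge \rho_{\rm small}$ whenever $\caLambda \ge \beta\alpha_{\rm high}^2/(2(1-\rho_{\rm small}))$. Since $1+\rho_{\rm small}\ge 1$, this threshold is at most $\bar\lambda$, so the claim follows.

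\emph{Step 3: induction.} Suppose $\caLambda \le \max\{\lambda_w^{(M)},\gamma_{\rm up}\bar\lambda\}$ holds for some $m\ge M$; it holds trivially at $m=M$. There are two cases.
\begin{itemize}
\item If $\caLambda \ge \bar\lambda$, then Step 2 gives $\caRho\ge\rho_{\rm small}$, so Line~\ref{alg_line:reg_increase} is skipped and $\lambda_w^{(m+1)}=\caLambda$.
\item If $\caLambda<\bar\lambda$, then $\lambda_w^{(m+1)}\le\gamma_{\rm up}\caLambda<\gamma_{\rm up}\bar\lambda$.
\end{itemize}
Either way the bound propagates to stage $m+1$.

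The main obstacle is Step 1. The mean-value Hessian must be controlled uniformly, and the norm of the weak learner must be tied back to $\|\caW\|$ through the featurizer bound, so that it cancels against the $\lambda_w\|\caW\|^2$ lower bound. The subtle point is using both positive semidefiniteness and the bound $\beta$ to get the constant $\tfrac12\beta$ rather than $\beta$. Even the cruder constant suffices, because the stated threshold with factor $(1+\rho_{\rm small})/(1-\rho_{\rm small})$ leaves slack.
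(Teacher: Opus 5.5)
Your proposal is correct and follows essentially the same argument as the paper. Both use a $\beta$-bound on the second-order Taylor remainder, submultiplicativity with $\alpha_{\rm high}$ to pass to $\|\caW\|^2$, the closed-form predicted reduction $\caW^\top(\tfrac12\caH+\caLambda\bfI_{\NW})\caW$, cancellation of $\|\caW\|^2$, and the $\gamma_{\rm up}$ overshoot step. The main difference is bookkeeping: you bound $|\caRho-1|$ using the lower bound $\caLambda\|\caW\|^2$ on the predicted reduction. The paper instead lower-bounds the actual reduction by $(\caLambda-\tfrac12\beta\alpha_{\rm high}^2)\|\caW\|^2$ and upper-bounds the predicted one via $\|\caH\|\le\alpha_{\rm high}^2\beta$. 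Your threshold $\tfrac12\beta\alpha_{\rm high}^2/(1-\rho_{\rm small})$ is therefore slightly sharper and implies the stated one. You also make explicit the induction and the degenerate case $\caG=\mathbf{0}$, which the paper's cancellation of $\|\caW\|^2$ leaves implicit.
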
 

\begin{proof}
Suppose we are currently at weak learner stage $M$ in Algorithm~\ref{alg:trust_region_skeleton_vpboost}. 
We will show that the regularization parameter cannot become arbitrarily large by relating the reduction ratio $\caRho$ to the regularization parameter $\caLambda$ and the model confidence threshold $\rho_{\rm small}$ in Algorithm~\ref{alg:trust_region_skeleton_vpboost}, Line~\ref{alg_line:model_confidence}. 
In particular, we will show that for sufficiently large $\caLambda$, the reduction ratio will exceed $\rho_{\rm small}$, in which case the regularization parameter will no longer be increased (i.e., Algorithm~\ref{alg:trust_region_skeleton_vpboost}, Line~\ref{alg_line:reg_increase} will not be reached, the orange range in Figure~\ref{fig:reduction_ratio_range}). 
To this end, we will find a lower bound for $\rho^{(m)}$ in terms of $\caLambda$, which will lead to an upper bound of $\caLambda$ in terms of $\rho_{\rm small}$. 

We start by constructing a lower bound of $\rho^{(m)}$ in \eqref{eq:reduction_ratio} by decreasing the numerator, the actual reduction, $\caL - \Lcal_N[f^{(m)} + \cah]$. 
To make the numerator smaller, we obtain an upper bound for the new loss functional value using the uniform bound of the Hessian (Assumption~\ref{assump:loss_functional}:~\ref{assump:ell_bounded_hessian})
	\begin{align}\label{eq:next_loss_upper_bound}
		\Lcal_N[f^{(m)} + \cah] \le \caL + \langle \caDL, \cah \rangle + \tfrac{1}{2}\beta \| \cah \|^2.
	\end{align} 
Next, using the separability of the weak learner, $\cah = \caAm \caW$, submultiplicativity of norms, and the reduced derivatives in \eqref{eq:grad_hess_shorthand}, and the featurizer bounds in Assumption~\ref{vpboost_assump:bounded_featurizer}, we build a subsequent upper bound of \eqref{eq:next_loss_upper_bound} in terms of the $\caW$ via
	\begin{align}\label{eq:next_loss_upper_bound_in_w}
        \begin{split}
		\Lcal_N[f^{(m)} + \cah]  
			&\le \caL + (\caG)^\top \caW +  \tfrac{1}{2}\beta \|\caAmatm\|^2 \| \caW \|^2\\
			\begin{split}
			&=\caL -\caW^\top \left(\caH + \caLambda \bfI_{\NW}\right) \caW+  \tfrac{1}{2}\beta \alpha_{\rm high}^2 \| \caW \|^2
			\end{split}\\
			&\le \caL - \left(\caLambda - \tfrac{1}{2}\beta \alpha_{\rm high}^2 \right) \| \caW \|^2.
        \end{split}
	\end{align} 
We truncated the non-positive term $-\caW^\top \caH\caW$ to obtain the final inequality in \eqref{eq:next_loss_upper_bound_in_w}. 
Combining this upper bound and the VarPro model reduction formula in \eqref{eq:reduction_separable}, we obtain a lower bound for the reduction ratio
	\begin{align}\label{eq:rho_upper_bound_by_lambda}
        \begin{split}
		\caRho 
			&=\frac{\caL - \Lcal_N[f^{(m)} + \cah]}{\caQ[0] - \caQ[\cah]} \\
			&\ge \frac{ \left( \caLambda -  \tfrac{1}{2}\beta \alpha_{\rm high}^2\right) \| \caW \|^2}{\caW^\top \left(\tfrac{1}{2}\caHLam\right) \caW}\\
			&\ge  \frac{ \left( \caLambda -  \tfrac{1}{2}\beta\alpha_{\rm high}^2\right) \| \caW \|^2}{\left(\tfrac{1}{2}\beta\alpha_{\rm high}^2 + \caLambda\right)\|\caW\|^2} \\
			&=  \frac{\caLambda -  \tfrac{1}{2}\beta\alpha_{\rm high}^2}{\tfrac{1}{2}\beta\alpha_{\rm high}^2 + \caLambda}.
        \end{split}
	\end{align}
The last inequality in \eqref{eq:rho_upper_bound_by_lambda} follows from the bound on the reduced Hessian \eqref{eq:reduced_hessian_bound}. 
The final equality simplified the expression by canceling $\|\caW\|^2$. 
Algebraic manipulation reveals that if 
	\begin{align*}\caLambda \ge\tfrac{1}{2}\beta\alpha_{\rm high}^2 \cdot \frac{1 + \rho_{\rm small}}{1 - \rho_{\rm small}},
	\end{align*}
then $\rho^{(m)} \ge \rho_{\rm small}$, in which case the weak learner will be accepted and the regularization parameter will not be increased (cyan region in Figure~\ref{fig:reduction_ratio_range}). 
Thus, following the procedure outlined in Algorithm~\ref{alg:trust_region_skeleton_vpboost}, for all $m \ge M$, the regularization parameter is bounded above by
	\begin{align*}\caLambda \le \max \left\{\lambda_w^{(M)}, \gamma_{\rm up} \cdot \tfrac{1}{2}\beta\alpha_{\rm high}^2 \cdot \frac{1 + \rho_{\rm small}}{1 - \rho_{\rm small}}\right\}.
	\end{align*}  
\end{proof}

Recall, by \eqref{eq:trust_region_radius_formula},  $\caLambda > 0$ is inversely proportional to the trust-region radius, $\caDeltaLam$. 
Thus, Lemma~\ref{lem:varpro_regularization_parameter_upper_bound} effectively prohibits $\caDeltaLam$ from becoming arbitrarily small in \eqref{eq:trust_region_for_w}, the quadratic problem in $\bfw$. 
This guarantees the optimal linear weights, and by extension the separable weak learner, will not become too small due to an overly-regularized problem.

  	\subsection{Global Convergence of \VPBoost}
\label{sec:global_convergence}

We conclude our discussion of \VPBoost theory with main convergence results: that \VPBoost converges to a stationary point \eqref{eq:stationary_point} and, under stronger assumptions, achieves a superlinear rate of convergence. 
Theorem~\ref{thm:vpboost_convergence} proves that in the ``accept any reduction'' regime of Algorithm~\ref{alg:trust_region_skeleton_vpboost}, there exists a subsequence of iterates that converges to a stationary point. 
Theorem~\ref{thm:vpboost_convergence_part2} provides a stronger result that  Algorithm~\ref{alg:trust_region_skeleton_vpboost} is globally-convergent provided non-negligible reduction is accepted. 
Theorem~\ref{thm:vpboost_convergence_rate} proves \VPBoost converges superlinearly with a sufficiently expressive featurizer. 

\begin{remark}
We note that the theoretical results can be relaxed to hold only locally based on a level set \citep{nocedal_numerical_2006} and more general trust-region radius constraints. 
However, in the machine learning setting, the loss functionals are often well-behaved globally. 
We therefore, for expositional purposes, present the convergence proof without the weaker locality assumptions. 
The mechanics of the presented proofs hold under the weaker assumptions. 
\end{remark}

\begin{restatable}[\VPBoost Has a Stationary Limit Point]{mythm}{vpboostconvergenceA}\label{thm:vpboost_convergence}
Let $\rho_{\rm accept} = 0$ in Algorithm~\ref{alg:trust_region_skeleton_vpboost} 
and assume standard smoothness and boundedness of the loss functional (Assumption~\ref{assump:loss_functional}). 
Further assume the each VarPro weak learner, $h_\star^{(j)} = \caAmat^{(j)}(\cdot) \bfw_\star^{(j)}(\bftheta)$, satisfies the \VPBoost conditions (Assumption~\ref{assump:vpboost}).    
Then, \VPBoost will converge to a stationary point in the sense that
	 	\begin{align*}
			\liminf_{m\to \infty} \|\caDL\| = 0
		\end{align*}
where  $f^{(m)}(\cdot) = \sum_{j=0}^m h_\star^{(j)}(\cdot)$ is a VarPro-based ensemble. 
\end{restatable}

\begin{proof}
See Appendix~\ref{sec:proof_vpboost_convergence}. 
\end{proof}

Note that convergence of Theorem~\ref{thm:vpboost_convergence} does not hold for any sequence of iterates generated by Algorithm~\ref{alg:trust_region_skeleton_vpboost}; it only guarantees convergence of a subsequence.  
The subtlety arises because $\rho_{\rm accept} = 0$, which accepts any weak learner that produces an arbitrarily small actual reduction.

\begin{restatable}[\VPBoost Converges to a Stationary Point]{mythm}{vpboostconvergenceB}\label{thm:vpboost_convergence_part2}
Let $\rho_{\rm accept} \in (0,1)$ in Algorithm~\ref{alg:trust_region_skeleton_vpboost} and assume standard smoothness and boundedness of the loss functional (Assumption~\ref{assump:loss_functional}). 
Further assume the each VarPro weak learner, $h_\star^{(j)} = \caAmat^{(j)}(\cdot) \bfw_\star^{(j)}(\bftheta)$, satisfies the \VPBoost conditions (Assumption~\ref{assump:vpboost}).    
Then, \VPBoost will converge to a stationary point; that is, 
	 	\begin{align*}
			\lim_{m\to \infty} \|\caDL\| = 0
		\end{align*}
for a greedy, VarPro-based ensemble $f^{(m)}(\cdot) = \sum_{j=0}^m h_\star^{(j)}(\cdot)$. 
\end{restatable}

\begin{proof}
See Appendix~\ref{sec:proof_vpboost_convergence_part2}.
\end{proof}

Up until this point, we have assumed mild, reasonable conditions (Assumption~\ref{assump:vpboost}) to guarantee convergence of \VPBoost. 
To obtain a rate of convergence, we require much stronger, less practical assumptions on the VarPro weak learners.  
In particular, we must assume a trust-region-based condition that the VarPro weak learners eventually are asymptotically similar\footnote{We say $p$ is asymptotically similar to $q$ if $\lim_{m\to \infty} p(m) / q(m) = 0$, meaning $p$ decays to zero faster than $q$. We can express this succinctly in little-o notation by $p = o(q)$.} 
 to Newton weak learners, $\caDDL^{-1} \caDL$. 
The need for more stringent assumptions for convergence rate analysis is well-founded based on existing literature, which often restricts analysis to convergence within a subclass of functions \citep{atsushi_nitanda_functional_2020, bickel_theory_2006} or requires an ensemble-level weak learnability condition \citep{lu_randomized_2020}. 
For completeness, we provide a convergence rate result followed by a discussion on the limitations of the result in the context of boosting with weak learners.

\begin{restatable}[\VPBoost Superlinear Rate of Convergence]{mythm}{vpboostconvergencerate}\label{thm:vpboost_convergence_rate} 
Let $\{f^{(m)}\}_{m=0}^\infty$ be a sequence of ensembles that converges to a stationary point, $f^* \in \Fcal$ with $\nabla \SAA\LossFunctional[f^*] = 0$.  
Assume that the Hessian at the stationary point is positive
definite;\footnote{This implies $f^*$ is a local minimizer of
$\SAA\LossFunctional$ by the second-order sufficiency conditions \citep[Theorem
2.4]{nocedal_numerical_2006}.} that is, $\nabla^2\SAA\LossFunctional[f^*] \succ
0$. 
Further assume, for sufficiently large $m$, that the VarPro weak learner, $\cah$, is asymptotically similar to the Newton weak learner. 
Then, the \VPBoost iterates converge superlinearly to $f^*$.

\end{restatable}

\begin{proof}
See Appendix~\ref{sec:proof_vpboost_convergence_rate}.
\end{proof}

Asymptotic similarity to Newton weak learners enable \VPBoost to capitalize on the characteristic quadratic convergence of Newton iterates. 
The crucial question of Theorem~\ref{thm:vpboost_convergence_rate} is: under what conditions do \VPBoost learners satisfy this asymptotic similarity condition?
This translates to a condition on the featurizer, $\caAmat$, about which, thus far, we have made few assumptions. 
At a high level, one needs to choose a featurizer such that $\caA \caW \approx  \caDDL^{-1} \caDL$ in the long run. 
As a result, the featurizer must become expressive enough to capture substantial curvature information along a particular direction. 
Expressibility counteracts the weakness of the learners, and hence is an unrealistic assumption in the context of boosting. 
Furthermore, the convergence rate is achieved at the infinite limit of Algorithm~\ref{alg:trust_region_skeleton_vpboost}. 
In reality, boosting constructs an ensemble of a small,  finite number of weak learners. 
We therefore would not expect to see a superlinear rate in practice.

\section{Numerical Experiments}
\label{sec:numerical_experiments}

We present numerical experiments demonstrating the strong practical performance of \VPBoost.  
Our experiments encompass a range of machine learning tasks, featurizer architectures, and problem scales. 
We organize the section as follows. 
Section~\ref{subsec:implementation_details} describes the implementation details and 
Section~\ref{subsec:experimental_protocol} describes the protocols used to assess the robustness,
efficiency, and scalability of the proposed techniques. 
Section~\ref{sec:toy_experiments} uses small scale synthetic examples to assess and visualize the performance of \VPBoost for both regression and classification tasks. 
Section~\ref{subsec:real_world_benchmarks} collects three real-world
benchmarks: MNIST image classification, CDR regression, and Higgs binary
classification.

\subsection{Implementation Details and Practical Considerations}
\label{subsec:implementation_details}

\paragraph{Acceptance Heuristics.}
In practice, we work with a prescribed finite budget of weak learners, so the
full trust-region acceptance logic can be simplified without much loss. A
natural choice is to set $\rho_{\rm accept} = 0$ and take $\rho_{\rm small}$
to be very small. Then any trial weak learner producing positive actual
reduction is accepted, and the regularization parameter is only increased when
the quadratic model is exceptionally unreliable. This is analogous to the fixed
shrinkage (boosting-rate) heuristics common in gradient boosting. In the
\VPBoost setting, this practical choice remains consistent with the theoretical
results established earlier: Theorem~\ref{thm:vpboost_convergence} gives the
relevant convergence guarantee for the $\rho_{\rm accept}=0$ regime, while
Lemma~\ref{lem:vpboost_descent} ensures that VarPro weak learners provide a
descent direction under mild conditions.

\paragraph{Kronecker Structure and Reduced Derivatives.}
Beyond these algorithmic simplifications, the implementation also benefits
from the algebraic structure of the weak learner itself. 
A separable neural networks is defined by the mapping $\bfx \mapsto \bfW \Featurizer_{\bftheta}(\bfx)$, where
$\bfW \in \Rbb^{\NTarget\times\NFeat}$ is dense matrix and $\Featurizer_{\bftheta}(\bfx) \in
\Rbb^{\NFeat}$ is a vector-valued feature extractor.
In our notation, this corresponds to the featurizer matrix
$\FeaturizerMat_{\bftheta}(\bfx)=\Featurizer_{\bftheta}(\bfx)^\top\otimes
\bfI_{\NTarget}$, where $\otimes$ is the Kronecker product and
$\bfw=\myvec(\bfW) \in \Rbb^{\NTarget\NFeat}$ is vectorized column-wise~\cite[Section~10.2]{petersen_matrix_2012}.
In the implementation, we
exploit this structure directly rather than constructing the full featurizer $\caAmat$. In particular, the reduced derivatives in
\eqref{eq:gbar_formula}--\eqref{eq:hbar_formula} are assembled in the einsum notation
\begin{align*}
[\caG]_{ab} &=
\frac{1}{\NSamp}\sum_{i=1}^{\NSamp}
[\Featurizer_{\bftheta}(\bfx_i)]_b \cdot [\nabla_{\widehat{\bfy}}\lossfctn(f^{(m)}(\bfx_i),\bfy_i)]_a , \\
[\caH]_{ac,bd} &=
\frac{1}{\NSamp}\sum_{i=1}^{\NSamp}
[\Featurizer_{\bftheta}(\bfx_i)]_a
\cdot
[\Featurizer_{\bftheta}(\bfx_i)]_b
\cdot
[\nabla_{\widehat{\bfy}}^2\lossfctn(f^{(m)}(\bfx_i),\bfy_i)]_{cd}.
\end{align*}

\paragraph{Gradient Handling in the Reduced Model.}
A final implementation detail concerns how gradients are computed for the
reduced model. When optimizing the reduced VarPro objective with respect to the
nonlinear parameters $\bftheta$, the optimal linear weights are first computed
from \eqref{eq:w_opt_form} and then treated as fixed when differentiating the
loss. That is, the gradient step for $\bftheta$ need only account for the
explicit dependence of the reduced model on the featurizer and should not
backpropagate through the linear solve itself. Operationally, this means the
computational graph is truncated after computing the optimal weights. In
practice, this can be implemented using autodiff primitives such as
\texttt{jax.lax.stop\_gradient} in JAX or \texttt{torch.no\_grad()} in PyTorch.

\paragraph{Initial Constant Ensemble.}
We initialize the ensemble with the optimal constant predictor, meaning the
constant vector $\bfc_0\in\Rbb^{\NTarget}$ that minimizes the empirical loss
over the training set,
\[
\bfc_0=\argmin_{\bfc\in\Rbb^{\NTarget}} \frac{1}{\NSamp}\sum_{i=1}^{\NSamp}
\lossfctn(\bfc,\bfy_i).
\]
In other words, the initial ensemble is the best constant approximation before
any weak learner is added. 
For the losses used in this work, this attains familiar closed forms: for
squared loss, $\bfc_0=\frac{1}{\NSamp}\sum_i \bfy_i$; for binary cross-entropy
with logit output, $c_0=\log(\bar y/(1-\bar y))$ where $\bar
y=\frac{1}{\NSamp}\sum_i y_i$; and for multiclass cross-entropy the
loss-minimizing constant satisfies $\sigma(\bfc_0)=\bar{\bfy}$, so one may
take\footnote{In principle, there are many solutions to $\sigma(\bfc_0)=
\bar{\bfy}$ due to softmax translation invariance.} $\bfc_0=\log(\bar{\bfy})$.
Alternatively, one could directly minimize for an optimal constant prior to
training as it is a fairly simple low-dimensional convex optimization
problem.
Optimal constant initialization is discussed further
in~\citet{friedman_greedy_2001}.

\subsection{Experimental Protocol}
\label{subsec:experimental_protocol}

We assess \VPBoost on both synthetic and real-world datasets spanning
regression and classification tasks to evaluate its performance,
efficiency, and scalability.
We begin with toy regression and classification problems to illustrate the
fundamental capabilities of \VPBoost.
Then, we proceed to more challenging real-world datasets, including a
scientific machine learning regression task for a
convection-diffusion-reaction system \citep{newman_train_2021} and large-scale
binary classification of the Higgs boson dataset \citep{whiteson_higgs_2014}.
Aside from the Higgs boson dataset, which reserves the last 500,000 samples for
testing, and MNIST which reserves 10,000 samples for the same purpose,
all datasets are randomly shuffled and split into training, validation, and test
sets with proportions of 70\%, 15\%, and 15\%, respectively.

The validation set is used for hyperparameter tuning and early stopping, while
the testing set is reserved for final performance evaluation.
Hyperparameter selection is performed with Optuna \citep{akiba_optuna_2019}
using 100 trials per experiment, where the learning rate and regularization
strength are jointly tuned. After identifying the best hyperparameter
configuration, we rerun each experiment across 10 distinct random seeds and
report performance as the mean and standard deviation over these runs.

The primary comparison in all experiments is between \VPBoost and standard gradient-descent weak learner training (\GDBoost), isolating the effect of variable projection.
Both methods have the same neural network architectures, loss functions, and optimization algorithms (Adam) for training the weak learners.
Under the same protocol, we will also compare against modifications of the \GDBoost strategy, referred to as {VP@Start}, {VP@End}, and {VP@Start+End}.
\VPBoost performs variable projection throughout training; these algorithms isolate variable projection to a single step within training, at the start, end, or both.
Finally, in all numerical experiments, we compare \VPBoost against the state-of-the-art gradient boosting implementation \XGBoost \citep{chen_xgboost_2016}.
Because \XGBoost is tree-based, a direct like-for-like comparison at the weak
learner level is not possible. 
Nevertheless, in all experiments we compare against \XGBoost ensembles whose total parameter count is matched as closely as possible to the neural network ensemble.
Assuming $M$ learners and a depth per-tree of $d$, the parameter count is conservatively estimated as $\NTarget \cdot M \cdot (2^d - 1)$.
To mitigate the impact of this constraint, we additionally compare against the AUC from the original \XGBoost paper \citep{chen_xgboost_2016} for the Higgs classification dataset.
This serves as an external reference point outside the matched-capacity protocol.
All baseline methods are tuned using the same validation metric and protocol to ensure fair comparison.

All experiments are implemented in Python 3.13 using
JAX~\citep{bradbury_jax_2018} and Equinox~\citep{kidger_equinox_2021} for
neural network components and
\texttt{scikit-learn}~\citep{pedregosa_scikit-learn_2011} for pre-processing and
evaluation utilities.
Experiments were run on hardware provided by the SLURM-based high-performance computing cluster at the Center for Computation and Visualization of Brown University.
The code is available upon request and will be released publicly after review of
the manuscript is complete.
 \subsection{Illustrative 2D Experiments}
\label{sec:toy_experiments}

We present three two-dimensional toy machine learning experiments to assess the
performance of \VPBoost across different loss functions. 
We briefly describe the setup of each experiment and then present the results
showing convergence during training, final ensemble approximations, and final
metrics per task in Figure~\ref{fig:toy_experiment_training},
Figure~\ref{fig:toy_experiment_approx}, and
Table~\ref{tab:toy_experiment_metrics}, respectively.  
The problems were motivated by similar toy examples in
\citet{ruthotto_stable_2017}. 

\subsubsection{2D Experiments Problem Setup}
Each experiment implements a fully-connected neural network, a composite function made of lightweight functions called \emph{layers}. 
Each fully-connected layer consists of an affine transformation followed by a nonlinear activation function; that is, if $u^{\rm FC}: \Rbb^p \to \Rbb^q$ is a fully-connected layer, then $u^{\rm FC}(\bfz) = \sigma(\bfK \bfz + \bfb)$ where $\bfK\in \Rbb^{q\times p}$ is a dense weight matrix and $\bfb\in \Rbb^q$ is an additive bias vector. The nonlinear parameters are $\bftheta = \{(\bfK_i, \bfb_i)\}_{i=1}^d$, collecting the weights and biases across all
$d$ layers of the featurizer.
The capacity of a fully-connected layer is the number of entries in the weight matrix and bias vector, $pq + q$.  
The activation function $\sigma: \Rbb \to \Rbb$ is applied elementwise. 
In the following experiments, we use the smooth hyperbolic tangent activation
function.

\paragraph{Task 1: Regression (MSE)} 
First, we consider approximating a smooth, oscillatory function
	\begin{equation}
		y(\bfx) = x_1(1-x_1)\cos(4\pi x_1)\sin^2(4\pi x_2^2)
	\end{equation}
for $\bfx \equiv (x_1, x_2) \in [0,1]^2$. 
The goal is to learn the smooth function $y: \Rbb^2 \to \Rbb$ from $N$ observations $\{(\bfx_i,  y_i \equiv y(\bfx_i))\}_{i=1}^N$. 
We train our models using mean square error (MSE) loss and $N=1200$ training data points randomly sampled uniformly across the domain $[0,1]^2$.  
Each weak learner featurizer consists of $2$ hidden layers of width $4$ (see Figure~\ref{fig:toy_experiment_weak_learner_regression}). 
The capacity per weak learner is $\NTheta = 52$ and $\NW = 5$ and the ensemble of $M=10$ weak learners has a capacity of $570$. 

	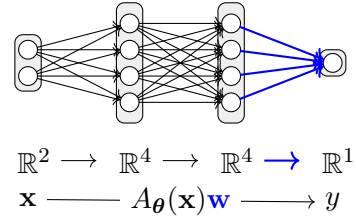
\begin{wrapfigure}{r}{0.3\linewidth}
		\begin{tikzpicture}
		\def\s{0.1}
		\def\r{0.25}
		
		\pgfmathsetmacro{\X}{0}
		\pgfmathsetmacro{\Y}{0}
		
		\draw[rounded corners=3pt, fill=lightgray!25] (\X-0.5*\r-0.5*\s, \Y-1.5*\r-0*\s) rectangle (\X+0.5*\r+0.5*\s, \Y+1.5*\r+0*\s);
		
		\foreach[count=\a] \i in {-0.5, 0.5}{
			\node[circle, minimum width=\r cm, draw, inner sep=0pt, outer sep=0pt, fill=white] (n\a) at (\X,\Y-\i*\r-\i*\s) {};
		}

		\node at (\X, \Y-5*\r-0.5*\s) (rr) {$\phantom{{}^2}\Rbb^2$}; 
		\node[below=0.0cm of rr.south] (xx) {$\bfx$};
		
		\pgfmathsetmacro{\X}{\X+\r+\s+1}
		
		\draw[rounded corners=3pt, fill=lightgray!25] (\X-0.5*\r-0.5*\s, \Y-3*\r-0.5*\s) rectangle (\X+0.5*\r+0.5*\s, \Y+3*\r+0.5*\s);
		
		\foreach[count=\a] \i in {-1.5, -0.5, ..., 1.5}{
			\node[circle, minimum width=\r cm, draw, inner sep=0pt, outer sep=0pt, fill=white] (o\a) at (\X,\Y-\i*\r-\i*\s) {};
		}

		\node at (\X, \Y-5*\r-0.5*\s) (rtmp) {$\phantom{{}^4}\Rbb^4$};
		
		\draw[->] (rr) -- (rtmp);
		
		\pgfmathsetmacro{\X}{\X+\r+\s+1}
		
		\draw[rounded corners=3pt, fill=lightgray!25] (\X-0.5*\r-0.5*\s, \Y-3*\r-0.5*\s) rectangle (\X+0.5*\r+0.5*\s, \Y+3*\r+0.5*\s);
		\foreach[count=\a] \i in {-1.5, -0.5, ..., 1.5}{
			\node[circle, minimum width=\r cm, draw, inner sep=0pt, outer sep=0pt, fill=white] (p\a) at (\X,\Y-\i*\r-\i*\s) {};
		}	
		
		\node at (\X, \Y-5*\r-0.5*\s) (rtmp2) {$\phantom{{}^4}\Rbb^4$};
		\draw[->] (rtmp) -- (rtmp2);
			
		\pgfmathsetmacro{\X}{\X+\r+\s+1}
		
		\draw[rounded corners=3pt, fill=lightgray!25] (\X-0.5*\r-0.5*\s, \Y-0.5*\r-0.5*\s) rectangle (\X+0.5*\r+0.5*\s, \Y+0.5*\r+0.5*\s);
		
		\foreach[count=\a] \i in {0}{
			\node[circle, minimum width=\r cm, draw, inner sep=0pt, outer sep=0pt, fill=white] (q\a) at (\X,\Y-\i*\r-\i*\s) {};
		}

		\node at (\X, \Y-5*\r-0.5*\s) (rr)  {$\phantom{{}^1}\Rbb^1$};
		\draw[->, thick, blue] (rtmp2) -- (rr);
		
		\node[below=0.0cm of rr.south] (yy) {$y$};
		\draw[->] (xx) -- node[midway, fill=white] {$A_{\bftheta}(\bfx) {\color{blue} \bfw}$} (yy);
		
		\foreach \a in {1, ..., 4}{
			\draw[->, black] (n1) -- (o\a);
			\draw[->, black] (n2) -- (o\a);
			\draw[->, thick, blue] (p\a) -- (q1);
			
			\foreach \b in {1, ..., 4}{
				\draw[->, black] (o\a) -- (p\b);
			}
		}
	\end{tikzpicture} 	\caption{Fully-connected NN with two hidden layers. Thicker blue arrows indicate the final linear map. }
	\label{fig:toy_experiment_weak_learner_regression}
\end{wrapfigure}

\paragraph{Task 2: Binary Classification (BCE)} 

The second task we consider is to classify data generated from the Swiss roll function \citep[Section 6.2]{ruthotto_stable_2017}. 
Data points are uniformly sampled from one of two paths, $\zeta_0$ and $\zeta_1$, given by 
	\begin{align}
		\zeta_0(r, \omega) = r \begin{pmatrix} \cos \omega \\ \sin\omega \end{pmatrix}  \quad \text{and} \quad 
		\zeta_1(r, \omega) = (r + 0.2) \begin{pmatrix} \cos \omega \\ \sin\omega \end{pmatrix}
	\end{align}
for $r \in [0,1]$  and $\omega \in [0,4\pi]$. 
The objective is to learn the labeling function $y: \Rbb^2 \to \{0,1\}$ where $y(\bfx) = 0$ if $\bfx$ lies along path $\zeta_0$ and $y(\bfx) = 1$ if $\bfx$ lies along path $\zeta_1$. 
We train our models using the binary cross entropy (BCE) loss with $N=800$ training data points per class. 
Each weak learner featurizer consists of $1$ hidden layer of width $4$ (see a similar architecture in Figure~\ref{fig:toy_experiment_weak_learner_regression}). 
The capacity per weak learner is $\NTheta = 32$ and $\NW = 5$. 
We construct an ensemble of $M=5$ weak learners, resulting in an ensemble capacity of $185$. 

\paragraph{Task 3: Multi-class Classification (MCE)}

Lastly, we consider a multi-class classification problem constructed from level sets of the $\text{MATLAB}^{\textregistered}$ \texttt{peaks} function, formed by transforming and combining Gaussian distributions via
	\begin{align}\label{eq:peaks}
		f(\bfx) = 3(1-x_1)^2 e^{-x_1^2 - (x_2 +1)^2} - 10(\tfrac{1}{5}x_1 - x_1^3 - x_2^5)e^{-x_1^2 - x_2^2} - \tfrac{1}{3}e^{-(x_1+1)^2 - x_2^2}
	\end{align}
for $\bfx \equiv (x_1, x_2)\in [-3, 3]^2$.   
We construct $\NTarget = 5$ level sets of the \texttt{peaks} function by forming a uniform partition of the range of $f$ \citep[Section 6.3]{ruthotto_stable_2017}.  
Data points are generated by over-sampling uniformly over the domain and sub-selecting to form balanced classes.

We train our models using multi-class cross entropy (MCE) loss and $N=800$ training data points per class. 
Each weak learner featurizer consists of $2$ hidden layers of width $4$ (Figure~\ref{fig:toy_experiment_weak_learner_regression}). 
The capacity per weak learner is $\NTheta = 52$ and $\NW = 5$ and the ensemble of $M=10$ weak learners has a capacity of $570$. 

\subsubsection{Performance of \VPBoost on 2D Experiments}
We present training and validation metrics in
Figure~\ref{fig:toy_experiment_training}, final ensemble approximations in
Figure~\ref{fig:toy_experiment_approx}, test performance in
Table~\ref{tab:toy_experiment_metrics},
and a timing benchmark in Figure~\ref{fig:time_vs_loss}.
Across all three tasks and all three loss functions, MSE, BCE, and MCE, \VPBoost attains the lowest final loss and strongest test metrics among all boosting methods, demonstrating robustness to the choice of loss without modification.
On Tasks~1 and~2, all VP variants consistently outperform \GDBoost, confirming that variable projection provides a reliable improvement over standard gradient descent weak learners.
Task~3 is more challenging due to rank-deficiency of the MCE Hessian (discussed below), but \VPBoost still achieves the best performance among the boosting methods.
We also compare against full NN baselines, discussed separately at the end of this section.

\begin{figure}[!htp]
\centering

\begin{tikzpicture}

\scriptsize

	\def\resultsDirA{data/toy_regression/ensemble_metrics}
	\def\ymaxA{7e-3}
	\def\yminA{5e-5}
	\def\xminA{0}
	\def\xmaxA{10}
	\def\ylabelA{MSE {\bfseries (Task 1)}}
	
	\def\resultsDirB{data/swiss_roll/ensemble_metrics_summary}
	\def\yminB{3e-2}
	\def\ymaxB{1}
	\def\xminB{0}
	\def\xmaxB{5}
	\def\ylabelB{BCE  {\bfseries (Task 2)}}
	
	\def\resultsDirC{data/level_sets/ensemble_metrics_summary}
	\def\yminC{1e-2}
	\def\ymaxC{2}
	\def\xminC{0}
	\def\xmaxC{10}
	\def\ylabelC{MCE  {\bfseries (Task 3)}}

\begin{groupplot}[group style={group size=2 by 3, 
		ylabels at=edge left, yticklabels at=edge left, 
		xlabels at=edge bottom, 
horizontal sep=0.025\linewidth, vertical sep=0.05\linewidth}, 
scale only axis,
width=0.4\linewidth, height=0.15\linewidth, 		
xlabel={Boosting Iteration}, 
		ylabel style={anchor=north, yshift=0.6cm}, 
ymode=log, 
ticklabel style={/pgf/number format/fixed, /pgf/number format/precision=0}, 
grid=major,
		grid style={dashed, gray!30},  
legend to name=sharedlegend, 
		legend columns=6,
		execute at begin axis={
    			\addlegendimage{style_gb}  \addlegendentry{\GDBoost}
			\addlegendimage{style_vp_at_start}   \addlegendentry{VP@Start}
			\addlegendimage{style_vp_at_end}   \addlegendentry{VP@End}
			\addlegendimage{style_vp_at_start_and_end}   \addlegendentry{VP@Start+End}
			\addlegendimage{style_vp}   \addlegendentry{\VPBoost}
			\addlegendimage{style_xgboost}   \addlegendentry{\XGBoost}
		}, 
		legend style={/tikz/every even column/.append style={column sep=0.05cm}}
		]

	\nextgroupplot[title=Train, 		
		xmin=\xminA, xmax=\xmaxA,
ylabel={\ylabelA}, 
		ymin=\yminA, ymax=\ymaxA]

	\def\dataType{train}
		\foreach \methodType in {gb, vp_at_start, vp_at_end, vp_at_start_and_end, vp}{
      		  	\edef\temp{
				\noexpand\targetplotB{\resultsDirA/train/optimal_\methodType_em_summary.csv}{style_\methodType}
			}\temp
    		}
		
		\pgfplotstableread[col sep=comma]{\resultsDirA/train/optimal_xgboost_em_summary.csv}\xgboosttable
		\pgfplotstablegetrowsof{\xgboosttable}
		\pgfmathtruncatemacro{\xgtrainlastrow}{\pgfplotsretval-1}
		\pgfplotstablegetelem{\xgtrainlastrow}{target_metric_mean}\of{\xgboosttable}
		\edef\xgtrainfinal{\pgfplotsretval}
		
		\addplot [style_xgboost, domain=-1:100, forget plot] {\xgtrainfinal};

\nextgroupplot[title=Validation,  
		xmin=\xminA, xmax=\xmaxA,
ymin=\yminA, ymax=\ymaxA]

	\def\dataType{val}
		\foreach \methodType in {gb, vp_at_start, vp_at_end, vp_at_start_and_end, vp}{
      		  	\edef\temp{
				\noexpand\targetplotB{\resultsDirA/val/optimal_\methodType_em_summary.csv}{style_\methodType}
				\noexpand
			}\temp
    		}
		
		\pgfplotstableread[col sep=comma]{\resultsDirA/val/optimal_xgboost_em_summary.csv}\xgboosttable
		\pgfplotstablegetrowsof{\xgboosttable}
		\pgfmathtruncatemacro{\xgtrainlastrow}{\pgfplotsretval-1}
		\pgfplotstablegetelem{\xgtrainlastrow}{target_metric_mean}\of{\xgboosttable}
		\edef\xgtrainfinal{\pgfplotsretval}
		
		\addplot [style_xgboost, domain=-1:100, forget plot] {\xgtrainfinal};

	\nextgroupplot[xmin=\xminB, xmax=\xmaxB,
ylabel={\ylabelB}, 
		ymin=\yminB, ymax=\ymaxB]
	
	\def\dataType{train}
		\foreach \methodType in {gb, vp_at_start, vp_at_end, vp_at_start_and_end, vp}{
      		  	\edef\temp{
				\noexpand\targetplotB{\resultsDirB/train/optimal_\methodType_em_summary.csv}{style_\methodType}
			}\temp
    		}
		
		\pgfplotstableread[col sep=comma]{\resultsDirB/train/optimal_xgboost_em_summary.csv}\xgboosttable
		\pgfplotstablegetrowsof{\xgboosttable}
		\pgfmathtruncatemacro{\xgtrainlastrow}{\pgfplotsretval-1}
		\pgfplotstablegetelem{\xgtrainlastrow}{target_metric_mean}\of{\xgboosttable}
		\edef\xgtrainfinal{\pgfplotsretval}
		
		\addplot [style_xgboost, domain=-1:100] {\xgtrainfinal};

\nextgroupplot[xmin=\xminB, xmax=\xmaxB,
ymin=\yminB, ymax=\ymaxB]
	
	\def\dataType{val}
		\foreach \methodType in {gb, vp_at_start, vp_at_end, vp_at_start_and_end, vp}{
      		  	\edef\temp{
				\noexpand\targetplotB{\resultsDirB/val/optimal_\methodType_em_summary.csv}{style_\methodType}
				\noexpand
			}\temp
    		}
		
		\pgfplotstableread[col sep=comma]{\resultsDirB/val/optimal_xgboost_em_summary.csv}\xgboosttable
		\pgfplotstablegetrowsof{\xgboosttable}
		\pgfmathtruncatemacro{\xgtrainlastrow}{\pgfplotsretval-1}
		\pgfplotstablegetelem{\xgtrainlastrow}{target_metric_mean}\of{\xgboosttable}
		\edef\xgtrainfinal{\pgfplotsretval}
		
		\addplot [style_xgboost, domain=-1:100] {\xgtrainfinal};

	\nextgroupplot[xmin=\xminC, xmax=\xmaxC,
ylabel={\ylabelC}, 
		ymin=\yminC, ymax=\ymaxC]
	
	\def\dataType{train}
		\foreach \methodType in {gb, vp_at_start, vp_at_end, vp_at_start_and_end, vp}{
      		  	\edef\temp{
				\noexpand\targetplotB{\resultsDirC/train/optimal_\methodType_em_summary.csv}{style_\methodType}
			}\temp
    		}
		
		\pgfplotstableread[col sep=comma]{\resultsDirC/train/optimal_xgboost_em_summary.csv}\xgboosttable
		\pgfplotstablegetrowsof{\xgboosttable}
		\pgfmathtruncatemacro{\xgtrainlastrow}{\pgfplotsretval-1}
		\pgfplotstablegetelem{\xgtrainlastrow}{target_metric_mean}\of{\xgboosttable}
		\edef\xgtrainfinal{\pgfplotsretval}
		
		\addplot [style_xgboost, domain=-1:100] {\xgtrainfinal};

	\nextgroupplot[xmin=\xminC, xmax=\xmaxC,
ymin=\yminC, ymax=\ymaxC]
	
	\def\dataType{val}
		\foreach \methodType in {gb, vp_at_start, vp_at_end, vp_at_start_and_end, vp}{
      		  	\edef\temp{
				\noexpand\targetplotB{\resultsDirC/val/optimal_\methodType_em_summary.csv}{style_\methodType}
				\noexpand
			}\temp
    		}
		
		\pgfplotstableread[col sep=comma]{\resultsDirC/val/optimal_xgboost_em_summary.csv}\xgboosttableCval
		\pgfplotstablegetrowsof{\xgboosttableCval}
		\pgfmathtruncatemacro{\xgtrainlastrowCval}{\pgfplotsretval-1}
		\pgfplotstablegetelem{\xgtrainlastrowCval}{target_metric_mean}\of{\xgboosttableCval}
		\edef\xgtrainfinalCval{\pgfplotsretval}
		
		\addplot [style_xgboost, domain=-1:100] {\xgtrainfinalCval};

\end{groupplot}

\node at ($(group c1r3.south west)!0.5!(group c2r3.south east)$) [anchor=north, yshift=-1cm, xshift=-0.6cm] {\pgfplotslegendfromname{sharedlegend}
};

\end{tikzpicture}

\caption{Comparison of \VPBoost, \VPBoost variants, \GDBoost, and \XGBoost on the
      synthetic 2D tasks across boosting iterations.
      Each plot shows mean performance (solid lines) with standard deviation
      (shaded band) over 10 independent trials.
      The \XGBoost baseline is shown as a horizontal dashed line at its final
      ensemble target metric.
      Across all tasks, \VPBoost demonstrates faster convergence to a lower loss for both training and validation.}
\label{fig:toy_experiment_training}
\end{figure}
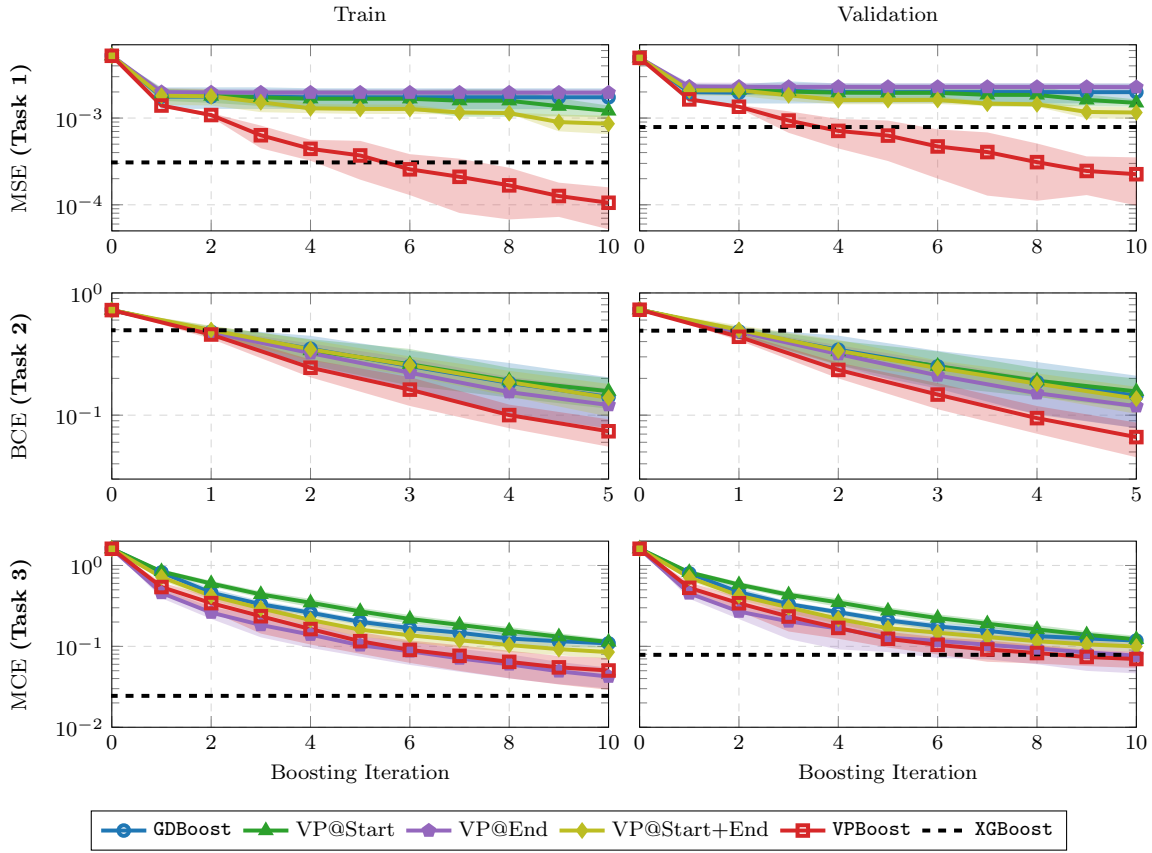

\begin{table}[!htp]
\centering
\footnotesize

\def\resultsDirA{data/toy_regression/ensemble_metrics_summary/test}
    
	\pgfplotstableread[col sep=comma]{\resultsDirA/optimal_gb_em_summary.csv}\loadedtabletestA

\pgfplotsforeachungrouped \method in {vp_at_start, vp_at_end, vp_at_start_and_end, vp, xgboost, full_gb, full_vp} {\pgfplotstableread[col sep=comma]{\resultsDirA/optimal_\method_em_summary.csv}\loadedtabletmpA

        \pgfplotstablevertcat{\loadedtabletestA}{\loadedtabletmpA}
        \pgfplotstableclear{\loadedtabletmpA}
    }

\createcolmean{\loadedtabletestA}{target_metric}{targetmeanstdA}
     \createcolmean{\loadedtabletestA}{skl_r2}{r2meanstdA}
    
\def\resultsDirB{data/swiss_roll/ensemble_metrics_summary/test}
    
	\pgfplotstableread[col sep=comma]{\resultsDirB/optimal_gb_em_summary.csv}\loadedtabletestB

\pgfplotsforeachungrouped \method in {vp_at_start, vp_at_end, vp_at_start_and_end, vp, xgboost, full_gb, full_vp} {\pgfplotstableread[col sep=comma]{\resultsDirB/optimal_\method_em_summary.csv}\loadedtabletmpB

        \pgfplotstablevertcat{\loadedtabletestB}{\loadedtabletmpB}
        \pgfplotstableclear{\loadedtabletmpB}
    }

\createcolmean{\loadedtabletestB}{target_metric}{targetmeanstdB}
    \pgfplotstablecreatecol[copy column from table={\loadedtabletestB}{targetmeanstdB}]{targetmeanstdB}{\loadedtabletestA}

 	\createcolmean{\loadedtabletestB}{skl_auc}{aucB}
        \pgfplotstablecreatecol[copy column from table={\loadedtabletestB}{aucB}]{aucB}{\loadedtabletestA}

 \createcolmean{\loadedtabletestB}{skl_accuracy}{accB}
        \pgfplotstablecreatecol[copy column from table={\loadedtabletestB}{accB}]{accB}{\loadedtabletestA}

\def\resultsDirC{data/level_sets/ensemble_metrics_summary/test}
    
	\pgfplotstableread[col sep=comma]{\resultsDirC/optimal_gb_em_summary.csv}\loadedtabletestC

\pgfplotsforeachungrouped \method in {vp_at_start, vp_at_end, vp_at_start_and_end, vp, xgboost, full_gb, full_vp} {\pgfplotstableread[col sep=comma]{\resultsDirC/optimal_\method_em_summary.csv}\loadedtabletmpC

        \pgfplotstablevertcat{\loadedtabletestC}{\loadedtabletmpC}
        \pgfplotstableclear{\loadedtabletmpC}
    }

\createcolmean{\loadedtabletestC}{target_metric}{targetmeanstdC}
     \pgfplotstablecreatecol[copy column from table={\loadedtabletestC}{targetmeanstdC}]{targetmeanstdC}{\loadedtabletestA}

    \createcolmean{\loadedtabletestC}{skl_auc}{aucC}
        \pgfplotstablecreatecol[copy column from table={\loadedtabletestC}{aucC}]{aucC}{\loadedtabletestA}

	 \createcolmean{\loadedtabletestC}{skl_accuracy}{accC}
        \pgfplotstablecreatecol[copy column from table={\loadedtabletestC}{accC}]{accC}{\loadedtabletestA}

\pgfkeys{
    /pgf/fpu = true,
    /pgf/number format/.cd,
    precision=4,
    fixed,
    fixed zerofill,
1000 sep={.}
}

\pgfplotsset{topentrystyle/.style={/pgfplots/table/@cell content/.add={\cellcolor{lightgray!50}\boldmath}{},}}
	
\def\h{0.5}
    \pgfplotstabletypeset[
    	test_metrics_table_style,
columns={methodnamewfull, targetmeanstdA, r2meanstdA, targetmeanstdB, aucB, accB, targetmeanstdC, aucC, accC},
columns/targetmeanstdA/.style={column type={@{\hspace{\h cm}}c}, string type, column name=\multicolumn{1}{c}{MSE ($\downarrow$)}},
        columns/r2meanstdA/.style={column type=c, string type, column name=\multicolumn{1}{c}{$R^2$ ($\uparrow$)}},
        columns/targetmeanstdB/.style={column type={@{\hspace{\h cm}}c}, string type, column name=\multicolumn{1}{c}{BCE ($\downarrow$)}},
        columns/aucB/.style={column type=c, string type, column name=\multicolumn{1}{c}{AUC ($\uparrow$)}},
        columns/accB/.style={column type=c, string type, column name=\multicolumn{1}{c}{Acc. ($\uparrow$)}},
        columns/targetmeanstdC/.style={column type={@{\hspace{\h cm}}c}, string type, column name=\multicolumn{1}{c}{MCE ($\downarrow$)}},
        columns/aucC/.style={column type=c, string type, column name=\multicolumn{1}{c}{AUC ($\uparrow$)}},
         columns/accC/.style={column type=c, string type, column name=\multicolumn{1}{c}{Acc. ($\uparrow$)}},
every row 4 column 1/.style={
            postproc cell content/.append style={/pgfplots/table/@cell content/.add={\cellcolor{lightgray!50}\boldmath}{}}},
         every row 4 column 2/.style={
            postproc cell content/.append style={/pgfplots/table/@cell content/.add={\cellcolor{lightgray!50}\boldmath}{}}},
     	  every row 4 column 3/.style={
            postproc cell content/.append style={/pgfplots/table/@cell content/.add={\cellcolor{lightgray!50}\boldmath}{}}},
          every row 4 column 4/.style={
            postproc cell content/.append style={/pgfplots/table/@cell content/.add={\cellcolor{lightgray!50}\boldmath}{}}},
every row 4 column 5/.style={
            postproc cell content/.append style={/pgfplots/table/@cell content/.add={\cellcolor{lightgray!50}\boldmath}{}}},
every row 4 column 6/.style={
            postproc cell content/.append style={/pgfplots/table/@cell content/.add={\cellcolor{lightgray!50}\boldmath}{}}},
         every row 4 column 7/.style={
            postproc cell content/.append style={/pgfplots/table/@cell content/.add={\cellcolor{lightgray!50}\boldmath}{}}},
          every row 4 column 8/.style={
            postproc cell content/.append style={/pgfplots/table/@cell content/.add={\cellcolor{lightgray!50}\boldmath}{}}},
every nth row={6}{before row=\hdashline\\[-0.9em]}, 
        every row no 5/.style={after row={\\[-0.9em]}},
every row 7 column 1/.style={
            postproc cell content/.append style={
                /pgfplots/table/@cell content/.add={\cellcolor{lightgray!50}\boldmath}{},
            }
        },
        every row 7 column 2/.style={
            postproc cell content/.append style={
                /pgfplots/table/@cell content/.add={\cellcolor{lightgray!50}\boldmath}{},
            }
        },
        every row 7 column 3/.style={
            postproc cell content/.append style={
                /pgfplots/table/@cell content/.add={\cellcolor{lightgray!50}\boldmath}{},
            }
        }, 
          every row 7 column 4/.style={
            postproc cell content/.append style={
                /pgfplots/table/@cell content/.add={\cellcolor{lightgray!50}\boldmath}{},
            }
        }, 
            every row 7 column 5/.style={
            postproc cell content/.append style={
                /pgfplots/table/@cell content/.add={\cellcolor{lightgray!50}\boldmath}{},
            }
        }, 
            every row 7 column 6/.style={
            postproc cell content/.append style={
                /pgfplots/table/@cell content/.add={\cellcolor{lightgray!50}\boldmath}{},
            }
        }, 
             every row 7 column 7/.style={
            postproc cell content/.append style={
                /pgfplots/table/@cell content/.add={\cellcolor{lightgray!50}\boldmath}{},
            }
        }, 
              every row 7 column 8/.style={
            postproc cell content/.append style={
                /pgfplots/table/@cell content/.add={\cellcolor{lightgray!50}\boldmath}{},
            }
        }, 
every head row/.style={
        before row={\toprule
& \multicolumn{2}{c}{\bfseries Task 1} & \multicolumn{3}{c}{\bfseries Task 2} & \multicolumn{3}{c}{\bfseries Task 3}  \\
            \cmidrule(lr){2-3}  \cmidrule(lr){4-6}  \cmidrule(lr){7-9}
        },
        after row=\midrule },
    ]{\loadedtabletestA}

\caption{Test metrics across three synethatic datasets. All results show the mean over $10$ random featurizer initializations.  The standard derivations of the losses were roughly one order of magnitude smaller than the mean. 
Below the dashed line, we compare to training full NNs with similar capacity to the ensemble. 
The top metrics per column for boosting and full training are bolded and highlighted in gray. 
Over all boosting methods, \VPBoost achieves the lowest loss per task.}
\label{tab:toy_experiment_metrics}

\end{table} 

\begin{figure}[t]
\centering

\begin{subfigure}{\linewidth}
\centering
\includegraphics[width=0.85\linewidth]{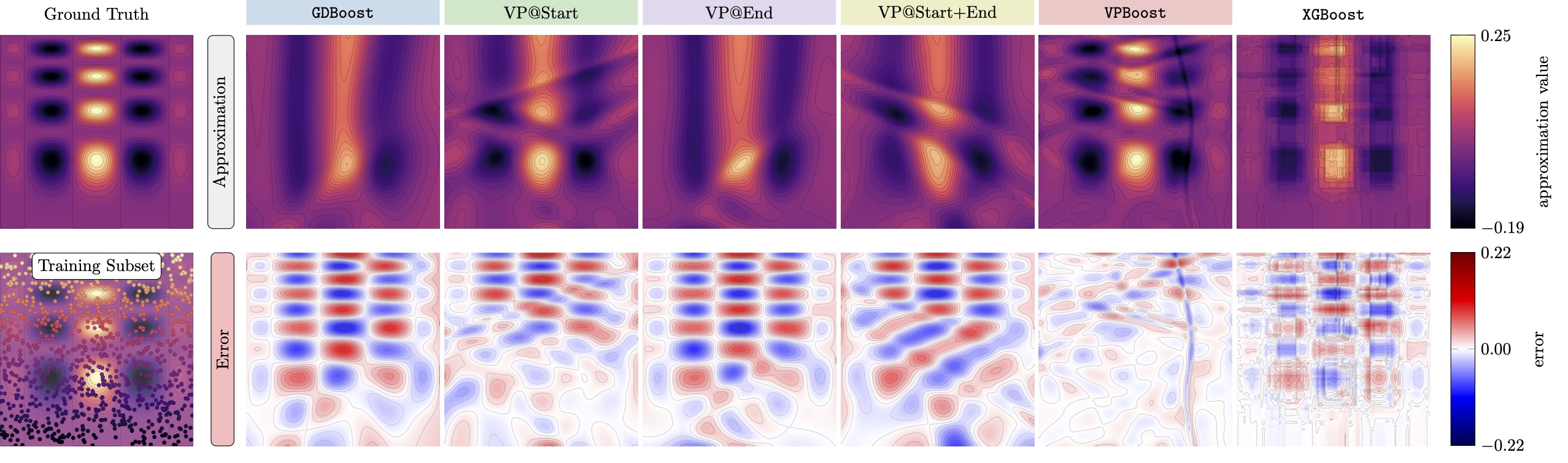}
\caption{{\bfseries (Task 1)} Approximation of 2D smooth function. 
Left column: ground truth and subset of points used during training. 
Top row: approximations from final ensembles for a single seed. 
Bottom row: corresponding $L^2$ error. 
\VPBoost smoothly captures highly oscillatory regions and produces the lowest error.}
\label{fig:toy_regression_task1}
\end{subfigure}

\begin{subfigure}{\linewidth}
\centering
\includegraphics[width=0.85\linewidth]{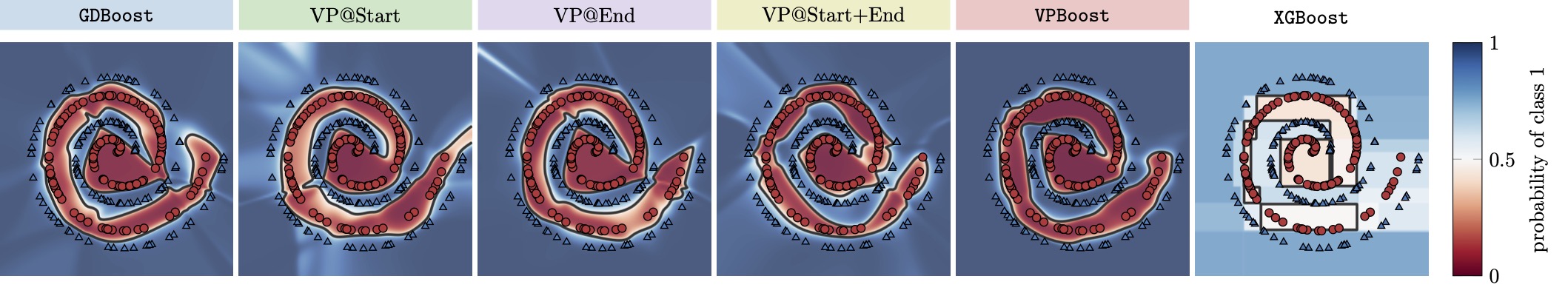}
\caption{{\bfseries (Task 2)} Decision bound of 2D binary classification. 
All NN methods are able to capture the spiral shape well. 
The decision-tree-based \XGBoost struggles due to the lack of regularity of the decision boundary.}
\end{subfigure}

\begin{subfigure}{\linewidth}
\centering
\includegraphics[width=0.85\linewidth]{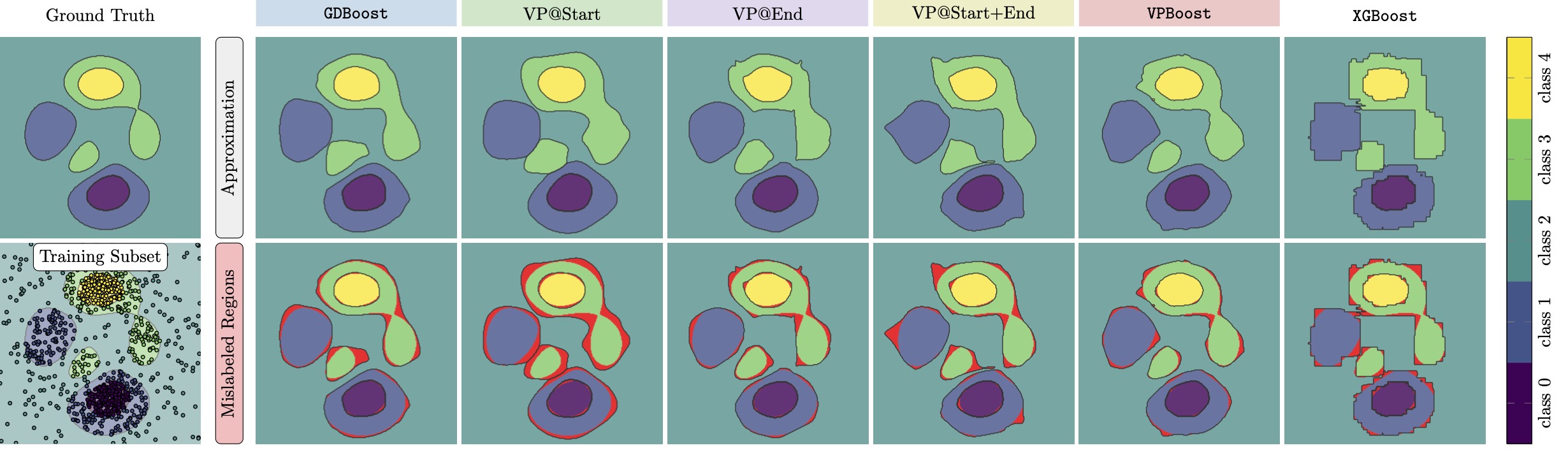} \caption{{\bfseries (Task 3)} Labels of 2D multi-class classification.  
Left column: ground truth and subset of points used during training. 
Top row: label predictions from final ensembles for a single seed. 
Bottom row: predictions with mislabeled regions highlighted in red. 
\VPBoost achieved the lowest misclassification rate. 
}
\end{subfigure}

\caption{Visualization of final boosting approximations across the three synthetic tasks.}
\label{fig:toy_experiment_approx}

\end{figure} 
\paragraph{Sensitivity of VarPro to MCE Loss}  
The use of VarPro on a quadratic approximation of MCE (Task 3) was sensitive to the choice of hyperparameters and featurizer initialization. 
This is because the MCE Hessian and consequently the reduced Hessian, $\caH$, are necessarily rank deficient \citep{kan_lsemink_2023}.  
Thus, without sufficiently large regularization, the optimal linear weights explode in magnitude, leading to an unreasonably large ensemble update. 
The trust-region protects against this case precisely by increasing the regularization parameter (Algorithm~\ref{alg:trust_region_skeleton_vpboost}
Line~\ref{alg_line:model_confidence}).

\paragraph{Comparison to Full NNs}  

To complete a thorough study of \VPBoost on the illustrative examples, we compare to a full neural network baseline (Full NN) and 
its variable projection counterpart (Full NN+VP). 
Each full NN has a comparable parameter count to the corresponding boosted
ensemble and is trained for the same number of epochs used to train a single
weak learner, with hyperparameters tuned using the same validation protocol.
This ensures that the comparison reflects the effect of the boosting strategy
itself rather than differences in model capacity or tuning effort. 
This ensures that the comparison reflects the effect of the boosting 
strategy itself rather than differences in model capacity or tuning effort. 

We stress that the full NN baselines in Table~\ref{tab:toy_experiment_metrics}
 are not directly comparable to the
boosting ensembles: a full NN (with VP) is
trained end-to-end on the true loss, whereas \VPBoost optimizes a quadratic
approximation of the loss at each step.
This distinction is especially consequential for non-quadratic losses: on
Tasks~2 and~3, Full NN+VP must numerically optimize its linear weights at
every gradient evaluation, resulting in
longer training times than \VPBoost
(Figure~\ref{fig:time_vs_loss}).
On Task~1, the MSE loss is quadratic, so Full NN+VP admits a closed-form
solution for its linear weights analogous to \eqref{eq:w_opt_form}, which
moderates its training cost relative to Tasks~2 and~3, as reflected in
Figure~\ref{fig:time_vs_loss}.
We also note that the current \VPBoost implementation is not optimized for
runtime; the timing results in Figure~\ref{fig:time_vs_loss} are therefore a
conservative estimate of the achievable speedup.
\begin{wrapfigure}{r}{0.5\linewidth}
\centering
\begin{tikzpicture}
	\scriptsize 
	\begin{groupplot}[
	group style={group size=3 by 1, 
		ylabels at=edge left, yticklabels at=edge left, 
horizontal sep=0.025\linewidth, vertical sep=0.05\linewidth}, 
scale only axis,
width=0.25\linewidth, height=0.25\linewidth, 		
xlabel={Training Time}, 
		ylabel style={anchor=north, yshift=0.6cm}, 
ymode=log, 
ylabel={Test Loss}, 
grid=major,
		grid style={dashed, gray!30},  
legend to name=sharedlegend, 
		legend columns=2, 
        transpose legend,
		execute at begin axis={
    		\addlegendimage{style_gb, only marks, mark size=3pt}  \addlegendentry{\GDBoost}
			\addlegendimage{style_full_gb, only marks, mark size=3pt}  \addlegendentry{Full NN}
			\addlegendimage{style_vp, only marks, mark size=3pt}  \addlegendentry{\VPBoost}
			\addlegendimage{style_full_vp, only marks, mark size=3pt}  \addlegendentry{Full NN+VP}
			\addlegendimage{style_xgboost, only marks, mark=x, solid, mark size=3pt}   \addlegendentry{\XGBoost}
}, 
		legend style={/tikz/every even column/.append style={column sep=0.25cm}}
	]

	\pgfplotsset{
    m1/.style={mark size=3pt, error bars/.cd, x dir=both, y dir=both, x explicit, y dir=both, y explicit}, 
    m2/.style={mark=x, solid}
    }	\def\resultsDirA{data/toy_regression/ensemble_metrics_summary/test}
		\def\resultsDirB{data/swiss_roll/ensemble_metrics_summary/test}
		\def\resultsDirC{data/level_sets/ensemble_metrics_summary/test}
		
		\nextgroupplot[title={MSE {\bfseries (Task 1)}}]
		\foreach \method in {gb, vp, full_gb, full_vp}{
			\edef\tmp{
				\noexpand\addplot[style_\method, m1]
				table[  x=total_training_time_mean, 
                        x error=total_training_time_std,
                        y=target_metric_mean, 
                        y error=target_metric_std, 
                        col sep=comma] {\resultsDirA/optimal_\method_em_summary.csv};
			}\tmp
		}
		
		\addplot[style_xgboost, m2, m1] 
                table[  x=total_training_time_mean, 
                        x error=total_training_time_std,
                        y=target_metric_mean, 
                        y error=target_metric_std, 
                        col sep=comma] {\resultsDirA/optimal_xgboost_em_summary.csv};
		
		\nextgroupplot[title={BCE {\bfseries (Task 2)}}]
		\foreach \method in {gb, vp, full_gb, full_vp}{
			\edef\tmp{
				\noexpand\addplot[style_\method, m1]
				table[  x=total_training_time_mean, 
                        x error=total_training_time_std,
                        y=target_metric_mean, 
                        y error=target_metric_std, 
                        col sep=comma] {\resultsDirB/optimal_\method_em_summary.csv};
			}\tmp
		}
		
		\addplot[style_xgboost, m2, m1] 
                table[  x=total_training_time_mean, 
                        x error=total_training_time_std,
                        y=target_metric_mean, 
                        y error=target_metric_std, 
                        col sep=comma] {\resultsDirB/optimal_xgboost_em_summary.csv};

        \nextgroupplot[title={MCE {\bfseries (Task 3)}}]
		\foreach \method in {gb, vp, full_gb, full_vp}{
			\edef\tmp{
				\noexpand\addplot[style_\method, m1]
				table[  x=total_training_time_mean, 
                        x error=total_training_time_std,
                        y=target_metric_mean, 
                        y error=target_metric_std, 
                        col sep=comma] {\resultsDirC/optimal_\method_em_summary.csv};
			}\tmp
		}
		
		\addplot[style_xgboost, m2, m1] 
                table[  x=total_training_time_mean, 
                        x error=total_training_time_std,
                        y=target_metric_mean, 
                        y error=target_metric_std, 
                        col sep=comma] {\resultsDirC/optimal_xgboost_em_summary.csv};

	\end{groupplot}
	
\node at ($(group c1r1.south west)!0.5!(group c3r1.south east)$) [anchor=north, yshift=-1cm] {\pgfplotslegendfromname{sharedlegend}
};

\end{tikzpicture} \caption{Mean training time versus test loss per task over $10$ trials. 
Lower left corner is best.
Each point includes error bars showing one standard deviation in each direction. 
}
\label{fig:time_vs_loss}
\end{wrapfigure}
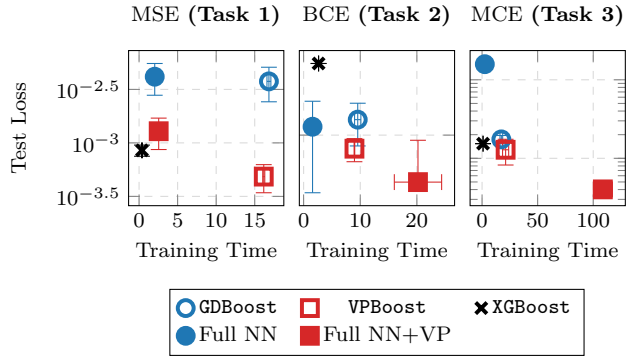

With this caveat in mind, the results in
Table~\ref{tab:toy_experiment_metrics} and Figure~\ref{fig:time_vs_loss}
tell a consistent story. 
Across all tasks, Full NN+VP outperforms Full NN, further evidencing that VarPro is beneficial beyond boosting.
In terms of  test loss, \VPBoost outperforms Full NN on all three tasks and also outperforms Full
NN+VP on Task~1. 
In the latter case, \VPBoost is able to resolve the highly oscillatory structure
of the target function by sequentially fitting the residual, as shown in Figure~\ref{fig:toy_regression_task1}.
On Tasks~2 and~3, Full NN+VP achieves a lower test loss than \VPBoost because applies VarPro directly to the true non-quadratic loss directly.   
In terms of time, \XGBoost and Full NN are consistently the fastest algorithms.\footnote{We would expect \XGBoost to win in terms of time if the Full NN used a larger architecture.} 
The Full NN+VP is among the fastest in Task~1 when the optimal linear weights have a closed-form solution, and significantly slower for the non-quadratic losses. 
\GDBoost and \VPBoost obtain similar speeds, both to each other and across tasks. 
This indicates that optimizing the linear weights with a closed-form solution has minimal overhead (see Section~\ref{subsec:scaling} for more details) and that training time is independent of the loss function. 
However, both sequentially-trained NN boosting strategies are among the slower methods compared.

The trade-off between loss function, test loss,  training time, and reliability is made explicit in Figure~\ref{fig:time_vs_loss}.  
The choice ultimately depends on the  priorities. If speed is the priority, \XGBoost is the clear choice. 
If accuracy is the priority, Full NN+VP may be preferred, though \VPBoost may be the better choice to approximate highly oscillatory functions. 
If speed and smoothness are the goal for a non-quadratic loss, \VPBoost is a strong candidate. 
If convergence is needed, \VPBoost provides the strongest guarantees.

 \subsection{Real-World Benchmarks}
\label{subsec:real_world_benchmarks}

We supplement the toy experiments with three nontrivial real-world benchmarks: MNIST handwritten digit classification~\citep{lecun_mnist_2010}, a scientific machine learning regression task for a convection-diffusion-reaction (CDR) system~\citep{newman_train_2021}, and the Higgs binary classification dataset~\citep{whiteson_higgs_2014}.
Together, these experiments test whether \VPBoost remains effective beyond the small-scale toy examples and across a broader range of featurizer architectures. 

\subsubsection{MNIST}
\label{subsec:mnist}

We consider a convolutional featurizer for the baseline MNIST handwritten digit
classification task \citep{lecun_mnist_2010}. The MNIST dataset consists of
60,000 training and 10,000 test grayscale images of size $28\times 28$. Each
image contains one handwritten digit located roughly in the center of the
domain. The goal is to approximate the labeling function
$y: \Rbb^{28 \times 28} \to \{0,\dots, 9\}$, where $y$ returns the digit
contained in the image. We construct our loss over a $5,000$ image subset of
the entire training dataset, and set aside $10,000$ other training images as a
validation dataset for hyperparameter tuning.

Convolutional neural networks (CNNs) are state-of-the-art for computer vision
tasks \citep{krizhevsky_imagenet_2017}. With $\VPBoost$, we train an ensemble
of weak CNNs to classify MNIST digits. 
We design weak learners to be similar
to the suggested CNN architecture in the Equinox
tutorial\footnote{\url{https://docs.kidger.site/equinox/examples/mnist/}}
\citep{kidger_equinox_2021}, but with approximately three orders of magnitude less capacity. 
In the current implementation, each weak learner uses a stacked convolutional
featurizer mapping to $N_{\rm feat} = 144$ features, as depicted in Figure~\ref{fig:mnist_cnn_weak_learner}.
Under the
dense output map described in
Section~\ref{subsec:implementation_details}, the final linear layer maps these
$144$ features to logits in $\Rbb^{10}$, so $\NW = 10\cdot 144 = 1440$. 
See Figure~\ref{fig:mnist_cnn_weak_learner} for a depiction of the architecture. 
The
convolutional featurizer itself has
$\NTheta = 65 + 17 + 5 = 87$ trainable
parameters, corresponding to the three convolutional layers and their scalar biases.
We build an ensemble of $5$ weak learners, resulting in a total capacity of
$5(\NTheta+\NW)=7635$. This is compared against an \XGBoost ensemble comprised
of $20$ learners with max depth $5$.

	\begin{figure}[!htp]
	\centering
        \scriptsize
		\begin{tikzpicture}[scale=0.75]
			\def\s{2.5}
			\def\b{2.75}

			\pgfmathsetmacro{\X}{0}
			\def\n{3}
			\draw[fill=lightgray!25] (\X, -\n/2, \n/2) -- (\X, -\n/2, -\n/2) -- (\X, \n/2, -\n/2) -- (\X, \n/2, \n/2) -- cycle;  
			\def\p{10}
			\pgfmathtruncatemacro{\q}{\p-1}
			\foreach \i in {1, ..., \q}{
				\draw (\X, -\n/2, \n/2 - \i * \n / \p) -- (\X, \n/2, \n/2 - \i * \n / \p);
				\draw (\X, -\n/2 + \i * \n / \p, \n/2) -- (\X, -\n/2 + \i * \n / \p, -\n/2);
			}
			\def\ii{2} \def\jj{1}
			\coordinate (A) at (\X, -\n/2 + \ii * \n / \p, \n/2 - \jj * \n / \p);
			\coordinate (B) at (\X, -\n/2 + \ii * \n / \p, \n/2 -  \jj * \n / \p - 4 * \n / \p);
			\coordinate (C) at (\X, -\n/2 +  \ii * \n / \p + 4 * \n / \p, \n/2 -  \jj * \n / \p - 4 * \n / \p);
			\coordinate (D) at (\X, -\n/2 +  \ii * \n / \p + 4 * \n / \p, \n/2 - \jj * \n / \p);
			\draw[line width=2pt, color=red, fill=red, fill opacity=0.5, draw opacity=1] (A) -- (B) -- (C) -- (D) -- cycle;
			
			\node at (\X, -\b) (input) {$\Rbb^{28 \times 28}$};
			
			\pgfmathsetmacro{\X}{\X+\s}
			\def\n{2.1}
			\def\p{7}
			\pgfmathtruncatemacro{\q}{\p-1}
			\draw[fill=lightgray!25] (\X, -\n/2, \n/2) -- (\X, -\n/2, -\n/2) -- (\X, \n/2, -\n/2) -- (\X, \n/2, \n/2) -- cycle;  
			\foreach \i in {1, ..., \q}{
				\draw (\X, -\n/2, \n/2 - \i * \n / \p) -- (\X, \n/2, \n/2 - \i * \n / \p);
				\draw (\X, -\n/2 + \i * \n / \p, \n/2) -- (\X, -\n/2 + \i * \n / \p, -\n/2);
			}
			
			\coordinate (a) at (\X, -\n/2 + \ii * \n / \p, \n/2 - \jj * \n / \p);
			\coordinate (b) at (\X, -\n/2 + \ii * \n / \p, \n/2 -  \jj * \n / \p - \n / \p);
			\coordinate (c) at (\X, -\n/2 +  \ii * \n / \p +  \n / \p, \n/2 -  \jj * \n / \p - \n / \p);
			\coordinate (d) at (\X, -\n/2 +  \ii * \n / \p +  \n / \p, \n/2 - \jj * \n / \p);
			
			\fill[red] (a) -- (b) -- (c) -- (d) -- cycle;
			\draw[red] (A) -- (a);
			\draw[red] (B) -- (b);
			\draw[red] (C) -- (c);
			\draw[red] (D) -- (d);

			\def\ii{3} \def\jj{3}
			\coordinate (A) at (\X, -\n/2 + \ii * \n / \p, \n/2 - \jj * \n / \p);
			\coordinate (B) at (\X, -\n/2 + \ii * \n / \p, \n/2 -  \jj * \n / \p - 4 * \n / \p);
			\coordinate (C) at (\X, -\n/2 +  \ii * \n / \p + 4 * \n / \p, \n/2 -  \jj * \n / \p - 4 * \n / \p);
			\coordinate (D) at (\X, -\n/2 +  \ii * \n / \p + 4 * \n / \p, \n/2 - \jj * \n / \p);
			\draw[line width=2pt, color=black, opacity=1] (A) -- (B) -- (C) -- (D) -- cycle;
			
			\node at (\X, -\b) (conv2d) {$\Rbb^{21 \times 21}$};
			
			\pgfmathsetmacro{\X}{\X+\s}
			\def\n{1.2}
			\def\p{4}
			\pgfmathtruncatemacro{\q}{\p-1}
			\draw[fill=lightgray!25] (\X, -\n/2, \n/2) -- (\X, -\n/2, -\n/2) -- (\X, \n/2, -\n/2) -- (\X, \n/2, \n/2) -- cycle; 
			\foreach \i in {1, ..., \q}{
				\draw (\X, -\n/2, \n/2 - \i * \n / \p) -- (\X, \n/2, \n/2 - \i * \n / \p);
				\draw (\X, -\n/2 + \i * \n / \p, \n/2) -- (\X, -\n/2 + \i * \n / \p, -\n/2);
			}
			
			\coordinate (a) at (\X, -\n/2 + \ii * \n / \p, \n/2 - \jj * \n / \p);
			\coordinate (b) at (\X, -\n/2 + \ii * \n / \p, \n/2 -  \jj * \n / \p - \n / \p);
			\coordinate (c) at (\X, -\n/2 +  \ii * \n / \p +  \n / \p, \n/2 -  \jj * \n / \p - \n / \p);
			\coordinate (d) at (\X, -\n/2 +  \ii * \n / \p +  \n / \p, \n/2 - \jj * \n / \p);
			
			\fill[black] (a) -- (b) -- (c) -- (d) -- cycle;
			\draw[black] (A) -- (a);
			\draw[black] (B) -- (b);
			\draw[black] (C) -- (c);
			\draw[black] (D) -- (d);
			
			\draw[rounded corners=3pt, line width=1pt] (-0.75, -2.25) rectangle (\X+0.4, 2.25); 
			\node at (\X+0.5, 0) (tmp0) {};
			\node[anchor=south] at (\X/2-0.25, 2.25) {\scriptsize $\text{Conv2D}_{8\times 8}$ + $\text{MaxPool2D}_{4\times 4}$};
			
			\node at (\X, -\b) (maxpool) {$\Rbb^{18\times 18}$};
			
			\pgfmathsetmacro{\X}{\X+\s}
\node[anchor=center, draw, rotate=90, rounded corners=3pt, line width=1pt, minimum width=3.5cm, outer sep=4pt] (tmp1) at (\X, 0) {\tiny $\text{Conv2D}_{4\times 4}$ + $\text{MaxPool2D}_{2\times 2}$};
			
			\node at (\X, -\b) (convmax1) {$\Rbb^{14\times 14}$};
			
			\pgfmathsetmacro{\X}{\X+\s}
\node[anchor=center, draw, rotate=90, rounded corners=3pt, line width=1pt, minimum width=3.5cm, outer sep=4pt, ] (tmp2) at (\X, 0) {\tiny $\text{Conv2D}_{2\times 2}$ + $\text{MaxPool2D}_{2\times 2}$};
			
			\node at (\X, -\b) (convmax2) {$\Rbb^{12\times 12}$};
			
			\pgfmathsetmacro{\X}{\X+0.5}
			\draw[->, line width=2pt] (\X, 0) --  node[midway, above] {\scriptsize vectorize} (\X+\s - 0.25, 0);

			\pgfmathsetmacro{\X}{\X+\s}
			
			\draw[fill=lightgray!25, rounded corners=1pt] (\X-0.1, 0.3/2*8+0.05) rectangle (\X+0.1, -0.3/2*8-0.05);
			\foreach \i in {1, ..., 16}{
				\draw[fill=white] (\X, 0.3/2*8-\i*0.3/2+0.3/4) circle (0.05);
				\node[outer sep=0pt, inner sep=0pt]  at (\X, 0.3/2*8-\i*0.3/2+0.3/4)  (q\i) {};
			}

			\node at (\X, -\b) (vectorized) {$\Rbb^{144}$};
			
			\pgfmathsetmacro{\X}{\X+\s/2}

			\draw[fill=lightgray!25, rounded corners=1pt] (\X-0.1, 0.3/2*5+0.05) rectangle (\X+0.1, -0.3/2*5-0.05);
			\foreach \i in {1, ..., 10}{
				\draw[fill=white] (\X, 0.3/2*5-\i*0.3/2+0.3/4) circle (0.05);
				\node[outer sep=0pt,  inner sep=0pt]  at (\X, 0.3/2*5-\i*0.3/2+0.3/4) (s\i) {};
			}
			
			\node at (\X, -\b) (linear2) {$\Rbb^{10}$};
			
			\foreach \i in {1, ..., 16}{
				\foreach \j in {1, ..., 10}{
					\draw[blue] (q\i) -- (s\j);
				}
			}

			\draw[->, line width=2pt] (tmp1.south) -- (tmp2.north);
			\draw[->, line width=2pt] (tmp0) -- (tmp1.north);
			
			\draw[->] (input) -- (conv2d);
			\draw[->] (conv2d) -- (maxpool);
			\draw[->] (maxpool) -- (convmax1);
			\draw[->] (convmax1) -- (convmax2);
			\draw[->] (convmax2) -- (vectorized);
			\draw[->, thick, blue] (vectorized) -- (linear2);

		\end{tikzpicture}
	\caption{MNIST CNN weak learner. 
        The featurizer consists of three consecutive blocks, each
performing a convolution, max pool, and a ReLU activation.
Each block respectively applies a single 2D convolutional kernel of sizes $(8,4,2)$
and 2D max pooling of sizes $(4,2,2)$.
All convolutions and pooling operators use stride $1$ with zero padding. 
The vectorized output of the convolutional featurizer is of dimension $\NFeat = 144$.}
	\label{fig:mnist_cnn_weak_learner}
	\end{figure}
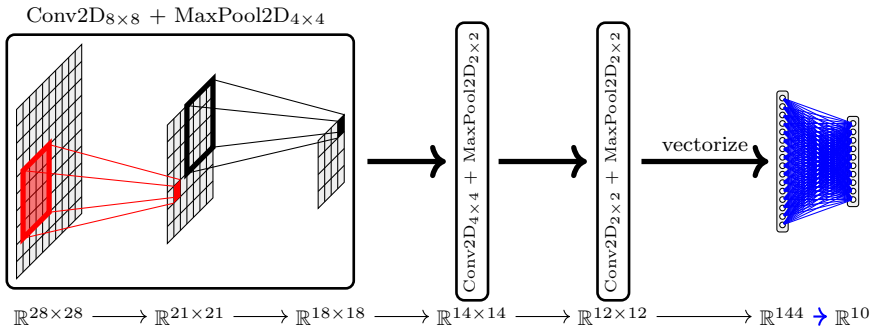 

Figure~\ref{fig:05_04_mnist} shows that \VPBoost transfers cleanly to this
convolutional setting. In particular, the full \VPBoost variant outperforms
all others on all collected test metrics, achieving lower MCE together with
higher AUC and accuracy. This is notable as \XGBoost is a strong baseline on
structured prediction problems, whereas here \VPBoost is built from small CNN
weak learners whose feature maps are learned directly from the image domain.
These results therefore provide strong empirical evidence that the VP
formulation is not tied to a particular featurizer class. The same boosting
mechanism remains effective when the weak learner architecture is changed from
the MLP and residual settings considered elsewhere in the experiments to a
convolutional architecture tailored to vision data. The train and validation
curves also suggest a qualitative advantage in generalization. While \XGBoost
achieves the best training metrics, the gap to its validation performance
remains visibly larger than for \VPBoost. By contrast, \VPBoost degrades more
gracefully, which may reflect the inductive bias of the neural featurizer and
the extra flexibility of learning feature representations jointly with the
linear output map at each boosting step. Recall, all models are trained on the
same $5,000$ image subsample of the entire $50,000$ image MNIST dataset.

\begin{figure}[!htp]
    \centering
\def\resultsDir{data/mnist/ensemble_metrics_summary}
        \def\ymin{5e-3}
        \def\ymax{4e0}
        \def\xmin{0}
        \def\xmax{5}
        \def\ylabel{MCE}

\begin{tikzpicture}

\scriptsize 
\begin{groupplot}[group style={group size=2 by 1, 
		ylabels at=edge left, yticklabels at=edge left, 
		xlabels at=edge bottom, xticklabels at=edge bottom, 
		horizontal sep=0.025\linewidth}, 
scale only axis,
width=0.4\linewidth, height=0.2\linewidth, 		
xlabel={Boosting Iteration}, 
		xmin=\xmin, xmax=\xmax,
ylabel={\ylabel}, 
		ymin=\ymin, ymax=\ymax, 
		ymode=log, 
ticklabel style={/pgf/number format/fixed, /pgf/number format/precision=0}, 
grid=major,
		grid style={dashed, gray!30},  
legend to name=sharedlegend, 
		legend columns=6,
		execute at begin axis={
    			\addlegendimage{style_gb}  \addlegendentry{\GDBoost}
    			\addlegendimage{style_vp}   \addlegendentry{\VPBoost}
			\addlegendimage{style_vp_at_start}   \addlegendentry{VP@Start}
			\addlegendimage{style_vp_at_end}   \addlegendentry{VP@End}
			\addlegendimage{style_vp_at_start_and_end}   \addlegendentry{VP@Start+End}
			\addlegendimage{style_xgboost}   \addlegendentry{\XGBoost}
		}, 
		legend style={/tikz/every even column/.append style={column sep=0.25cm}}
		]

\nextgroupplot[title=Train]
	\def\dataType{train}
		\foreach \methodType in {gb, vp, vp_at_start, vp_at_end, vp_at_start_and_end}{
      		  	\edef\temp{
				\noexpand\targetplotB{\resultsDir/\dataType/optimal_\methodType_em_summary.csv}{style_\methodType}
			}\temp
    		}
		
		\pgfplotstableread[col sep=comma]{\resultsDir/\dataType/optimal_xgboost_em_summary.csv}\xgboosttable
		\pgfplotstablegetrowsof{\xgboosttable}
		\pgfmathtruncatemacro{\xgtrainlastrow}{\pgfplotsretval-1}
		\pgfplotstablegetelem{\xgtrainlastrow}{target_metric_mean}\of{\xgboosttable}
		\edef\xgtrainfinal{\pgfplotsretval}
		
		\addplot [style_xgboost, domain=-1:100] {\xgtrainfinal};

	\nextgroupplot[title=Validation]
	\def\dataType{val}
		\foreach \methodType in {gb, vp, vp_at_start, vp_at_end, vp_at_start_and_end}{
      		  	\edef\temp{
				\noexpand\targetplotB{\resultsDir/\dataType/optimal_\methodType_em_summary.csv}{style_\methodType}
				\noexpand
			}\temp
    		}
		
		\pgfplotstableread[col sep=comma]{\resultsDir/\dataType/optimal_xgboost_em_summary.csv}\xgboosttable
		\pgfplotstablegetrowsof{\xgboosttable}
		\pgfmathtruncatemacro{\xgtrainlastrow}{\pgfplotsretval-1}
		\pgfplotstablegetelem{\xgtrainlastrow}{target_metric_mean}\of{\xgboosttable}
		\edef\xgtrainfinal{\pgfplotsretval}
		
		\addplot [style_xgboost, domain=-1:100] {\xgtrainfinal};

\end{groupplot}

\node at ($(group c1r1.south west)!0.5!(group c2r1.south east)$) [anchor=north, yshift=-1cm, xshift=-0.6cm] {\pgfplotslegendfromname{sharedlegend}
};

\end{tikzpicture}

    \caption{
      Comparison of \VPBoost variants, \GDBoost, and \XGBoost on MNIST across
      5 boosting iterations. The top panel shows training and validation
      performance, and the bottom panel reports the test metrics of the best
      validation-selected ensemble for each method.
    }
    \label{fig:05_04_mnist}
\end{figure}
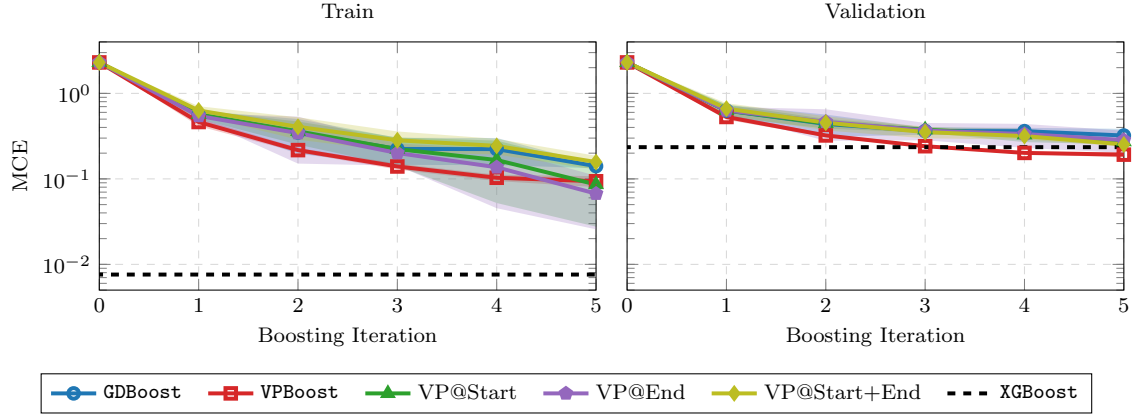

\subsubsection{Convection Diffusion Reaction Surrogate Modeling}
\label{subsec:cdr}
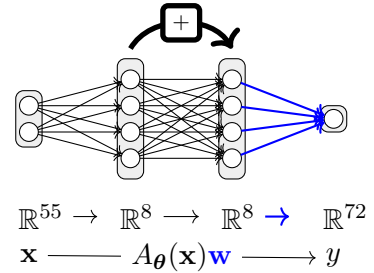
\begin{wrapfigure}{r}{0.32\linewidth}
\centering
    		\begin{tikzpicture}
		\def\s{0.1}
		\def\r{0.25}
		
		\pgfmathsetmacro{\X}{0}
		\pgfmathsetmacro{\Y}{0}
		
		\draw[rounded corners=3pt, fill=lightgray!25] (\X-0.5*\r-0.5*\s, \Y-1.5*\r-0*\s) rectangle (\X+0.5*\r+0.5*\s, \Y+1.5*\r+0*\s);
		
		\foreach[count=\a] \i in {-0.5, 0.5}{
			\node[circle, minimum width=\r cm, draw, inner sep=0pt, outer sep=0pt, fill=white] (n\a) at (\X,\Y-\i*\r-\i*\s) {};
		}

		\node at (\X, \Y-5*\r-0.5*\s) (rr) {$\phantom{{}^{55}}\Rbb^{55}$}; 
		\node[below=0.0cm of rr.south] (xx) {$\bfx$};
		
		\pgfmathsetmacro{\X}{\X+\r+\s+1}
		
		\draw[rounded corners=3pt, fill=lightgray!25] (\X-0.5*\r-0.5*\s, \Y-3*\r-0.5*\s) rectangle (\X+0.5*\r+0.5*\s, \Y+3*\r+0.5*\s);
		
		\foreach[count=\a] \i in {-1.5, -0.5, ..., 1.5}{
			\node[circle, minimum width=\r cm, draw, inner sep=0pt, outer sep=0pt, fill=white] (o\a) at (\X,\Y-\i*\r-\i*\s) {};
		}	
		
		\node at (\X, \Y+3*\r+0.5*\s) (top1) {};
		\node at (\X, \Y-5*\r-0.5*\s) (rtmp) {$\phantom{{}^8}\Rbb^8$};
		
		\draw[->] (rr) -- (rtmp);
		
		\pgfmathsetmacro{\X}{\X+\r+\s+1}
		
		\draw[rounded corners=3pt, fill=lightgray!25] (\X-0.5*\r-0.5*\s, \Y-3*\r-0.5*\s) rectangle (\X+0.5*\r+0.5*\s, \Y+3*\r+0.5*\s);
		\foreach[count=\a] \i in {-1.5, -0.5, ..., 1.5}{
			\node[circle, minimum width=\r cm, draw, inner sep=0pt, outer sep=0pt, fill=white] (p\a) at (\X,\Y-\i*\r-\i*\s) {};
		}	
		
		\node at (\X, \Y+3*\r+0.5*\s) (top2) {};
		
		\node at (\X, \Y-5*\r-0.5*\s) (rtmp2) {$\phantom{{}^8}\Rbb^8$};
		\draw[->] (rtmp) -- (rtmp2);
			
		\pgfmathsetmacro{\X}{\X+\r+\s+1}
		
		\draw[rounded corners=3pt, fill=lightgray!25] (\X-0.5*\r-0.5*\s, \Y-0.5*\r-0.5*\s) rectangle (\X+0.5*\r+0.5*\s, \Y+0.5*\r+0.5*\s);

		\foreach[count=\a] \i in {0}{
			\node[circle, minimum width=\r cm, draw, inner sep=0pt, outer sep=0pt, fill=white] (q\a) at (\X,\Y-\i*\r-\i*\s) {};
		}

		\node at (\X, \Y-5*\r-0.5*\s) (rr)  {$\phantom{{}^{72}}\Rbb^{72}$};
		\draw[->, thick, blue] (rtmp2) -- (rr);
		
		\node[below=0.0cm of rr.south] (yy) {$y$};
		\draw[->] (xx) -- node[midway, fill=white] {$A_{\bftheta}(\bfx) {\color{blue} \bfw}$} (yy);
		
		\foreach \a in {1, ..., 4}{
			\draw[->, black] (n1) -- (o\a);
			\draw[->, black] (n2) -- (o\a);
			\draw[->, thick, blue] (p\a) -- (q1);
			
			\foreach \b in {1, ..., 4}{
				\draw[->, black] (o\a) -- (p\b);
			}
		}
		
		\draw[->, line width=2] (top1.north) to[out=60, in=120, looseness=1] node[midway, fill=white, draw, rounded corners=2pt] {\scriptsize $+$}(top2.north);

	\end{tikzpicture}     \caption{Small ResNet weak learner with a single residual block.
    The upper arrow indicates the residual connection and
    the thicker blue arrow indicates the final linear mapping.}
    \label{fig:05_05_highdim_cdr_weak_learner}
\end{wrapfigure}

The CDR dataset probes a different aspect of the method: here the goal is not
only to handle a moderately large input space, but also to predict a
higher-dimensional output field. This example also provides a testbed for
moving beyond plain MLP weak learners. Prior work on this CDR benchmark uses
variable projection to train a single neural ODE model
\citep{newman_train_2021}. Here, for the same problem we use a similar
architecture, namely a small ResNet featurizer \citep{he_deep_2016}, to better
understand \VPBoost with non-MLP weak learners.

In the notation of \Cref{sec:varpro}, each CDR weak learner has the separable
form $h(\bfx)=\FeaturizerMat_{\bftheta}(\bfx)\bfw$, or equivalently
$h(\bfx)=\bfW\Featurizer_{\bftheta}(\bfx)$ with $\bfW\in\Rbb^{72\times 8}$.
The nonlinear featurizer first projects the $55$-dimensional input to width
$8$, applies a single width-$8$ residual block, and outputs an $8$-dimensional
feature vector; the linear map then lifts these features to the
$72$-dimensional target. The capacity per weak learner is $\NTheta = 520$ and
$\NW = 648$. Each boosting round introduces a newly initialized weak ResNet of
this form. We construct an ensemble of $M=30$ weak learners, resulting in an
ensemble capacity of $35{,}040$. This gives a simple residual analogue of a
neural ODE while preserving the separable VP structure used throughout the
paper.

In \Cref{fig:05_05_highdim_cdr}, the validation curves show that \VPBoost
reduces the regression error much more aggressively than \GDBoost across the
full 30 boosting rounds, and the final gap on the test set is substantial. The
gap to the \XGBoost baseline, comprised of $18$ learners of a max-depth of
$5$, is even larger. Among the methods considered here, these results suggest
that variable projection with neural networks is particularly effective when
each weak learner must fit a richer multivariate response, even when the
featurizer is a small residual network rather than the MLPs used in the
earlier experiments.

\begin{figure}[!htp]
    \centering
    \def\resultsDir{data/cdr/ensemble_metrics_summary}
    \def\ymin{2e-2}
    \def\ymax{32}
    \def\xmin{0}
    \def\xmax{30}
    \def\ylabel{MSE}

\begin{tikzpicture}

\scriptsize 
\begin{groupplot}[group style={group size=2 by 1, 
		ylabels at=edge left, yticklabels at=edge left, 
		xlabels at=edge bottom, xticklabels at=edge bottom, 
		horizontal sep=0.025\linewidth}, 
scale only axis,
width=0.4\linewidth, height=0.2\linewidth, 		
xlabel={Boosting Iteration}, 
		xmin=\xmin, xmax=\xmax,
ylabel={\ylabel}, 
		ymin=\ymin, ymax=\ymax, 
		ymode=log, 
ticklabel style={/pgf/number format/fixed, /pgf/number format/precision=0}, 
grid=major,
		grid style={dashed, gray!30},  
legend to name=sharedlegend, 
		legend columns=6,
		execute at begin axis={
    			\addlegendimage{style_gb}  \addlegendentry{\GDBoost}
    			\addlegendimage{style_vp}   \addlegendentry{\VPBoost}
			\addlegendimage{style_vp_at_start}   \addlegendentry{VP@Start}
			\addlegendimage{style_vp_at_end}   \addlegendentry{VP@End}
			\addlegendimage{style_vp_at_start_and_end}   \addlegendentry{VP@Start+End}
			\addlegendimage{style_xgboost}   \addlegendentry{\XGBoost}
		}, 
		legend style={/tikz/every even column/.append style={column sep=0.25cm}}
		]

\nextgroupplot[title=Train]
	\def\dataType{train}
		\foreach \methodType in {gb, vp, vp_at_start, vp_at_end, vp_at_start_and_end}{
      		  	\edef\temp{
				\noexpand\targetplotB{\resultsDir/\dataType/optimal_\methodType_em_summary.csv}{style_\methodType}
			}\temp
    		}
		
		\pgfplotstableread[col sep=comma]{\resultsDir/\dataType/optimal_xgboost_em_summary.csv}\xgboosttable
		\pgfplotstablegetrowsof{\xgboosttable}
		\pgfmathtruncatemacro{\xgtrainlastrow}{\pgfplotsretval-1}
		\pgfplotstablegetelem{\xgtrainlastrow}{target_metric_mean}\of{\xgboosttable}
		\edef\xgtrainfinal{\pgfplotsretval}
		
		\addplot [style_xgboost, domain=-1:100] {\xgtrainfinal};

	\nextgroupplot[title=Validation]
	\def\dataType{val}
		\foreach \methodType in {gb, vp, vp_at_start, vp_at_end, vp_at_start_and_end}{
      		  	\edef\temp{
				\noexpand\targetplotB{\resultsDir/\dataType/optimal_\methodType_em_summary.csv}{style_\methodType}
				\noexpand
			}\temp
    		}
		
		\pgfplotstableread[col sep=comma]{\resultsDir/\dataType/optimal_xgboost_em_summary.csv}\xgboosttable
		\pgfplotstablegetrowsof{\xgboosttable}
		\pgfmathtruncatemacro{\xgtrainlastrow}{\pgfplotsretval-1}
		\pgfplotstablegetelem{\xgtrainlastrow}{target_metric_mean}\of{\xgboosttable}
		\edef\xgtrainfinal{\pgfplotsretval}
		
		\addplot [style_xgboost, domain=-1:100] {\xgtrainfinal};

\end{groupplot}

\node at ($(group c1r1.south west)!0.5!(group c2r1.south east)$) [anchor=north, yshift=-1cm, xshift=-0.6cm] {\pgfplotslegendfromname{sharedlegend}
};

\end{tikzpicture}

     \caption{
      Comparison of \VPBoost, \VPBoost variants, and \GDBoost on the CDR
      regression task across 30 boosting iterations. Each plot shows mean
      performance (solid lines) with standard deviation (shaded band) over 3
      independent trials. The \XGBoost baseline is shown as a horizontal dashed
      line at its final ensemble target metric.
    }
    \label{fig:05_05_highdim_cdr}
\end{figure}
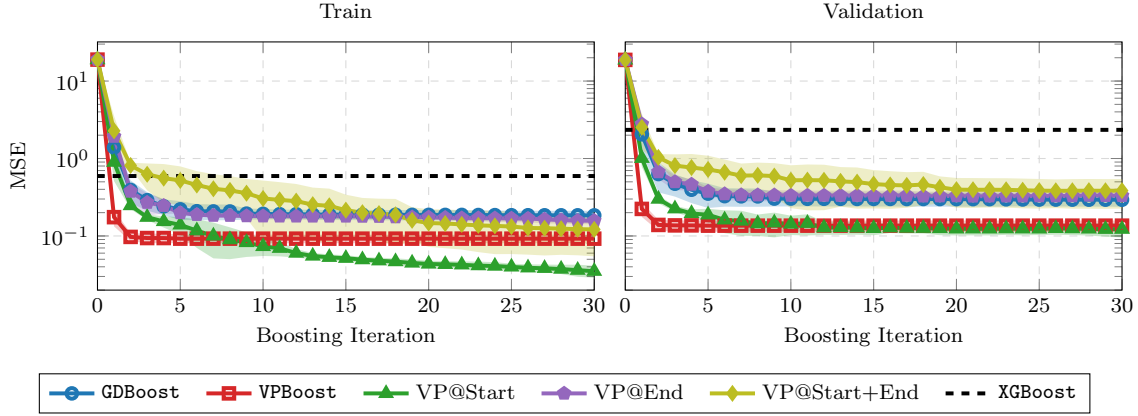

\subsubsection{Higgs Boson Classification}
\label{subsec:higgs}

The Higgs benchmark is a standard large-scale tabular classification task with
28 input features and a conventional held-out test set consisting of the final
500{,}000 examples \citep{whiteson_higgs_2014}. This setting is especially
favorable to \XGBoost-style methods, making it a useful stress test. Each
Higgs weak learner featurizer consists of $2$ hidden layers of width $16$ and
outputs a $16$-dimensional feature vector, followed by a final affine map to a
scalar logit. The capacity per weak learner is $\NTheta = 1008$ and
$\NW = 17$. We construct an ensemble of $M=25$ weak learners, resulting in an
ensemble capacity of $25{,}625$. This is paired with an \XGBoost model
comprised of 200 learners with maximum depth 7, for a capacity of $25{,}400$.

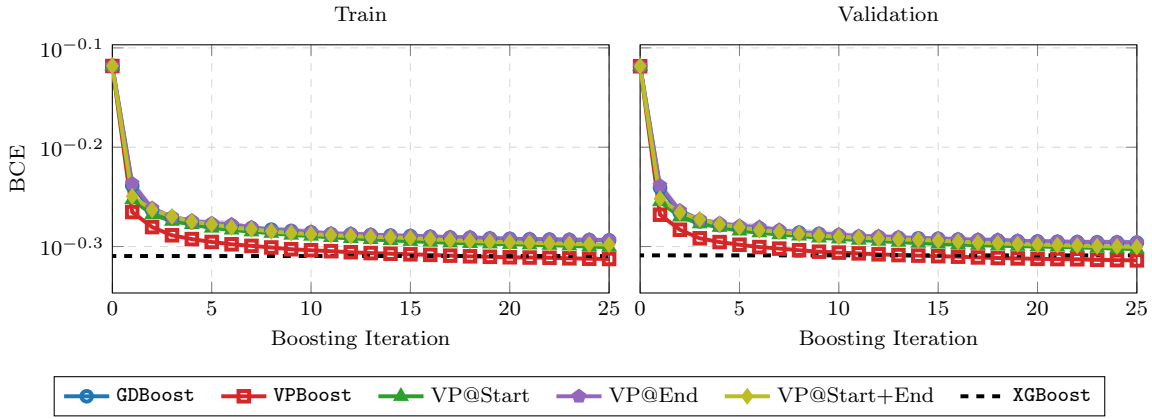
\begin{figure}[!htp]
    \centering
    \def\resultsDir{data/higgs/ensemble_metrics_summary}
    \def\ymin{4.5e-1}
    \def\ymax{8e-1}
    \def\xmin{0}
    \def\xmax{25}
    \def\ylabel{BCE}

\begin{tikzpicture}

\scriptsize 
\begin{groupplot}[group style={group size=2 by 1, 
		ylabels at=edge left, yticklabels at=edge left, 
		xlabels at=edge bottom, xticklabels at=edge bottom, 
		horizontal sep=0.025\linewidth}, 
scale only axis,
width=0.4\linewidth, height=0.2\linewidth, 		
xlabel={Boosting Iteration}, 
		xmin=\xmin, xmax=\xmax,
ylabel={\ylabel}, 
		ymin=\ymin, ymax=\ymax, 
		ymode=log, 
ticklabel style={/pgf/number format/fixed, /pgf/number format/precision=0}, 
grid=major,
		grid style={dashed, gray!30},  
legend to name=sharedlegend, 
		legend columns=6,
		execute at begin axis={
    			\addlegendimage{style_gb}  \addlegendentry{\GDBoost}
    			\addlegendimage{style_vp}   \addlegendentry{\VPBoost}
			\addlegendimage{style_vp_at_start}   \addlegendentry{VP@Start}
			\addlegendimage{style_vp_at_end}   \addlegendentry{VP@End}
			\addlegendimage{style_vp_at_start_and_end}   \addlegendentry{VP@Start+End}
			\addlegendimage{style_xgboost}   \addlegendentry{\XGBoost}
		}, 
		legend style={/tikz/every even column/.append style={column sep=0.25cm}}
		]

\nextgroupplot[title=Train]
	\def\dataType{train}
		\foreach \methodType in {gb, vp, vp_at_start, vp_at_end, vp_at_start_and_end}{
      		  	\edef\temp{
				\noexpand\targetplotB{\resultsDir/\dataType/optimal_\methodType_em_summary.csv}{style_\methodType}
			}\temp
    		}
		
		\pgfplotstableread[col sep=comma]{\resultsDir/\dataType/optimal_xgboost_em_summary.csv}\xgboosttable
		\pgfplotstablegetrowsof{\xgboosttable}
		\pgfmathtruncatemacro{\xgtrainlastrow}{\pgfplotsretval-1}
		\pgfplotstablegetelem{\xgtrainlastrow}{target_metric_mean}\of{\xgboosttable}
		\edef\xgtrainfinal{\pgfplotsretval}
		
		\addplot [style_xgboost, domain=-1:100] {\xgtrainfinal};

	\nextgroupplot[title=Validation]
	\def\dataType{val}
		\foreach \methodType in {gb, vp, vp_at_start, vp_at_end, vp_at_start_and_end}{
      		  	\edef\temp{
				\noexpand\targetplotB{\resultsDir/\dataType/optimal_\methodType_em_summary.csv}{style_\methodType}
				\noexpand
			}\temp
    		}
		
		\pgfplotstableread[col sep=comma]{\resultsDir/\dataType/optimal_xgboost_em_summary.csv}\xgboosttable
		\pgfplotstablegetrowsof{\xgboosttable}
		\pgfmathtruncatemacro{\xgtrainlastrow}{\pgfplotsretval-1}
		\pgfplotstablegetelem{\xgtrainlastrow}{target_metric_mean}\of{\xgboosttable}
		\edef\xgtrainfinal{\pgfplotsretval}
		
		\addplot [style_xgboost, domain=-1:100] {\xgtrainfinal};

\end{groupplot}

\node at ($(group c1r1.south west)!0.5!(group c2r1.south east)$) [anchor=north, yshift=-1cm, xshift=-0.6cm] {\pgfplotslegendfromname{sharedlegend}
};

\end{tikzpicture}

     \caption{
      Comparison of \VPBoost, \VPBoost variants, and \GDBoost on the Higgs
      classification task across 25 boosting iterations. Each plot shows mean
      performance (solid lines) with standard deviation (shaded band) over 3
      independent trials. The \XGBoost baseline is shown as a horizontal dashed
      line at its final ensemble target metric.
    }
    \label{fig:05_05_highdim_higgs}
\end{figure}

As shown in \Cref{fig:05_05_highdim_higgs}, \VPBoost continues to improve
throughout the 25 boosting iterations and attains the strongest final test
performance among the methods compared here. The final result improves upon
both \GDBoost and the best alternative VP variant. The comparison is also
favorable relative to our constrained \XGBoost baseline.

\begin{table}[t]
\centering
\footnotesize

\def\resultsDirA{data/cdr/ensemble_metrics_summary_final/test}
    
	\pgfplotstableread[col sep=comma]{\resultsDirA/optimal_gb_em_summary.csv}\loadedtabletestA

\pgfplotsforeachungrouped \method in {vp_at_start, vp_at_end, vp_at_start_and_end, vp, xgboost} {\pgfplotstableread[col sep=comma]{\resultsDirA/optimal_\method_em_summary.csv}\loadedtabletmpA

        \pgfplotstablevertcat{\loadedtabletestA}{\loadedtabletmpA}
        \pgfplotstableclear{\loadedtabletmpA}
    }

\createcolmeanstd{\loadedtabletestA}{target_metric}{targetmeanstdA}
     \createcolmean{\loadedtabletestA}{skl_r2}{r2meanstdA}
    
\def\resultsDirB{data/mnist/ensemble_metrics_summary/test}
    
	\pgfplotstableread[col sep=comma]{\resultsDirB/optimal_gb_em_summary.csv}\loadedtabletestB

\pgfplotsforeachungrouped \method in {vp_at_start, vp_at_end, vp_at_start_and_end, vp, xgboost} {\pgfplotstableread[col sep=comma]{\resultsDirB/optimal_\method_em_summary.csv}\loadedtabletmpB

        \pgfplotstablevertcat{\loadedtabletestB}{\loadedtabletmpB}
        \pgfplotstableclear{\loadedtabletmpB}
    }

\createcolmeanstd{\loadedtabletestB}{target_metric}{targetmeanstdB}
    \pgfplotstablecreatecol[copy column from table={\loadedtabletestB}{targetmeanstdB}]{targetmeanstdB}{\loadedtabletestA}

 	\createcolmean{\loadedtabletestB}{skl_auc}{aucB}
        \pgfplotstablecreatecol[copy column from table={\loadedtabletestB}{aucB}]{aucB}{\loadedtabletestA}

 \createcolmean{\loadedtabletestB}{skl_accuracy}{accB}
        \pgfplotstablecreatecol[copy column from table={\loadedtabletestB}{accB}]{accB}{\loadedtabletestA}

\def\resultsDirC{data/higgs/ensemble_metrics_summary/test}
    
	\pgfplotstableread[col sep=comma]{\resultsDirC/optimal_gb_em_summary.csv}\loadedtabletestC

\pgfplotsforeachungrouped \method in {vp_at_start, vp_at_end, vp_at_start_and_end, vp, xgboost} {\pgfplotstableread[col sep=comma]{\resultsDirC/optimal_\method_em_summary.csv}\loadedtabletmpC

        \pgfplotstablevertcat{\loadedtabletestC}{\loadedtabletmpC}
        \pgfplotstableclear{\loadedtabletmpC}
    }

\createcolmeanstd{\loadedtabletestC}{target_metric}{targetmeanstdC}
     \pgfplotstablecreatecol[copy column from table={\loadedtabletestC}{targetmeanstdC}]{targetmeanstdC}{\loadedtabletestA}

    \createcolmean{\loadedtabletestC}{skl_auc}{aucC}
        \pgfplotstablecreatecol[copy column from table={\loadedtabletestC}{aucC}]{aucC}{\loadedtabletestA}

	 \createcolmean{\loadedtabletestC}{skl_accuracy}{accC}
        \pgfplotstablecreatecol[copy column from table={\loadedtabletestC}{accC}]{accC}{\loadedtabletestA}

\pgfkeys{
    /pgf/fpu = true,
    /pgf/number format/.cd,
    precision=4,
    fixed,
    fixed zerofill,
1000 sep={.}
}

\pgfplotsset{topentrystyle/.style={/pgfplots/table/@cell content/.add={\cellcolor{lightgray!50}\boldmath}{},}}
	
\def\h{0.5}
    \pgfplotstabletypeset[
    	test_metrics_table_style,
columns={methodname,  targetmeanstdB, aucB,  targetmeanstdA, r2meanstdA, targetmeanstdC, aucC},
columns/targetmeanstdA/.style={column type={@{\hspace{\h cm}}c}, string type, column name=\multicolumn{1}{c}{MSE ($\downarrow$)}},
        columns/r2meanstdA/.style={column type=c, string type, column name=\multicolumn{1}{c}{$R^2$ ($\uparrow$)}},
        columns/targetmeanstdB/.style={column type={@{\hspace{\h cm}}c}, string type, column name=\multicolumn{1}{c}{MCE ($\downarrow$)}},
        columns/aucB/.style={column type=c, string type, column name=\multicolumn{1}{c}{AUC ($\uparrow$)}},
        columns/accB/.style={column type=c, string type, column name=\multicolumn{1}{c}{Acc. ($\uparrow$)}},
        columns/targetmeanstdC/.style={column type={@{\hspace{\h cm}}c}, string type, column name=\multicolumn{1}{c}{MCE ($\downarrow$)}},
        columns/aucC/.style={column type=c, string type, column name=\multicolumn{1}{c}{AUC ($\uparrow$)}},
         columns/accC/.style={column type=c, string type, column name=\multicolumn{1}{c}{Acc. ($\uparrow$)}},
every row 4 column 1/.style={
            postproc cell content/.append style={/pgfplots/table/@cell content/.add={\cellcolor{lightgray!50}\boldmath}{}}},
         every row 4 column 2/.style={
            postproc cell content/.append style={/pgfplots/table/@cell content/.add={\cellcolor{lightgray!50}\boldmath}{}}},
     	  every row 4 column 3/.style={
            postproc cell content/.append style={/pgfplots/table/@cell content/.add={\cellcolor{lightgray!50}\boldmath}{}}},
          every row 4 column 4/.style={
            postproc cell content/.append style={/pgfplots/table/@cell content/.add={\cellcolor{lightgray!50}\boldmath}{}}},
every row 4 column 5/.style={
            postproc cell content/.append style={/pgfplots/table/@cell content/.add={\cellcolor{lightgray!50}\boldmath}{}}},
every row 4 column 6/.style={
            postproc cell content/.append style={/pgfplots/table/@cell content/.add={\cellcolor{lightgray!50}\boldmath}{}}},
         every row 4 column 7/.style={
            postproc cell content/.append style={/pgfplots/table/@cell content/.add={\cellcolor{lightgray!50}\boldmath}{}}},
every head row/.style={
        before row={\toprule
& \multicolumn{2}{c}{\bfseries MNIST } & \multicolumn{2}{c}{\bfseries CDR} & \multicolumn{2}{c}{\bfseries Higgs}  \\
            \cmidrule(lr){2-3}  \cmidrule(lr){4-5}  \cmidrule(lr){6-7}
        },
        after row=\midrule },
    ]{\loadedtabletestA}

\caption{
    Test metrics across three real-world datasets.
    Results show the mean over $10$ random featurizers initializations.
    Standard deviations are shown for the loss metrics and suppressed for AUC and $R^2$ values for presentation purposes. 
    The top metrics per column for boosting and full training are bold and highlighted in gray. 
    Over all boosting methods, \VPBoost achieves the lowest loss per task.
}
\label{tab:toy_experiment_metrics}

\end{table}

These experiments support the claim that \VPBoost can perform strongly on
high-dimensional regression and classification tasks. We additionally compare
against historical results reported for the same datasets. On the Higgs
benchmark, the original \XGBoost paper \citep{chen_xgboost_2016} reports a
test AUC that falls below what \VPBoost achieves here. On the CDR benchmark,
our test MSE improves upon the results reported in
\citet{newman_train_2021}. We stress, however, that these are informal
comparisons: the current \VPBoost implementation is substantially more
computationally intensive than the highly optimized \XGBoost software, and the
experimental conditions differ.
 \subsection{Scaling}
\label{subsec:scaling}

In this section we investigate the computational cost-quality tradeoff of
\VPBoost relative to \GDBoost.
As discussed earlier, the primary expense of \VPBoost is the reduced linear solve
performed during weak learner training. 
For dense multi-output regression, this solve scales cubically in $\NW = \NFeat
\NTarget$, whereas standard gradient-based weak learner training scales
linearly with the parameter count. 
In the boosting setting, however, the weak learners are intentionally small, so
$\NFeat$ is largely controlled by design. 
This leaves the output dimensionality $\NTarget$ as the primary practical
scaling variable. 
To study this effect in isolation, we use \texttt{scikit-learn}'s
\texttt{make\_regression} generator with a fixed weak learner architecture,
vary $\NTarget \in \{1,2,4,8,16,32,64,128\}$, and train both \VPBoost and
\GDBoost for a fixed budget of eight learners over five random seeds.
The results are summarized in \Cref{fig:05_06_scaling_overview}.

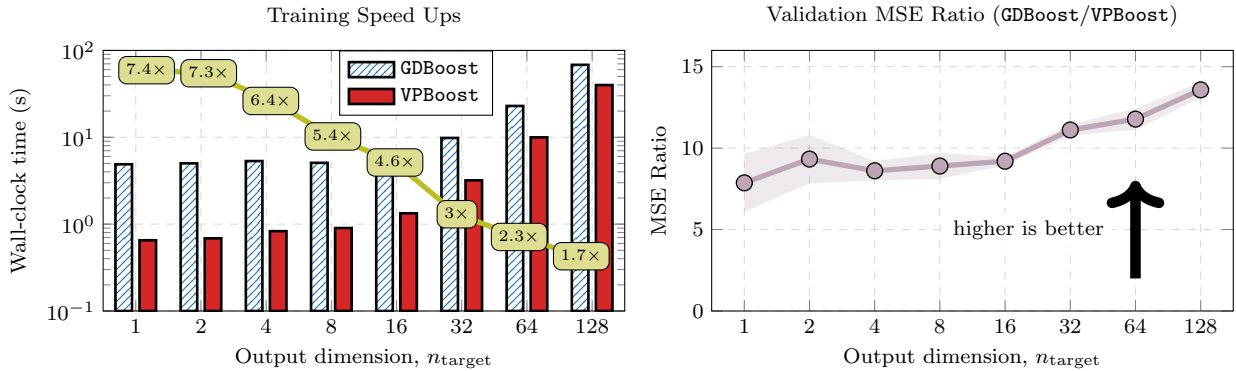
\begin{figure}[!htp]
    \centering
\scriptsize
    \begin{tikzpicture}
        \pgfplotstableread[col sep=comma]{data/scaling/time_to_gb_best_summary.csv}\scalingracevthree
        \pgfplotstableread[col sep=comma]{data/scaling/final_tradeoff.csv}\scalingtradeoffvthree
        \def\colorSpeedup{matplotlib8}

        \begin{groupplot}[
            group style={group size=2 by 1, horizontal sep=0.07\linewidth},
            scale only axis,
            width=0.42\linewidth, height=0.21\linewidth,
            ymin=0, ymax=64,
            xmin=0.707, xmax=181,
            grid=major,
            grid style={dashed, gray!30},
            xlabel={Output dimension, $\NTarget$},
            xtick={1, 2, 4, 8, 16, 32, 64, 128},
            xticklabels={1, 2, 4, 8, 16, 32, 64, 128},
            xmode=log,
            log basis x=2,
        ]

        \nextgroupplot[
            title={Training Speed Ups},
            ylabel={Wall-clock time (s)},
            ymode=log,
            log origin=infty,
            ymin=1e-1, ymax=1e2,
legend style={at={(rel axis cs: 0.6, 1)}, anchor=north},
            legend columns=1
        ]

        \addplot[ybar, bar width=0.22cm, bar shift=-0.16cm, fill=colorGB, area legend, pattern=north east lines, pattern color=colorGB, line width=1pt]
            table[x=output_dim, y=gb_time_to_target_mean] {\scalingracevthree};
        \addlegendentry{\GDBoost}

        \addplot[ybar, bar width=0.22cm, bar shift=+0.16cm, fill=colorVP, area legend, line width=1pt]
            table[x=output_dim, y=vp_time_to_target_mean] {\scalingracevthree};
        \addlegendentry{\VPBoost}

        \nextgroupplot[
            title={Validation MSE Ratio (\GDBoost/\VPBoost)},
            ylabel={MSE Ratio},
            ymin=0, ymax=16
        ]

        \addplot [color=colorGB!50!colorVP!50, mark=*, line width=2pt, mark options={draw=black, line width=0.5pt, mark size=3pt}] table[x=output_dim, y=val_mse_ratio_gb_over_vp_mean, col sep=comma] {\scalingtradeoffvthree};
        \addplot [name path=upper, draw=none, no marks, forget plot]
            table[x=output_dim, y expr=\thisrow{val_mse_ratio_gb_over_vp_mean}+\thisrow{val_mse_ratio_gb_over_vp_stderr}, col sep=comma] {\scalingtradeoffvthree};
        \addplot [name path=lower, draw=none, no marks, forget plot]
            table[x=output_dim, y expr=\thisrow{val_mse_ratio_gb_over_vp_mean}-\thisrow{val_mse_ratio_gb_over_vp_stderr}, col sep=comma] {\scalingtradeoffvthree};
        \addplot [colorGB!50!colorVP!50, opacity=0.25, forget plot] fill between[of=upper and lower];

        \draw[->, line width=4pt] (axis cs:64,2) -- node[midway, left=0.25cm, anchor=east, fill=white] {higher is better} (axis cs:64,8);

        \end{groupplot}

        \begin{groupplot}[
            group style={group size=2 by 1, horizontal sep=0.12\linewidth},
            axis y line*=right,
            axis x line*=none,
            ylabel={\color{\colorSpeedup}Speed Up},
            y axis line style={\colorSpeedup},
            yticklabel style={\colorSpeedup},
            ytick style={\colorSpeedup},
            hide axis,
            xtick=\empty,
            xticklabels=\empty,
            xtick={1, 2, 4, 8, 16, 32, 64, 128},
            xmode=log,
            log basis x=2,
            ymin=0, ymax=8,
            scale only axis,
            width=0.42\linewidth, height=0.21\linewidth,
        ]
            \nextgroupplot
            \addplot[line width=2pt, color=\colorSpeedup] table[x=output_dim, y expr=1/\thisrow{time_ratio_vp_over_gb_mean}] {\scalingracevthree};

            \addplot[draw=none,
                visualization depends on={1/\thisrow{time_ratio_vp_over_gb_mean} \as \speedup},
                nodes near coords={\pgfmathprintnumber[fixed,precision=1]{\speedup}$\times$},
                nodes near coords style={anchor=center, draw, fill=\colorSpeedup!50, rounded corners=3pt, font=\tiny\bfseries}]
                table[x=output_dim, y expr=1/\thisrow{time_ratio_vp_over_gb_mean}] {\scalingracevthree};
        \end{groupplot}

    \end{tikzpicture}

    \caption{
        Scaling behavior as the output dimension $\NTarget$ increases.
        Left: the time needed to match \GDBoost's final validation loss.
        Across all tested output dimensions, \VPBoost reaches that target sooner, although the speedup narrows as $\NTarget$ grows.
        Right: the final validation-MSE ratio \GDBoost/\VPBoost.
        Higher is better for \VPBoost, and the advantage grows with $\NTarget$ over the tested range.
    }
    \label{fig:05_06_scaling_overview}
\end{figure}

The first conclusion is that \VPBoost does indeed cost more to train when the
number of learners is held fixed. 
Across the eight-learner runs, the total runtime ratio of \VPBoost/\GDBoost
grows steadily over the tested range, reflecting the expected dependence on
$\NTarget$. 
Yet this additional cost buys stronger weak learners. 
In the right panel of \Cref{fig:05_06_scaling_overview}, the final validation
MSE ratio \GDBoost/\VPBoost is already large at small output dimension and
improves as $\NTarget$ increases. 
Hence, for a fixed model size, using \VPBoost may be worth the computational
expense.

A more relevant comparison for boosting is often not the cost of training the
same number of learners, but using the cost of reaching the same loss threshold.
The left panel in \Cref{fig:05_06_scaling_overview} therefore measures the
wall-clock time needed for each method to attain the final validation loss
achieved by the eight-learner \GDBoost ensemble. 
By this metric, \VPBoost is faster at every tested output dimension, although
the margin narrows as $\NTarget$ increases. 
This is because each \VPBoost weak learner improves far further on the loss,
so the method reaches the \GDBoost target in far fewer effective boosting
steps. 
This is consistent with the results across nearly all experiments.

At the same time, these results should not be over-interpreted as evidence that
\VPBoost scales effortlessly to arbitrarily large output spaces. 
The favorable trend shown here clearly does not persist as $\NTarget$
increases, even accounting for the fact that the current \VPBoost implementation is far
less optimized than standard gradient-descent routines.
Nevertheless, the experiment identifies a useful intermediate regime: for moderate
multi-output problems, the higher per-learner cost of variable projection can
still be worthwhile because it buys a much stronger reduction in loss per
learner. 
Understanding how to preserve that advantage at much larger output
dimensionalities, such as image-level prediction or classification, remains an
important direction for future work.

\section{Conclusion}
\label{sec:conclusion}

This work introduced \VPBoost, a gradient boosting algorithm for separable
smooth approximators that combines an \XGBoost-style second-order Taylor
approximation with variable projection (Section~\ref{sec:vpboost}).
By exploiting the separable structure and analytically eliminating the final linear
 weights, \VPBoost becomes a functional trust-region method in
which progress in parameter space translates directly to progress in function
space (Algorithm~\ref{alg:trust_region_skeleton_vpboost}). 
We proved that \VPBoost weak learners are automatically descent directions and, under mild subspace regularity conditions, build ensembles that converge to a stationary point of the target loss functional (Section~\ref{sec:convergence_analysis}). 
Across benchmarks spanning synthetic problems,
image recognition, large-scale tabular data, and high-dimensional scientific
machine learning, \VPBoost consistently outperforms gradient descent-based
boosting and is competitive with \XGBoost,  particularly with very weak
learners or when fitting higher-dimensional or smoother target responses (Section~\ref{sec:numerical_experiments}).
The theory and experiments together suggest that fusing VarPro and boosting is more than a training trick: it provides a
principled and computationally advantageous framework to extend boosting beyond classical tree-based settings.

Several future directions follow naturally from this work.
Scaling \VPBoost to larger output dimensions remains an important
practical challenge.  
The reduced linear solve becomes the dominant cost in dense multi-output
settings. 
Extending \VPBoost will likely require additional structure on the linear layer 
and the use of sparse iterative solvers. 
To avoid overfitting with the optimal linear weights, VarPro is a data-hungry method designed for deterministic approximations of expected loss functions. 
Extending to stochastic optimization~\citep{newman_slimtrain---stochastic_2022} and randomized sketching~\citep{antil2025randomizedmatrixsketchingneural} could enable \VPBoost to scale in data. 
An important next step in the theory is to understand generalization for boosted separable
neural networks: which architectural or algorithmic properties control test
error, how ensemble depth interacts with weak learner expressivity, and when
the optimization advantages of \VPBoost translate into reliable out-of-sample
performance. 
 
\acks{

E. Newman and A. Chowdhary acknowledge support from the United States Air Force
Office of Scientific Research under FA9550-26-1-B049 (program manager Dr.
Fariba Fahroo). 
The work by E. Newman was also partially supported by the National Science
Foundation (NSF) under grant DMS-2309751. 
The numerical examples were conducted using computational resources and
services at the Center for Computation and Visualization of Brown University. 

We disclose our use of AI tools in this work.
We used generative search tools, such as Gemini, and ChatGPT’s DeepResearch to
discover recent literature relevant to this work, to find appropriate
terminology and ensure accessibility of this work across domains, and to find
nuanced commands to generate clear figures and tables.
We used Claude Code with Sonnet 4 and Opus 4 and Codex CLI with GPT-5 as a
coding assistant, primarily to build SLURM orchestration for efficient
hyperparameter tuning, and to help prototype, debug, and document the code. 
The code is available upon request and will be released publicly after review
of the manuscript is complete. 
All parts of the paper were written and edited without AI assistance. 
We take full responsibility for the accuracy and integrity of all content in
this paper. 

}

\appendix

\section{Function Spaces and Functional Derivatives}
\label{app:functional_derivatives}

This appendix collects the functional-analytic foundations that underpin the
main text.
\Cref{app:bochner} establishes $\FctnClass$ as a Bochner space, derives its
empirical counterpart under the SAA measure $\mu_N$, and establishes the
operator norm and submultiplicativity used in the convergence analysis.
\Cref{app:frechet} defines G\^{a}teaux and Fr\'{e}chet derivatives on Hilbert
spaces, identifies the functional gradient as a Riesz representer, and derives
the pointwise formula for the gradient of the SAA loss.

\subsection{Bochner Spaces, Empirical Reduction, and Operator Norm}
\label{app:bochner}
We recall the definition of a Bochner space and then specialise to the
hypothesis class $\FctnClass$ and the featurizer norm used in this paper.
Standard references are \citet[Appendix~E]{evans_partial_2010} for an
applied-mathematics introduction and \citet[Chapter~1]{hytonen_analysis_2016}
for the general theory.

\begin{definition}[Bochner space
{\citep[Section~1.2]{hytonen_analysis_2016},\citep{evans_partial_2010}}]
\label{def:bochner}
Let $\mu$ be a $\sigma$-finite measure on $\InputSpace$ and $V$ a Banach space.
The \emph{Bochner space} $L^2(\InputSpace, \mu; V)$ consists of strongly
measurable functions $f: \InputSpace \to V$ for which $\bfx \mapsto
\|f(\bfx)\|_V$ is square-$\mu$-integrable, with norm
\begin{equation*}
  \|f\|_{L^2(\InputSpace,\mu;V)}
  \coloneqq
  \left(\int_{\InputSpace} \|u(\bfx)\|_V^2 \, d\mu(\bfx)\right)^{1/2}.
\end{equation*}
When $V$ is a Hilbert space, $L^2(\InputSpace, \mu; V)$ is itself a Hilbert
space with inner product
$\langle u, v \rangle_{L^2(\InputSpace,\mu;V)}
= \int_{\InputSpace} \langle u(\bfx), v(\bfx) \rangle_V \, d\mu(\bfx)$.
\end{definition}
The hypothesis class $\FctnClass = L^2(\InputSpace, \mu; \Rbb^{\NTarget})$ is
the special case $V = \Rbb^{\NTarget}$ with the Euclidean inner product.

\begin{proposition}[Product-space equivalence
{\citep[Section~1.2]{hytonen_analysis_2016}}]
\label{prop:bochner_product}
For vector space $V = \Rbb^d$,
\begin{equation*}
  L^2(\InputSpace, \mu; \Rbb^d) \;\cong\; \bigl[L^2(\InputSpace, \mu)\bigr]^d
\end{equation*}
isometrically, via $f \mapsto (f_1, \ldots, f_d)$ where $f_j(\bfx) =
[f(\bfx)]_j$.
\end{proposition}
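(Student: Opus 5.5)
The plan is to build the isometry explicitly and check the three required properties: it is well defined, it preserves norms, and it is bijective. Define $\Phi: L^2(\InputSpace,\mu;\Rbb^d) \to [L^2(\InputSpace,\mu)]^d$ by $\Phi(f) = (f_1,\dots,f_d)$ with $f_j = \langle f(\cdot), \bfe_j\rangle$. Equip the product space with the norm $\|(g_1,\dots,g_d)\|^2 = \sum_{j=1}^d \|g_j\|_{L^2(\InputSpace,\mu)}^2$, which comes from the inner product $\sum_j \langle g_j, h_j\rangle_{L^2}$.

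\textbf{Step 1: well-definedness and measurability.} Since $f$ is strongly measurable, it is the a.e. limit of simple functions $s_n$. Each component $[s_n]_j$ is a scalar simple function converging a.e. to $f_j$, so $f_j$ is measurable. Integrability follows from the pointwise bound $|f_j(\bfx)| \le \|f(\bfx)\|_2$, which gives $f_j \in L^2(\InputSpace,\mu)$. Equivalence classes are respected because $f = g$ $\mu$-a.e. exactly when $f_j = g_j$ a.e. for every $j$.

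\textbf{Step 2: isometry.} Apply Tonelli, or simply linearity of the integral over a finite sum, to $\|f(\bfx)\|_2^2 = \sum_j |f_j(\bfx)|^2$. This gives
\begin{equation*}
\|f\|^2_{L^2(\InputSpace,\mu;\Rbb^d)} = \int_{\InputSpace} \sum_{j=1}^d |f_j(\bfx)|^2\, d\mu(\bfx) = \sum_{j=1}^d \|f_j\|^2_{L^2(\InputSpace,\mu)}.
\end{equation*}
Polarization, or the same computation applied to $\langle f(\bfx), g(\bfx)\rangle$, shows that inner products are preserved as well. Linearity of $\Phi$ is immediate.

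\textbf{Step 3: surjectivity.} Given $(g_1,\dots,g_d)$, set $f = \sum_j g_j \bfe_j$. We need $f$ to be strongly measurable. In finite dimensions, strong measurability coincides with Borel measurability of each coordinate by the Pettis theorem, and $\Rbb^d$ is separable. Alternatively, approximate each $g_j$ by scalar simple functions and combine them coordinatewise. The norm identity from Step 2 then gives $f \in L^2(\InputSpace,\mu;\Rbb^d)$, and $\Phi(f) = (g_j)_j$.

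The only potential obstacle is the measurability bookkeeping in Steps 1 and 3. This is resolved by the finite-dimensionality of $\Rbb^d$, which makes strong and componentwise measurability coincide. Everything else is a direct computation, and for the empirical measure $\mu_N$ it reduces to finite sums.
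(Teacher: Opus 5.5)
Your proof is correct and complete. The paper gives no argument of its own and simply cites Hyt\"onen et al.; your direct verification is the standard one behind that citation: the coordinate map, integrating the pointwise identity $\|f(\bfx)\|_2^2=\sum_j |f_j(\bfx)|^2$, and transferring strong measurability coordinatewise through simple functions (equivalently, Pettis in finite dimensions). You were also right to fix the $\ell^2$-sum (Hilbert direct sum) norm on $[L^2(\InputSpace,\mu)]^d$, which the statement leaves implicit and without which ``isometrically'' would be false. Injectivity then follows from the isometry.
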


This equivalence is why \citet{nitanda_functional_2018,
atsushi_nitanda_functional_2020} can write $L^d_2(\mu)$ for the same space.

\paragraph{Empirical reduction.}
When $\mu$ is replaced by the empirical measure $\mu_N =
\frac{1}{N}\sum_{i=1}^N \delta_{\bfx_i}$, the Bochner space collapses to a
finite-dimensional space \citep[Section~4.7]{bogachev_measure_2007}.
Specifically, $L^2(\InputSpace, \mu_N; \Rbb^{\NTarget}) \cong
\Rbb^{N\times\NTarget}$ via the evaluation isomorphism
$f \mapsto (f(\bfx_1), \ldots, f(\bfx_N))$, with inner product and norm
\begin{align*}\langle f, g\rangle_{\SAA\FctnClass}
  = \frac{1}{N}\sum_{i=1}^N f(\bfx_i)^\top g(\bfx_i),
  \qquad
  \|f\|_{\SAA\FctnClass}^2
  = \frac{1}{N}\sum_{i=1}^N \|f(\bfx_i)\|_2^2.
\end{align*}

\paragraph{Operator norm and submultiplicativity.}
The featurizer $\FeaturizerMat_{\bftheta}: \InputSpace \to
\Rbb^{\NTarget\times\NW}$ is a matrix-valued function in $L^2(\InputSpace,
\mu; \Rbb^{\NTarget\times\NW})$.
For the convergence analysis, we equip this space with the \emph{induced
operator norm}
\begin{align*}\|\FeaturizerMat_{\bftheta}\|
  \coloneqq
  \sup_{\|\bfw\|_2 = 1}
  \|\FeaturizerMat_{\bftheta}(\cdot)\bfw\|
  = \sup_{\|\bfw\|_2=1}
  \left(
    \int_{\InputSpace} \|\FeaturizerMat_{\bftheta}(\bfx)\bfw\|_2^2\, d\mu(\bfx)
  \right)^{1/2},
\end{align*}
which differs from the Bochner norm (the latter integrates the Frobenius norm
of $\FeaturizerMat_{\bftheta}(\bfx)$ rather than the operator action on
$\bfw$).
In the empirical setting, the operator norm becomes
\begin{equation*}\label{eq:empirical_operator_norm}
  \|\FeaturizerMat_{\bftheta}\|_N
  = \sup_{\|\bfw\|_2 = 1}
  \left(\frac{1}{N}\sum_{i=1}^N \|\FeaturizerMat_{\bftheta}(\bfx_i)\bfw\|_2^2\right)^{1/2}
  = \sigma_{\max}(\widetilde{\FeaturizerMat}_{\bftheta}),
\end{equation*}
the largest singular value of the scaled stacked matrix
$\widetilde{\FeaturizerMat}_{\bftheta} \coloneqq
[\FeaturizerMat_{\bftheta}(\bfx_1)^\top, \ldots,
\FeaturizerMat_{\bftheta}(\bfx_N)^\top]^\top / \sqrt{N} \in \Rbb^{N\NTarget
\times \NW}$, and Lemma~\ref{lem:operator_norm} holds in this setting as
well.

\subsection{Differentiation of Functionals in Hilbert Spaces}
\label{app:frechet}

Following \citet[Section~2.2]{antil_frontiers_2018}, we summarize how
directional, G\^{a}teaux, and Fr\'{e}chet derivatives relate for functionals
defined on a Hilbert space.
Let $\ObjFunctional: \FctnClass \to \Rbb$ denote a functional; $\LossFunctional$
or $\ObjFunctional$ typically play this role in our setting.

\begin{definition}[Directional derivative]
For $f, \varphi \in \FctnClass$, the \emph{directional derivative} of
$\ObjFunctional$ at $f$ in direction $\varphi$ is
\begin{equation*}
  \ObjFunctional^\prime[f;\, \varphi]
  \coloneqq
  \lim_{t \to 0}
  \frac{\ObjFunctional[f + t\varphi] - \ObjFunctional[f]}{t},
\end{equation*}
provided the limit exists.
\end{definition}

\begin{definition}[G\^{a}teaux derivative]
$\ObjFunctional$ is \emph{G\^{a}teaux differentiable} at $f$ if
$\ObjFunctional^\prime[f;\, \varphi]$ exists for every $\varphi \in \FctnClass$
and the map $\varphi \mapsto \ObjFunctional^\prime[f;\, \varphi]$ is linear and
continuous.
The associated bounded linear functional $\ObjFunctional^\prime[f] \in
\FctnClass^\star$ is defined by $\ObjFunctional^\prime[f](\varphi) =
\ObjFunctional^\prime[f;\, \varphi]$.
\end{definition}

\begin{definition}[Fr\'{e}chet derivative]\label{def:frechet_derivative}
$\ObjFunctional$ is \emph{Fr\'{e}chet differentiable} at $f$ if there exists a
bounded linear functional $\ObjFunctional^\prime[f] \in \FctnClass^\star$ such
that
\begin{equation*}
  \ObjFunctional[f + h]
  = \ObjFunctional[f] + \ObjFunctional^\prime[f](h) + o(\|h\|)
  \qquad \text{as } \|h\| \to 0.
\end{equation*}
Fr\'{e}chet differentiability implies G\^{a}teaux differentiability.
\end{definition}

\begin{remark}[Riesz representer / functional gradient]
When $\FctnClass$ is a Hilbert space, every bounded linear functional admits a
unique Riesz representer.
A Fr\'{e}chet derivative $\ObjFunctional^\prime[f] \in \FctnClass^\star$ is
therefore identified with the unique element $\nabla\ObjFunctional[f] \in
\FctnClass$ satisfying
\begin{equation*}
  \ObjFunctional^\prime[f](\varphi)
  = \langle \nabla\ObjFunctional[f],\, \varphi \rangle
  \qquad \forall\, \varphi \in \FctnClass.
\end{equation*}
We refer to $\nabla\ObjFunctional[f]$ as the \emph{functional gradient} at $f$.
\end{remark}

\begin{definition}[Second Fr\'{e}chet derivative]\label{def:frechet_second_derivative}
$\ObjFunctional$ is \emph{twice Fr\'{e}chet differentiable} at $f$ if the
map $f \mapsto \ObjFunctional^\prime[f] \in \FctnClass^\star$ is itself
Fr\'{e}chet differentiable at $f$.
The second derivative is a bounded linear operator
$\ObjFunctional^{\prime\prime}[f]: \FctnClass \to \FctnClass^\star$ satisfying
\begin{equation*}
  \ObjFunctional^\prime[f + h]
  = \ObjFunctional^\prime[f]
  + \ObjFunctional^{\prime\prime}[f](h)
  + o(\|h\|)
  \qquad \text{as } \|h\| \to 0,
\end{equation*}
where the remainder is measured in $\FctnClass^\star$.
For fixed $h \in \FctnClass$, the element $\ObjFunctional^{\prime\prime}[f](h)
\in \FctnClass^\star$ is evaluated at $\varphi \in \FctnClass$ to give the
scalar $\ObjFunctional^{\prime\prime}[f](h)(\varphi) \in \Rbb$; we write this
bilinear evaluation as $\ObjFunctional^{\prime\prime}[f](h, \varphi)$ for
brevity.
Applying the Riesz map to each $\ObjFunctional^{\prime\prime}[f](h) \in
\FctnClass^\star$ yields the \emph{Hessian operator}
$\nabla^2\ObjFunctional[f]: \FctnClass \to \FctnClass$, the unique bounded
linear operator satisfying
\begin{equation*}
  \ObjFunctional^{\prime\prime}[f](h, \varphi)
  = \langle \nabla^2\ObjFunctional[f]\, h,\, \varphi \rangle
  \qquad \forall\, h, \varphi \in \FctnClass.
\end{equation*}
When $\ObjFunctional$ is twice continuously Fr\'{e}chet differentiable,
$\ObjFunctional^{\prime\prime}[f]$ is symmetric and $\nabla^2\ObjFunctional[f]$
is therefore self-adjoint.
\end{definition}

 \section{Proofs for Variable Projection vs. Gradient Descent}
\label{app:vp_vs_gd_proofs}

This appendix contains full statements and proofs for the results presented in Section~\ref{sec:varpro}. 
These are established results \citep{sjoberg_separable_1997} and the presentation here is adapted from more general proofs for non-smooth objective functions from~\cite[Lemma 2.1 and Corollary 2.3]{van_leeuwen_variable_2021}. 

\reducedgradientequality*
\begin{proof}
By the Implicit Function Theorem, $\WOpt(\bftheta)$ is continuously differentiable.
Hence, we can apply the chain rule and compute the gradient of the reduced objective function via
\begin{align*}\nabla \Reduced{\SAA\ObjFctn}(\bftheta) = \nabla_{\bftheta} \SAA\ObjFctn(\WOpt(\bftheta), \bftheta)
= \nabla_{\bftheta}\WOpt(\bftheta)\nabla_{\bfw} \SAA\ObjFctn(\WOpt(\bftheta), \bftheta) + \left.\nabla_{\bftheta} J_N(\bfw, \bftheta)\right|_{\bfw =\WOpt(\bftheta)}.
\end{align*}
By solving~\eqref{eq:w_opt}, $\WOpt(\bftheta)$ satisfies the first-order optimality condition $\nabla_{\bfw} \SAA\ObjFctn(\WOpt(\bftheta), \bftheta) = \bf0$, thereby eliminating the first term.
The Implicit Function Theorem further guarantees $\WOpt(\bftheta)$ is unique, and hence is the only viable choice of $\bfw$ to achieve the desired equality of gradients.
\end{proof}

\begin{lemma}[Lipschitz Continuity of $\WOpt(\bftheta)$]\label{lem:smoothness_w_opt}
Under Assumptions~\ref{assump:vp_smoothness} and~\ref{assump:vp_convexity},
the optimal linear weight function $\WOpt: \ThetaSpace \to \WSpace$ defined
in~\eqref{eq:w_opt} is $K / \lambda_w$-smooth.
\end{lemma}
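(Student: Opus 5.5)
The plan is to differentiate the first-order optimality condition and then bound the resulting Jacobian uniformly. By the Implicit Function Theorem, as in the proof of Lemma~\ref{lem:reduced_gradient_equality}, $\WOpt(\bftheta)$ is continuously differentiable and satisfies $\nabla_{\bfw}\SAA\ObjFctn(\WOpt(\bftheta),\bftheta)=\mathbf{0}$ for every $\bftheta$. Differentiating this identity in $\bftheta$ gives
\begin{align*}
\nabla_{\bfw}^2\SAA\ObjFctn(\WOpt(\bftheta),\bftheta)\, D\WOpt(\bftheta) + \nabla_{\bftheta}\nabla_{\bfw}\SAA\ObjFctn(\WOpt(\bftheta),\bftheta) = \mathbf{0},
\end{align*}
where $\nabla_{\bftheta}\nabla_{\bfw}\SAA\ObjFctn \in \Rbb^{\NW\times\NTheta}$ is the mixed block of the Hessian. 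Hence $D\WOpt(\bftheta) = -\bigl(\nabla_{\bfw}^2\SAA\ObjFctn\bigr)^{-1}\nabla_{\bftheta}\nabla_{\bfw}\SAA\ObjFctn$, and this inverse exists by \ref{assump:vp_convexity}.

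The next step bounds the two factors. Strong convexity \ref{assump:vp_convexity} gives $\|(\nabla_{\bfw}^2\SAA\ObjFctn)^{-1}\|\le 1/\lambda_w$. For the mixed block, $K$-smoothness \ref{assump:vp_smoothness} implies that the full Hessian $\nabla^2\SAA\ObjFctn$ has operator norm at most $K$. To see this, take $(\bfw',\bftheta')=(\bfw+t\bfu,\bftheta+t\bfv)$, divide by $t$, and let $t\to0$. Every off-diagonal block of a matrix has norm bounded by the norm of the whole matrix, so $\|\nabla_{\bftheta}\nabla_{\bfw}\SAA\ObjFctn\|\le K$. It follows that $\|D\WOpt(\bftheta)\|\le K/\lambda_w$ uniformly in $\bftheta$.

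Finally, the mean value inequality along the segment from $\bftheta$ to $\bftheta'$ gives $\|\WOpt(\bftheta)-\WOpt(\bftheta')\|\le (K/\lambda_w)\|\bftheta-\bftheta'\|$. This is the claimed $K/\lambda_w$ Lipschitz property, which is how I read ``$K/\lambda_w$-smooth'' here.

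The only delicate point is passing from the gradient-Lipschitz hypothesis to the bound on the mixed Hessian block. If we prefer to avoid second derivatives, there is an alternative. Use strong monotonicity: $\lambda_w\|\bfw_1-\bfw_2\|^2\le\langle \nabla_{\bfw}\SAA\ObjFctn(\bfw_1,\bftheta')-\nabla_{\bfw}\SAA\ObjFctn(\bfw_2,\bftheta'),\bfw_1-\bfw_2\rangle$ with $\bfw_1=\WOpt(\bftheta)$ and $\bfw_2=\WOpt(\bftheta')$. Since $\nabla_{\bfw}\SAA\ObjFctn(\bfw_1,\bftheta)=\mathbf{0}=\nabla_{\bfw}\SAA\ObjFctn(\bfw_2,\bftheta')$, the right-hand side equals $\langle \nabla_{\bfw}\SAA\ObjFctn(\bfw_1,\bftheta')-\nabla_{\bfw}\SAA\ObjFctn(\bfw_1,\bftheta),\bfw_1-\bfw_2\rangle\le K\|\bftheta-\bftheta'\|\,\|\bfw_1-\bfw_2\|$. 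Dividing through by $\|\bfw_1-\bfw_2\|$ yields the same constant directly.
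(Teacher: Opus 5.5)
Your proof is correct, and your reading of ``$K/\lambda_w$-smooth'' as Lipschitz continuity with constant $K/\lambda_w$ matches what the paper proves.

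\textbf{Fallback argument.} This is essentially the paper's proof. The paper starts from strong monotonicity of $\nabla_{\bfw}\SAA\ObjFctn$ with $\bfw=\WOpt(\bftheta)$ and $\bfw'=\WOpt(\bftheta')$. It then inserts the vanishing gradients from first-order optimality, applies Cauchy--Schwarz, and invokes $K$-smoothness. Your version fixes a single $\bftheta'$ in the monotonicity inequality from the outset. That is actually the cleaner statement of what is used: the paper writes the inequality for arbitrary $\bftheta_1,\bftheta_2$ and only afterwards specializes them to be equal.

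\textbf{Primary argument.} This route is genuinely different. You differentiate the optimality condition to get $D\WOpt(\bftheta) = -(\nabla_{\bfw}^2\SAA\ObjFctn)^{-1}\nabla_{\bftheta}\nabla_{\bfw}\SAA\ObjFctn$. You bound the two factors by $1/\lambda_w$ and $K$, then integrate with the mean value inequality.

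This costs more machinery:
\begin{itemize}
\item the $C^2$ hypothesis;
\item the Implicit Function Theorem, together with the observation that uniqueness of the stationary point identifies the local implicit function with $\WOpt$ globally;
\item the passage from a gradient-Lipschitz constant to a Hessian-norm bound.
\end{itemize}

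In return you get two things. First, an explicit Jacobian, which is exactly the object that reappears in the Schur-complement formula of Theorem~\ref{thm:reduced_objective_hessian}. Second, a pointwise estimate $\|D\WOpt(\bftheta)\|\le\|(\nabla_{\bfw}^2\SAA\ObjFctn)^{-1}\|\,\|\nabla_{\bftheta}\nabla_{\bfw}\SAA\ObjFctn\|$, which can be sharper locally than the global constant.

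The monotonicity argument, by contrast, is purely first-order. It never uses differentiability of $\WOpt$ or second derivatives of $\SAA\ObjFctn$. That is why it carries over to the nonsmooth setting of \citet{van_leeuwen_variable_2021}, from which the paper adapts it.
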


\begin{proof}
The strong convexity assumption ensures $\nabla_{\bfw} J_N(\bfw, \bftheta) \succeq \lambda_w \bfI_{\NW}$ for all $(\bfw,\bftheta)\in \WSpace \times \ThetaSpace$.
As shown in~\citep[Theorem 2.1.11]{nesterov_introductory_2004}, this condition is equivalent to
\begin{align*}
\lambda_w \|\bfw - \bfw'\|^2 \le \left\langle \nabla_{\bfw} \SAA\ObjFctn(\bfw, \bftheta_1) - \nabla_{\bfw} \SAA\ObjFctn(\bfw', \bftheta_2),  \bfw - \bfw' \right\rangle.
\end{align*}
for all $\bfw, \bfw'\in \WSpace$ and any $\bftheta_1, \bftheta_2\in \ThetaSpace$.
Set $\bfw \equiv \WOpt(\bftheta)$ and $\bfw' \equiv \WOpt(\bftheta')$ for some $\bftheta, \bftheta'\in \ThetaSpace$ and let $\bftheta_1 = \bftheta_2 = \widetilde{\bftheta}$ for some $\widetilde{\bftheta}\in \ThetaSpace$.
Substituting, we get
\begin{align*}
\begin{split}
\lambda_w \|\WOpt(\bftheta) - \WOpt(\bftheta')\|^2
\le \left\langle \nabla_{\bfw} \SAA\ObjFctn(\WOpt(\bftheta), \widetilde{\bftheta}) - \nabla_{\bfw} \SAA\ObjFctn(\WOpt(\bftheta'), \widetilde{\bftheta}),  \WOpt(\bftheta) - \WOpt(\bftheta') \right\rangle.
\end{split}
\end{align*}
Using first-order optimality condition associated with $\WOpt$, we strategically add zero vectors to the above inequality via
\begin{align*}
\begin{split}
\lambda_w \|\WOpt(\bftheta) - \WOpt(\bftheta')\|^2
&\le \left\langle \nabla_{\bfw} \SAA\ObjFctn(\WOpt(\bftheta), \widetilde{\bftheta}) - \nabla_{\bfw} \SAA\ObjFctn(\WOpt(\bftheta), \bftheta),  \WOpt(\bftheta) - \WOpt(\bftheta') \right\rangle\\
&\quad + \left\langle \nabla_{\bfw} \SAA\ObjFctn(\WOpt(\bftheta'), \bftheta') - \nabla_{\bfw} \SAA\ObjFctn(\WOpt(\bftheta'), \widetilde{\bftheta}),  \WOpt(\bftheta) - \WOpt(\bftheta') \right\rangle.
\end{split}
\end{align*}
Setting $\widetilde{\bftheta} \equiv \bftheta'$ and applying the Cauchy-Schwarz inequality on the right, we get
\begin{align*}
\lambda_w \|\WOpt(\bftheta) - \WOpt(\bftheta')\|^2 \le \|  \nabla_{\bfw} \SAA\ObjFctn(\WOpt(\bftheta), \bftheta') - \nabla_{\bfw} \SAA\ObjFctn(\WOpt(\bftheta), \bftheta)\|\|\WOpt(\bftheta) - \WOpt(\bftheta')\|.
\end{align*}
Applying Assumption~\ref{assump:vp_smoothness} to the right and simplifying leaves us with
\begin{align*}
\lambda_w \|\WOpt(\bftheta) - \WOpt(\bftheta')\| \le K\|\bftheta - \bftheta'\|.
\end{align*}
Dividing both sides by $\lambda_w$ yields the expected smoothness condition of $\WOpt$.
\end{proof}

\begin{theorem}[Smoothness of Reduced Objective]\label{thm:reduced_objective_smoothness}
Under Assumptions~\ref{assump:vp_smoothness} and~\ref{assump:vp_convexity} on the full objective function, $\SAA\ObjFctn$,
the reduced objective function, $\Reduced{\SAA\ObjFctn}$, is $\Reduced{K}$-smooth where $\Reduced{K} = K \sqrt{K^2 / \lambda_w^2 + 1}$.
\end{theorem}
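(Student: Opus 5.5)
The plan is to write $\nabla\Reduced{\SAA\ObjFctn}$ as a composition and bound its Lipschitz constant with the triangle inequality, using Lemma~\ref{lem:reduced_gradient_equality} and Lemma~\ref{lem:smoothness_w_opt}. By Lemma~\ref{lem:reduced_gradient_equality},
\begin{align*}
\nabla\Reduced{\SAA\ObjFctn}(\bftheta) = \nabla_{\bftheta}\SAA\ObjFctn(\WOpt(\bftheta),\bftheta),
\end{align*}
so for any $\bftheta,\bftheta'$ we have
\begin{align*}
\|\nabla\Reduced{\SAA\ObjFctn}(\bftheta)-\nabla\Reduced{\SAA\ObjFctn}(\bftheta')\|
= \|\nabla_{\bftheta}\SAA\ObjFctn(\WOpt(\bftheta),\bftheta)-\nabla_{\bftheta}\SAA\ObjFctn(\WOpt(\bftheta'),\bftheta')\|.
\end{align*}
The $\bftheta$-block of the gradient is a sub-vector of the full gradient $\nabla\SAA\ObjFctn$. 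Its difference is therefore bounded by the difference of the full gradients at the two points $(\WOpt(\bftheta),\bftheta)$ and $(\WOpt(\bftheta'),\bftheta')$. Assumption~\ref{assump:vp_smoothness} then gives
\begin{align*}
\|\nabla\Reduced{\SAA\ObjFctn}(\bftheta)-\nabla\Reduced{\SAA\ObjFctn}(\bftheta')\|^2
\le K^2\left(\|\WOpt(\bftheta)-\WOpt(\bftheta')\|^2+\|\bftheta-\bftheta'\|^2\right).
\end{align*}

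Next I invoke Lemma~\ref{lem:smoothness_w_opt}, which gives $\|\WOpt(\bftheta)-\WOpt(\bftheta')\|\le (K/\lambda_w)\|\bftheta-\bftheta'\|$. Substituting this yields
\begin{align*}
\|\nabla\Reduced{\SAA\ObjFctn}(\bftheta)-\nabla\Reduced{\SAA\ObjFctn}(\bftheta')\|^2 \le K^2\left(\tfrac{K^2}{\lambda_w^2}+1\right)\|\bftheta-\bftheta'\|^2.
\end{align*}
Taking square roots gives exactly $\Reduced{K}=K\sqrt{K^2/\lambda_w^2+1}$.

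The main point needing care is the first step, where the $\bftheta$-block is bounded by the full joint gradient. The joint formulation of Assumption~\ref{assump:vp_smoothness}, with the squared norms summed, is what makes the constants combine under a single square root rather than additively. I would state explicitly that $\|\nabla_{\bftheta}\SAA\ObjFctn(\cdot)-\nabla_{\bftheta}\SAA\ObjFctn(\cdot')\|\le\|\nabla\SAA\ObjFctn(\cdot)-\nabla\SAA\ObjFctn(\cdot')\|$, since the full gradient is the stacked vector $(\nabla_{\bfw},\nabla_{\bftheta})$. Everything else is direct substitution of the earlier lemmas, whose hypotheses (existence, uniqueness and the Lipschitz property of $\WOpt$) are guaranteed by Assumptions~\ref{assump:vp_smoothness} and~\ref{assump:vp_convexity}.
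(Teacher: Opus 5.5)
Your proof is correct and follows the paper's argument step for step. You rewrite the reduced gradient via Lemma~\ref{lem:reduced_gradient_equality}, apply the joint $K$-smoothness of Assumption~\ref{assump:vp_smoothness} at $(\WOpt(\bftheta),\bftheta)$ and $(\WOpt(\bftheta'),\bftheta')$, substitute the $K/\lambda_w$-Lipschitz bound from Lemma~\ref{lem:smoothness_w_opt}, and take square roots; your explicit remark that the $\bftheta$-block difference is dominated by the full stacked-gradient difference fills in a step the paper leaves implicit.
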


\begin{proof}
Let $\bftheta, \bftheta' \in \ThetaSpace$ be arbitrary.
By direct computation, we obtain the inequality
\begin{align*}\begin{split}
\left\|\nabla \Reduced{\SAA\ObjFctn}(\bftheta) - \nabla \Reduced{\SAA\ObjFctn}(\bftheta')\right\|^2
&= \left\|\left.\nabla_{\bftheta} {\SAA\ObjFctn}(\bfw, \bftheta)\right|_{\bfw = \WOpt(\bftheta)}- \left.\nabla_{\bftheta} {\SAA\ObjFctn}(\bfw, \bftheta')\right|_{\bfw = \WOpt(\bftheta')}\right\|^2\\
&\le K^2\left(\|\WOpt(\bftheta) - \WOpt(\bftheta')\|^2 + \|\bftheta - \bftheta'\|^2\right)\\
&\le K^2\left(K^2/\lambda_w^2 + 1 \right)  \|\bftheta - \bftheta'\|^2.
\end{split}
\end{align*}
The first equality follows from Lemma~\ref{lem:reduced_gradient_equality}.
The second inequality comes from the $K$-smoothness assumption on $\SAA\ObjFctn$.
The third inequality comes from Lemma~\ref{lem:smoothness_w_opt}.
Computing the square root of both sides concludes the proof.
\end{proof}

\begin{theorem}[Hessian of Reduced Objective Function $\Reduced{\SAA\ObjFctn}$]\label{thm:reduced_objective_hessian}
Under Assumptions~\ref{assump:vp_smoothness} and~\ref{assump:vp_convexity},
the Hessian of the reduced objective function $\Reduced{\SAA\ObjFctn}$ is
equal to the Schur complement of the Hessian of the full objective function
$\SAA\ObjFctn$, evaluated at the optimal linear weights; that is,
\begin{align*}
\nabla^2\Reduced{\SAA\ObjFctn}(\bftheta) =
\left.\left[
\nabla_{\bftheta}^2 {\SAA\ObjFctn}(\bfw, \bftheta) -
\nabla_\bfw^* \nabla_{\bftheta} {\SAA\ObjFctn}(\bfw, \bftheta)
\left(\nabla_{\bfw}^2 {\SAA\ObjFctn}(\bfw, \bftheta)\right)^{-1}
\nabla_\bftheta^* \nabla_{\bfw} {\SAA\ObjFctn}(\bfw, \bftheta)
\right] \right|_{\bfw = \WOpt(\bftheta)}.
\end{align*}
\end{theorem}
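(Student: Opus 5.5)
We will prove the Schur-complement formula by differentiating the gradient identity of Lemma~\ref{lem:reduced_gradient_equality} once more. Write $G(\bftheta) \coloneqq \nabla_{\bftheta}\SAA\ObjFctn(\WOpt(\bftheta),\bftheta)$, so that $\nabla\Reduced{\SAA\ObjFctn}(\bftheta) = G(\bftheta)$. Assumption~\ref{assump:vp_smoothness} makes $\SAA\ObjFctn$ twice continuously differentiable. The Implicit Function Theorem, applied to $F(\bfw,\bftheta) \coloneqq \nabla_{\bfw}\SAA\ObjFctn(\bfw,\bftheta)=\bf0$, makes $\WOpt$ continuously differentiable, so $G$ is $C^1$. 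The chain rule then gives
\begin{align*}
\nabla^2\Reduced{\SAA\ObjFctn}(\bftheta) = \nabla_{\bftheta}^2\SAA\ObjFctn + \nabla_{\bfw}^*\nabla_{\bftheta}\SAA\ObjFctn \, D\WOpt(\bftheta),
\end{align*}
with everything evaluated at $\bfw=\WOpt(\bftheta)$.

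The next step is to identify $D\WOpt(\bftheta)$. We differentiate the optimality identity $\nabla_{\bfw}\SAA\ObjFctn(\WOpt(\bftheta),\bftheta)\equiv\bf0$ in $\bftheta$, which gives
\begin{align*}
\nabla_{\bfw}^2\SAA\ObjFctn \, D\WOpt(\bftheta) + \nabla_{\bftheta}^*\nabla_{\bfw}\SAA\ObjFctn = \bf0.
\end{align*}
By Assumption~\ref{assump:vp_convexity}, $\nabla_{\bfw}^2\SAA\ObjFctn \succeq \lambda_w\bfI_{\NW}$ is invertible. Hence
\begin{align*}
D\WOpt(\bftheta) = -\left(\nabla_{\bfw}^2\SAA\ObjFctn\right)^{-1}\nabla_{\bftheta}^*\nabla_{\bfw}\SAA\ObjFctn.
\end{align*}
Substituting this into the chain-rule expression yields exactly the stated Schur complement. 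Equality of mixed partials, which holds since $\SAA\ObjFctn$ is $C^2$, shows the two off-diagonal blocks are adjoints of each other, so the result is symmetric, as a Hessian must be.

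The main obstacle is bookkeeping rather than analysis. We must keep the adjoint/transpose conventions ($\nabla_{\bfw}^*\nabla_{\bftheta}$ versus $\nabla_{\bftheta}^*\nabla_{\bfw}$) consistent with how the paper lays out Jacobians, in particular the convention in the proof of Lemma~\ref{lem:reduced_gradient_equality} where $\nabla_{\bftheta}\WOpt$ multiplies $\nabla_{\bfw}\SAA\ObjFctn$ on the left. We also must be careful that the term $D\WOpt^{\top}\,\nabla_{\bfw}\SAA\ObjFctn$ which arose in Lemma~\ref{lem:reduced_gradient_equality} is not differentiated again. We avoid this by starting from the already-simplified gradient $G$ rather than from the raw chain rule.
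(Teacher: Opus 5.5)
Your proposal is correct and follows the paper's proof essentially step for step. Both start from the reduced-gradient identity of Lemma~\ref{lem:reduced_gradient_equality} and apply the chain rule. Both obtain the Jacobian of $\WOpt$ by differentiating the first-order optimality condition $\nabla_{\bfw}\SAA\ObjFctn(\WOpt(\bftheta),\bftheta)=\bf0$, invert $\nabla_{\bfw}^2\SAA\ObjFctn$, and substitute to get the Schur complement. Your extra remarks are sound and make explicit what the paper leaves implicit: that $\WOpt$ is $C^1$ so the gradient map is $C^1$, that invertibility comes from strong convexity, and that the result is symmetric by equality of mixed partials.
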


\begin{proof}
The proof comes from direct computation.
Specifically,
\begin{align}\label{eq:app_nearly_schur}
\nabla^2\Reduced{\SAA\ObjFctn}(\bftheta)
&= \nabla_{\bftheta}^*\left(\left.\nabla_{\bftheta}{\SAA\ObjFctn}(\bfw, \bftheta)\right|_{\bfw=\WOpt(\bftheta)}\right)\\
&=\left.\nabla_{\bftheta}^2 \SAA\ObjFctn(\bfw, \bftheta)\right|_{\bfw = \WOpt(\bftheta)}
+ \nabla_{\bfw}^* \left(\left.\nabla_{\bftheta}{\SAA\ObjFctn}(\bfw, \bftheta)\right|_{\bfw = \WOpt(\bftheta)} \right)\nabla_{\bftheta}^*\WOpt(\bftheta).
\end{align}
The first equality follows from Lemma~\ref{lem:reduced_gradient_equality}.
The second comes from the chain rule.

The Jacobian of the optimal weights, $\nabla_{\bftheta}^*\WOpt(\bftheta)$, can be expressed by differentiating through the first-order optimality condition; that is,
\begin{align*}
\nabla_{\bftheta}^*\left(\nabla_{\bfw} \SAA\ObjFctn(\WOpt(\bftheta),\bftheta)\right)
= \nabla_{\bfw}^2 \SAA\ObjFctn(\WOpt(\bftheta),\bftheta) \nabla_{\bftheta}^* \WOpt(\bftheta) + \left.\nabla_{\bftheta}^* \nabla_{\bfw}\SAA\ObjFctn(\bfw, \bftheta)\right|_{\bfw = \WOpt(\bftheta)}
= \bf0.
\end{align*}
Solving the above for the Jacobian of the optimal weights and substituting in~\eqref{eq:app_nearly_schur}, we obtain the Schur complement of the Hessian, as desired.
\end{proof}

\begin{corollary}[Conditioning of the Reduced Hessian]\label{cor:reduced_hessian_conditioning}
Suppose the full Hessian $\nabla^2 \SAA\ObjFctn(\WOpt(\bftheta), \bftheta)$ is
positive definite.
Then every eigenvalue of $\nabla^2\Reduced{\SAA\ObjFctn}(\bftheta)$ lies between
the smallest and largest eigenvalues of
$\nabla^2 \SAA\ObjFctn(\WOpt(\bftheta), \bftheta)$.
Consequently,
\begin{align*}
\kappa\!\left(\nabla^2\Reduced{\SAA\ObjFctn}(\bftheta)\right)
\le
\kappa\!\left(\nabla^2 \SAA\ObjFctn(\WOpt(\bftheta), \bftheta)\right),
\end{align*}
where $\kappa(\bfM) = \lambda_{\rm max}(\bfM) / \lambda_{\rm min}(\bfM)$ for a
symmetric positive definite matrix $\bfM$.
\end{corollary}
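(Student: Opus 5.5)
The plan is to work directly with the Schur complement formula of Theorem~\ref{thm:reduced_objective_hessian} and a variational (Rayleigh-quotient) characterization of it, rather than with eigenvalue interlacing.

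\textbf{Setup.} Write the full Hessian at $(\WOpt(\bftheta),\bftheta)$ in block form
\[
\bfM = \begin{pmatrix} \bfM_{ww} & \bfM_{w\theta} \\ \bfM_{\theta w} & \bfM_{\theta\theta} \end{pmatrix},
\qquad \bfM_{\theta w} = \bfM_{w\theta}^\top .
\]
Symmetry holds because $\SAA\ObjFctn$ is twice continuously differentiable by Assumption~\ref{assump:vp_smoothness}. By Assumption~\ref{assump:vp_convexity}, $\bfM_{ww} \succeq \lambda_w \bfI_{\NW}$ is invertible. Theorem~\ref{thm:reduced_objective_hessian} then gives
\[
\nabla^2\Reduced{\SAA\ObjFctn}(\bftheta) = \bfS \coloneqq \bfM_{\theta\theta} - \bfM_{\theta w}\bfM_{ww}^{-1}\bfM_{w\theta},
\]
which is symmetric. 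Let $\lambda_{\min}>0$ and $\lambda_{\max}$ denote the extreme eigenvalues of $\bfM$.

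\textbf{Key identity.} For every $\bfx\in\Rbb^{\NTheta}$,
\[
\bfx^\top \bfS \bfx = \min_{\bfu\in\Rbb^{\NW}} \begin{pmatrix}\bfu\\ \bfx\end{pmatrix}^{\!\top} \bfM \begin{pmatrix}\bfu\\ \bfx\end{pmatrix}.
\]
To prove it, note that the right-hand side is a strictly convex quadratic in $\bfu$. It is minimized at $\bfu = -\bfM_{ww}^{-1}\bfM_{w\theta}\bfx$, and substituting this minimizer gives exactly $\bfx^\top\bfS\bfx$. This is the same partial-minimization principle that defines VarPro, applied at the level of quadratic forms.

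\textbf{Eigenvalue bounds.} Both bounds follow from the identity.
\begin{itemize}
\item Lower bound: for any $\bfu$, the quadratic form is at least $\lambda_{\min}(\|\bfu\|^2+\|\bfx\|^2)\ge\lambda_{\min}\|\bfx\|^2$. Hence $\bfx^\top\bfS\bfx\ge\lambda_{\min}\|\bfx\|^2$.
\item Upper bound: choosing the admissible value $\bfu=\mathbf{0}$ in the minimum gives $\bfx^\top\bfS\bfx\le\bfx^\top\bfM_{\theta\theta}\bfx\le\lambda_{\max}\|\bfx\|^2$.
\end{itemize}
By the Rayleigh-quotient characterization of the eigenvalues of the symmetric matrix $\bfS$, every eigenvalue of $\bfS$ lies in $[\lambda_{\min},\lambda_{\max}]$. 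In particular $\bfS\succ 0$, so its condition number is defined, and
\[
\kappa(\bfS)=\frac{\lambda_{\max}(\bfS)}{\lambda_{\min}(\bfS)}\le\frac{\lambda_{\max}}{\lambda_{\min}}=\kappa(\bfM).
\]

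\textbf{Expected obstacle.} There is essentially no hard step. The only point needing care is justifying the variational identity, in particular that $\bfM_{ww}$ is invertible so the inner minimum is attained uniquely. Assumption~\ref{assump:vp_convexity} supplies this, and it also follows from the assumed positive definiteness of $\bfM$. A secondary point is matching the block orientation and the adjoint notation $\nabla_\bfw^*\nabla_\bftheta$ of Theorem~\ref{thm:reduced_objective_hessian} to the symmetric blocks $\bfM_{\theta w}$ and $\bfM_{w\theta}$.
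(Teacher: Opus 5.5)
Your proof is correct, and it differs from the paper's in how the spectral containment is obtained. The paper identifies $\nabla^2\Reduced{\SAA\ObjFctn}(\bftheta)$ as a Schur complement via Theorem~\ref{thm:reduced_objective_hessian}. It then cites the interlacing theorem for Schur complements of Hermitian positive definite matrices \citep{smith_interlacing_1992} and reads off the bound on the extreme eigenvalues.

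You instead prove the containment directly from the variational identity $\bfx^\top\bfS\bfx=\min_{\bfu}[\bfu;\bfx]^\top\bfM[\bfu;\bfx]$. Your lower bound comes from $\lambda_{\min}(\bfM)(\|\bfu\|^2+\|\bfx\|^2)\ge\lambda_{\min}(\bfM)\|\bfx\|^2$, and your upper bound from the admissible choice $\bfu=\mathbf{0}$.

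Your route has several advantages:
\begin{itemize}
\item It is elementary and self-contained, and it proves exactly what the corollary needs, since only the extreme eigenvalues enter $\kappa$.
\item It makes the VarPro interpretation transparent: the Schur complement is itself a partial minimization over the linear block, i.e., the quadratic-form analogue of the reduced objective.
\item As a by-product it gives the slightly sharper statement $\bfS\preceq\bfM_{\theta\theta}$, so $\lambda_{\max}(\bfS)\le\lambda_{\max}(\bfM_{\theta\theta})$.
\end{itemize}

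The cited interlacing theorem yields more than you prove, namely indexwise interlacing of every eigenvalue of $\bfS$ with those of $\bfM$. None of that extra information is used in the corollary, so nothing is lost by your approach.
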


\begin{proof}
By Theorem~\ref{thm:reduced_objective_hessian},
$\nabla^2\Reduced{\SAA\ObjFctn}(\bftheta)$ is the Schur complement of the full
Hessian at $(\WOpt(\bftheta), \bftheta)$.
For a symmetric positive definite block matrix, the spectrum of the Schur
complement is more compressed than the spectrum of the full matrix; in
particular, the eigenvalues of the Schur complement lie between the smallest
and largest eigenvalues of the full Hessian
\citep{smith_interlacing_1992}.
The stated condition-number bound follows immediately.
\end{proof}

\begin{assumption}[Strong Convexity of the Full Objective]
\label{assump:vp_theta_convexity}
For the convergence-rate result only, the full objective
$\SAA\ObjFctn: \WSpace \times \ThetaSpace \to \Rbb$ is
$\mu$-strongly convex; that is,
\begin{align*}
\nabla^2 \SAA\ObjFctn(\bfw, \bftheta)
\succeq \mu \bfI_{\NW\NTheta} \succ 0
\end{align*}
for all $(\bfw, \bftheta) \in \WSpace \times \ThetaSpace$.
\end{assumption}

\begin{theorem}[Convergence Rate of VarPro Gradient Descent]\label{thm:varpro_gd_rate}
Under Assumptions~\ref{assump:vp_smoothness}, \ref{assump:vp_convexity},
and~\ref{assump:vp_theta_convexity}, gradient descent on the reduced objective
function $\Reduced{\SAA\ObjFctn}$ with the fixed step size
$\gamma = 1 / \Reduced{K}$ converges at a rate
$1 - \gamma \mu$.
\end{theorem}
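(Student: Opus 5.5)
The plan is the standard descent lemma for the smooth, strongly convex reduced objective, followed by a Polyak--{\L}ojasiewicz (PL) inequality.

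\textbf{Step 1 (descent).} By Theorem~\ref{thm:reduced_objective_smoothness}, $\Reduced{\SAA\ObjFctn}$ is $\Reduced{K}$-smooth. The iteration is $\bftheta_{k+1} = \bftheta_k - \gamma \nabla \Reduced{\SAA\ObjFctn}(\bftheta_k)$ with $\gamma = 1/\Reduced{K}$. The descent lemma then gives
\begin{align*}
\Reduced{\SAA\ObjFctn}(\bftheta_{k+1}) \le \Reduced{\SAA\ObjFctn}(\bftheta_k) - \tfrac{\gamma}{2}\|\nabla \Reduced{\SAA\ObjFctn}(\bftheta_k)\|^2 .
\end{align*}
By Lemma~\ref{lem:reduced_gradient_equality}, this gradient equals $\nabla_{\bftheta}\SAA\ObjFctn(\WOpt(\bftheta_k),\bftheta_k)$, so the algorithm never differentiates through $\WOpt$.

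\textbf{Step 2 (PL for the reduced function).} I would show that $\mu$-strong convexity passes to $\Reduced{\SAA\ObjFctn}$. The cleanest route is Theorem~\ref{thm:reduced_objective_hessian}: the reduced Hessian is the Schur complement of the full Hessian at $(\WOpt(\bftheta),\bftheta)$.

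If $\nabla^2\SAA\ObjFctn \succeq \mu \bfI$, write the Schur complement $S$ as the inverse of the $\bftheta\bftheta$-block of $(\nabla^2 \SAA\ObjFctn)^{-1}$. Since $(\nabla^2\SAA\ObjFctn)^{-1} \preceq \mu^{-1}\bfI$, that block is also $\preceq \mu^{-1}\bfI$, and hence $S \succeq \mu\bfI$. This is the same interlacing fact cited in Corollary~\ref{cor:reduced_hessian_conditioning}.

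Therefore $\Reduced{\SAA\ObjFctn}$ is $\mu$-strongly convex. It has a unique minimizer $\bftheta_\star$ with minimum value $\Reduced{\SAA\ObjFctn}_\star$, and it satisfies
\begin{align*}
\|\nabla \Reduced{\SAA\ObjFctn}(\bftheta)\|^2 \ge 2\mu\bigl(\Reduced{\SAA\ObjFctn}(\bftheta) - \Reduced{\SAA\ObjFctn}_\star\bigr).
\end{align*}

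An alternative to the Hessian argument works directly with the joint objective. Strong convexity of $\SAA\ObjFctn$ gives $\SAA\ObjFctn(\bfw,\bftheta') \ge \SAA\ObjFctn(\WOpt(\bftheta),\bftheta) + \langle \nabla\SAA\ObjFctn, \cdot\rangle + \tfrac{\mu}{2}\|\cdot\|^2$. The $\bfw$-component of the gradient vanishes at $\WOpt$, so minimizing over $\bfw$ yields strong convexity of the partial minimum directly.

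\textbf{Step 3 (combine).} Substituting the PL inequality into the descent inequality gives
\begin{align*}
\Reduced{\SAA\ObjFctn}(\bftheta_{k+1}) - \Reduced{\SAA\ObjFctn}_\star \le (1-\gamma\mu)\bigl(\Reduced{\SAA\ObjFctn}(\bftheta_k) - \Reduced{\SAA\ObjFctn}_\star\bigr),
\end{align*}
which is linear convergence with rate $1-\gamma\mu$. Here $\gamma\mu = \mu/\Reduced{K} \le 1$, because $\mu \le K \le \Reduced{K}$, so the factor lies in $[0,1)$.

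\textbf{Main obstacle.} Step~2 is the step I expect to need the most care: transferring strong convexity from the joint objective to the reduced objective. The Schur complement bound requires the full Hessian to be invertible, which holds because it is $\succeq \mu\bfI$, and it also requires handling the block-inverse identity correctly. I would present the block-inverse argument and fall back on the partial-minimization argument if needed.
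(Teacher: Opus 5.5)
Your proposal is correct and follows the paper's proof: you combine the $\Reduced{K}$-smoothness from Theorem~\ref{thm:reduced_objective_smoothness} with $\mu$-strong convexity of $\Reduced{\SAA\ObjFctn}$, obtained from the Schur-complement (interlacing) bound on the reduced Hessian, and then apply the standard gradient-descent rate. The only difference is that you spell out what the paper cites, namely the block-inverse argument for $S \succeq \mu\bfI$ and the descent-lemma-plus-PL derivation of the $1-\gamma\mu$ contraction.
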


\begin{proof}
By Theorem~\ref{thm:reduced_objective_smoothness},
$\Reduced{\SAA\ObjFctn}$ is $\Reduced{K}$-smooth.
By Corollary~\ref{cor:reduced_hessian_conditioning}, every eigenvalue of
$\nabla^2\Reduced{\SAA\ObjFctn}(\bftheta)$ lies between the smallest and
largest eigenvalues of $\nabla^2 \SAA\ObjFctn(\WOpt(\bftheta), \bftheta)$.
Assumption~\ref{assump:vp_theta_convexity} therefore implies
\begin{align*}
\nabla^2\Reduced{\SAA\ObjFctn}(\bftheta)
\succeq \mu \bfI_{\NTheta} \succ 0
\end{align*}
for all $\bftheta \in \ThetaSpace$, so $\Reduced{\SAA\ObjFctn}$ is
$\mu$-strongly convex.
The claimed rate is therefore the standard gradient-descent convergence result
for a $\mu$-strongly convex and $\Reduced{K}$-smooth objective.
\end{proof}
 
\section{Additional Trust-Region Theory}
\label{app:trust_region_proofs}
We present some core theoretical foundations of trust-region convergence to a stationary point based on \citet[Theorems 4.5 and 4.6]{nocedal_numerical_2006} and \citet[Theorem 6.4.6]{conn_trust_2000}. 
The proofs are presented based on our notation in Algorithm~\ref{alg:trust_region_skeleton_vpboost}. 
The presented theory can be applied to more general trust-region algorithms provided acceptance/rejection of trial points and rescaling the regularization parameter/trust-region radius are the core mechanisms.

We start with the Cauchy point and re-state a known theorem that the Cauchy point guarantees sufficient model decrease.  
Here, sufficient model decrease is determined based on the norm of the gradient, the norm of the Hessian, and the trust-region radius. 

\begin{lemma}[Cauchy Point Sufficient Model Reduction]\label{lem:cauchy_point_reduction}
The Cauchy point $h_C^{(m)}$, as defined in Definition~\ref{def:cauchy_point}, satisfies 
	\begin{align}\label{eq:cauchy_lower_bound}
		\caQ[0] - \caQ[h_C^{(m)}] \ge \tfrac{1}{2} \|\caDL\| \min\left\{\Delta^{(m)}, \frac{\|\caDL\|}{\|\caDDL\|}\right\}. 
	\end{align}
\end{lemma}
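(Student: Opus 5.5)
The plan is to follow the standard Cauchy-point argument of \citet[Lemma 4.3]{nocedal_numerical_2006}. I will write $g \coloneqq \caDL$, $B \coloneqq \caDDL$ and $\Delta^{(m)} \coloneqq \caDeltaLam$. Substituting $h_C^{(m)} = -\gamma_C^{(m)} g$ into the quadratic functional \eqref{eq:quadratic_functional} gives the exact expression \eqref{eq:reduction_cauchy}:
\begin{align*}
\caQ[0] - \caQ[h_C^{(m)}] = \gamma_C^{(m)}\|g\|^2 - \tfrac12 (\gamma_C^{(m)})^2 \langle g, B g\rangle .
\end{align*}
I will then split into the same cases used in Definition~\ref{def:cauchy_point}, bounding this expression in each. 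If $g = 0$ the claim is trivial, so I assume $g \neq 0$.

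\emph{Case A: $\langle g, Bg\rangle \le 0$.} Here $\gamma_C^{(m)} = \Delta^{(m)}/\|g\|$, and the quadratic term is nonnegative. So the reduction is at least $\Delta^{(m)}\|g\|$, which is at least $\tfrac12\|g\|\min\{\Delta^{(m)}, \|g\|/\|B\|\}$. (Under \ref{assump:ell_convexity} this case only occurs when $\langle g,Bg\rangle=0$.)

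\emph{Case B: $\langle g,Bg\rangle>0$ and the interior step is taken.} Here $\gamma_C^{(m)} = \|g\|^2/\langle g,Bg\rangle$, and the reduction simplifies to $\tfrac12\|g\|^4/\langle g,Bg\rangle$. Cauchy--Schwarz with the operator norm gives $\langle g,Bg\rangle \le \|B\|\|g\|^2$, so the reduction is at least $\tfrac12\|g\|^2/\|B\|$. This is at least $\tfrac12\|g\|\min\{\cdot\}$.

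\emph{Case C: $\langle g,Bg\rangle>0$ and the boundary step is taken.} Here $\gamma_C^{(m)} = \Delta^{(m)}/\|g\|$ with $\Delta^{(m)}/\|g\| \le \|g\|^2/\langle g,Bg\rangle$, which rearranges to $(\gamma_C^{(m)})^2\langle g,Bg\rangle \le \gamma_C^{(m)}\|g\|^2$. The quadratic term is therefore at most half the linear term. The reduction is then at least $\tfrac12\gamma_C^{(m)}\|g\|^2 = \tfrac12\Delta^{(m)}\|g\|$, again at least $\tfrac12\|g\|\min\{\cdot\}$.

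Combining the three cases yields \eqref{eq:cauchy_lower_bound}. None of the steps is genuinely hard.

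The only delicate point is interpreting $\|g\|/\|B\|$ when $B = 0$, which I read as $+\infty$, consistent with Case A. A related subtlety is that $\|B\|$ must be the operator norm of the Hessian on $\Fcal$. This is finite by \ref{assump:ell_bounded_hessian}, and that finiteness is what justifies the Cauchy--Schwarz step in Case B.
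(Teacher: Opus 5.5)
Your proposal is correct and matches the paper's proof. The paper likewise expands the Cauchy model reduction exactly and splits into three cases: zero curvature (by convexity, the only way $\langle g, Bg\rangle \le 0$ can occur), the boundary step using $\Delta^{(m)}/\|g\| \le \|g\|^2/\langle g,Bg\rangle$, and the interior step using $\langle g,Bg\rangle \le \|B\|\,\|g\|^2$. Your explicit handling of $g=0$, and of $\|B\|=0$ by reading $\|g\|/\|B\|$ as $+\infty$, is a minor tidying the paper leaves implicit.
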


\begin{proof}
The proof follows directly from \citet[Lemma 4.3]{nocedal_numerical_2006} using the quadratic model based on the true Hessian of the loss functional, $\caDDL$. 

Recall, the Cauchy point (Definition~\ref{def:cauchy_point}) is a positive scalar multiple of the negative gradient, $h_C^{(m)} = -\gamma_C^{(m)} \caDL$, with $\gamma_C^{(m)} > 0$.  
The Cauchy model reduction thereby simplifies to 
	\begin{align}\label{eq:cauchy_model_reduction}
	\caQ[0] - \caQ[h_C^{(m)}] 
		&=\gamma_C^{(m)} \|\caDL\|^2 - \tfrac{1}{2}(\gamma_C^{(m)})^2 \beta^{(m)}
	\end{align}
where  $\beta^{(m)} = \langle \caDL, \caDDL \caDL\rangle$. 
The step size  takes on two possible values
	\begin{align}\label{eq:cauchy_step_size_beta}
		\gamma_C^{(m)} =\min\left\{
						\dfrac{\caDeltaLam}{\|\caDL\|} \;, \; 
						\dfrac{\|\caDL\|^2}{\beta^{(m)}}
					\right\}
	\end{align}
and defaults to the first scalar if $\beta^{(m)} \le 0$. 
Because we assume convexity of $\nabla^2 \SAA\LossFunctional$ (Assumption~\ref{assump:ell_convexity})\footnote{The result can be shown for non-convex loss functionals as well. Refer to \citet[Lemma 4.3]{nocedal_numerical_2006} for details.}, we consider two cases: (i) when $\beta^{(m)} = 0$ and (i) when $\beta^{(m)} > 0$. 

\begin{itemize}[leftmargin=*, align=left]
\item[{\bfseries Case 1 ($\beta^{(m)} = 0$):}] In the first case, the step size is $\gamma_C^{(m)} = \caDeltaLam/\|\caDL\|$ and the Cauchy model reduction becomes
	\begin{align*}
		\caQ[0] - \caQ[h_C^{(m)}] 
		&=\caDeltaLam\|\caDL\| 	
	\end{align*}
The lower bound in \eqref{eq:cauchy_lower_bound} cannot be larger than $\caDeltaLam\|\caDL\|$, and thus the theorem holds.  

\item[{\bfseries Case 2 ($\beta^{(m)} > 0$):}]  Two possibilities arise in the second case.  
	\begin{enumerate}[label={\bfseries (\alph*)}]
		\item First, if $\gamma_C^{(m)} = \caDeltaLam/\|\caDL\|$, the Cauchy model reduction in \eqref{eq:cauchy_model_reduction} becomes
	\begin{align*}
		\caQ[0] - \caQ[h_C^{(m)}] 
		&=\caDeltaLam\|\caDL\| - \tfrac{1}{2}	(\caDeltaLam)^2 \frac{\beta^{(m)}}{\|\caDL\|^2}.
	\end{align*}
Furthermore, by the step size formula in \eqref{eq:cauchy_step_size_beta}, we know 
	\begin{align*}
	\dfrac{\caDeltaLam}{\|\caDL\|} \le \dfrac{\|\caDL\|^2}{\beta^{(m)}}.
	\end{align*}
Using this inequality to substitute for one $\caDeltaLam$ in the quadratic term, a lower bound for the model reduction is
	\begin{align*}
		\caQ[0] - \caQ[h_C^{(m)}]
		&\ge \tfrac{1}{2}\caDeltaLam\|\caDL\|
	\end{align*}
and the inequality of interest \eqref{eq:cauchy_lower_bound} holds. 

	\item Second,  if $\gamma_C^{(m)} =\|\caDL\|^2 / \beta^{(m)}$, the Cauchy model reduction in \eqref{eq:cauchy_model_reduction} becomes 
		\begin{align*}
		\caQ[0] - \caQ[h_C^{(m)}] 
		&=\tfrac{1}{2}\frac{\|\caDL\|^4}{\beta^{(m)}}.
		\end{align*}
	Recall that $\beta^{(m)} = \langle \caDL, \caDDL \caDL \rangle$ is bounded above by 
		\begin{align*}
			\beta^{(m)} \le \|\caDDL\| \|\caDL\|^2.
		\end{align*}
	Substituting this upper bound into the denominator of the model reduction yields the lower bound
		\begin{align*}
		\caQ[0] - \caQ[h_C^{(m)}] 
		&\ge \tfrac{1}{2}\frac{\|\caDL\|^2}{\|\caDDL\|}
		\end{align*}
	and again the inequality of interest \eqref{eq:cauchy_lower_bound} holds. 
		
	\end{enumerate}
\end{itemize} 

We have shown that \eqref{eq:cauchy_lower_bound} holds in all possible cases, and thus the theorem is proven. 
\end{proof}

Intuitively, a large model reduction from the Cauchy point (i.e., from traversing the negative gradient direction) is expected if (a) the optimization landscape has not plateaued (i.e., $\caDL$ far from zero) and has limited curvature locally (i.e., $\|\caDDL\|$ close to zero) and (b) if the trust region is large (i.e., $\Delta^{(m)}$ far from zero). 
On the flip side, a small model reduction from the Cauchy point is expected in the opposite cases, notably in the presence of significant curvature, when the negative gradient direction may be suboptimal.  
The next theorem generalizes the notion of sufficient model decrease to any weak learner relative to the Cauchy point. 

\begin{theorem}[Sufficient Model Reduction]\label{thm:sufficient_model_reduction}
	Suppose  $h^{(m)}$ is a feasible weak learner that satisfies $\|h^{(m)}\| \le \caDeltaLam$ and
		\begin{align*}
		\caQ[0] -\caQ[h^{(m)}]  \ge c_2 \left(\caQ[0] -\caQ[h_C^{(m)}] \right)
		\end{align*}
	 for some $c_2 \in (0,1]$. 
	Then, $h^{(m)}$ achieves sufficient reduction in the sense of Lemma~\ref{lem:cauchy_point_reduction}.
\end{theorem}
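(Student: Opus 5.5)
The claim is that any feasible weak learner whose model reduction is at least a fraction $c_2$ of the Cauchy reduction also obeys a Cauchy-type lower bound in terms of $\|\caDL\|$. The plan is to chain the hypothesis with Lemma~\ref{lem:cauchy_point_reduction}, so no new analysis of the quadratic functional is needed.

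\textbf{Step 1: make the target precise.} Interpreted ``in the sense of Lemma~\ref{lem:cauchy_point_reduction},'' sufficient reduction means
\begin{align*}
\caQ[0] - \caQ[h^{(m)}] \ge c_1 \|\caDL\| \min\left\{\caDeltaLam, \frac{\|\caDL\|}{\|\caDDL\|}\right\}
\end{align*}
for some constant $c_1\in(0,1)$ independent of $m$. Following \eqref{eq:vp_reduction_vs_gradient_norm}, I will take $c_1 = c_2/2$.

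\textbf{Step 2: chain the two inequalities.} Start from the hypothesis $\caQ[0]-\caQ[h^{(m)}] \ge c_2\bigl(\caQ[0]-\caQ[h_C^{(m)}]\bigr)$. Since $c_2>0$, multiplying the inequality of Lemma~\ref{lem:cauchy_point_reduction} by $c_2$ preserves its direction. This gives the displayed bound with $c_1=c_2/2$. Because $c_2\in(0,1]$, we get $c_1\in(0,\tfrac12]\subset(0,1)$.

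\textbf{Step 3: check the edge cases.}
\begin{itemize}
\item The feasibility hypothesis $\|h^{(m)}\|\le\caDeltaLam$ is not used in the inequality chain. It only places $h^{(m)}$ in the same trust region as $h_C^{(m)}$, so that the comparison is meaningful and the radius in the lemma is the same $\caDeltaLam$.
\item If $\caDDL=0$, I read the ratio $\|\caDL\|/\|\caDDL\|$ as $+\infty$. The minimum is then $\caDeltaLam$, consistent with Case~1 of Lemma~\ref{lem:cauchy_point_reduction}.
\item If $\caDL=0$, both sides vanish and the bound holds trivially.
\end{itemize}

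There is no real obstacle here. The only point needing care is the convention for the minimum when the Hessian vanishes, which the Case~1 analysis of the lemma already covers.
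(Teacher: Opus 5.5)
Your proposal is correct and matches the paper's proof: both multiply the bound of Lemma~\ref{lem:cauchy_point_reduction} by $c_2>0$ and chain it with the hypothesis to get constant $c_2/2\le \tfrac12$, and neither uses the feasibility condition $\|h^{(m)}\|\le\caDeltaLam$. Your treatment of the degenerate cases $\caDDL=0$ and $\caDL=0$ is extra care that the paper omits, and it agrees with Case~1 in the proof of that lemma.
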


\begin{proof}
The proof follows directly from \citet[Theorem 4.4]{nocedal_numerical_2006}.   
In particular, we have 
	\begin{align*}
		\begin{split}
		\caQ[0] -\caQ[h^{(m)}]  
			&\ge c_2 \left(\caQ[0] -\caQ[h_C^{(m)}] \right)\\
			 &\ge \frac{c_2}{2} \|\caDL\| \min\left\{\caDeltaLam, \frac{\|\caDL\|}{\|\caDDL\|}\right\}. 
		\end{split}
	\end{align*}
Because $c_2 \le 1$, Lemma~\ref{lem:cauchy_point_reduction} is satisfied. 
\end{proof}

 \subsection{Proof of Theorem~\ref{thm:vpboost_convergence}} 
  \label{sec:proof_vpboost_convergence}

\vpboostconvergenceA*

\begin{proof} 
This proof follows similar logic as \citet[Theorem 4.5]{nocedal_numerical_2006}. 
For the sake of contradiction, suppose there exists some $\varepsilon > 0$ and a positive integer $M \in \Nbb$ such that 
	\begin{align*}
		\|\caDL\| \ge \varepsilon \qquad \text{for all $m\ge M$}.
	\end{align*} 
We will contradict this statement in two steps. 
First, following Lemma~\ref{lem:varpro_regularization_parameter_upper_bound}, the regularization parameter $\caLambda$ cannot become arbitrarily large in Algorithm~\ref{alg:trust_region_skeleton_vpboost}. 
The second step is to show that the lower boundedness of the loss functional
(Assumption~\ref{assump:ell_bounded_below})
and the regularization increase mechanism of
Algorithm~\ref{alg:trust_region_skeleton_vpboost},
Line~\ref{alg_line:reg_increase} necessitate that $\caLambda \to \infty$,
resulting in a contradiction.
To reach this contradiction, we need to consider two opposite cases: (i) when $\rho^{(m)} \ge \rho_{\rm small}$ for infinitely many iterates and (ii) when $\rho^{(m)}  < \rho_{\rm small}$ for all sufficiently large $m$. 

In the first case, let $\Mcal = \{f^{(m)}\}_{m=0}^\infty$ be the sequence of iterates produced by  Algorithm~\ref{alg:trust_region_skeleton_vpboost} and suppose there is an infinite subsequence of iterates $\Mcal' \subset \Mcal$ such that $\rho^{(m)} \ge \rho_{\rm small}$ for all $f^{(m)}\in \Mcal'$. 
Then, for $f^{(m)}\in \Mcal'$ and $m \ge M$, we have 
	\begin{align*}
		\caL - \Lcal_N[f^{(m)} + \cah] 
			&\ge \rho_{\rm small}\left(\caQ[0] - \caQ[\cah]\right) \ge \rho_{\rm small} \cdot \varepsilon^2 \cdot \frac{\kappa_{\rm align}}{\beta +\caLambda / \alpha_{\rm low}^2}
	\end{align*}
where the last inequality comes from Lemma~\ref{lem:varpro_reduction_lower_bound} and the assumption that $\|\caDL\|\ge \varepsilon$ for all $m\ge M$. 
Because $\Lcal_N$ is bounded from below
(Assumption~\ref{assump:ell_bounded_below}),
the right-hand side must approach zero; otherwise, the loss functional would
decrease indefinitely.
As the only $m$-dependent term in the lower bound, $\lambda_w^{(m)} \to \infty$, resulting in a contradiction of Lemma~\ref{lem:varpro_regularization_parameter_upper_bound}.

This means there can only be finitely many iterates for which $\rho^{(m)} \ge \rho_{\rm small}$. 
Thus, for sufficiently large $m$, $\rho^{(m)} < \rho_{\rm small}$ and,  by Algorithm~\ref{alg:trust_region_skeleton_vpboost}, Line~\ref{alg_line:reg_increase}, the regularization parameter will eventually be multiplied by $\gamma_{\rm up} > 1$ infinitely many times. 
This also results in $\caLambda \to \infty$, and again contradicts Lemma~\ref{lem:varpro_regularization_parameter_upper_bound}.  
As a result, our original assumption that $\|\caDL\| \ge \varepsilon$ for all $m \ge M$ must be false, and thus the theorem statement is proven. 
\end{proof}

 \subsection{Proof of Theorem~\ref{thm:vpboost_convergence_part2}} 
 \label{sec:proof_vpboost_convergence_part2}

\vpboostconvergenceB*

\begin{proof} 
This proof follows the strategy from \citet[Theorem 4.6]{nocedal_numerical_2006}, which is adapted from \citet[Theorem 2.2]{shultz_family_1985}.

Suppose at some weak learner stage $m$, we have not converged to a stationary point; that is, $\|\caDL\| > 0$. 
Then, following Theorem~\ref{thm:vpboost_convergence}, there must exist some $k > m$ such that $\|\nabla \Lcal_N[f^{(k)}]\| < \|\caDL\|$.  
Let $k_m$ be the first such iterate after $m$ where the norm of the gradient decreases. 
We construct a lower bound of the loss functional decrease between these iterates via a telescoping series
	\begin{align*}
		\caL - \Lcal_N[f^{(k_m)}] 
			&= \sum_{j=m}^{k_m-1} \Lcal_N[f^{(j)}] - \Lcal_N[f^{(j + 1)}]\\
			&\ge \sum_{j=m,\, f^{(j)} \not=f^{(j + 1)}}^{k_m-1} \Lcal_N[f^{(j)}] - \Lcal_N[f^{(j + 1)}]\\
			&\ge \sum_{j=m,\, f^{(j)} \not=f^{(j + 1)}}^{k_m-1} \rho_{\rm accept}\left(\caQ[0] - \caQ[h_\star^{(j)}] \right)
	\end{align*}
where the first inequality only considers iterates for which the weak learner was accepted (Algorithm~\ref{alg:trust_region_skeleton_vpboost}, Line~\ref{alg_line:vpboost_accept}) and the final inequality follows from the non-zero acceptance threshold. 
From the lower bound in Lemma~\ref{lem:varpro_reduction_lower_bound}, we proceed via\begin{align}\label{eq:loss_functional_reduction_lower_bound_accept}
        \begin{split}
		\caL - \Lcal_N[f^{(k_m)}] 	&\ge \sum_{j=m,\, f^{(j)} \not=f^{(j + 1)}}^{k_m-1} \rho_{\rm accept} c_1 \|\nabla \Lcal_N[f^{(j)}]\|^2 \cdot \frac{\kappa_{\rm align}}{\beta +\lambda_w^{(j)}/\alpha_{\rm low}^2}\\
			&\ge \left(\rho_{\rm accept} \cdot \frac{\kappa_{\rm align}}{\beta +\lambda_{\rm max}/\alpha_{\rm low}^2}\right) \sum_{j=m,\, f^{(j)} \not=f^{(j + 1)}}^{k_m-1} \|\nabla \Lcal_N[f^{(j)}]\|^2\\
			&\ge\left(\rho_{\rm accept} \cdot \frac{\kappa_{\rm align}}{\beta +\lambda_{\rm max}/\alpha_{\rm low}^2}\right) \|\nabla \Lcal_N[f^{(m)}]\|^2
\end{split}
	\end{align}
where $\lambda_{\rm max}$ is the largest attainable regularization parameter in Algorithm~\ref{alg:trust_region_skeleton_vpboost}, as derived in Lemma~\ref{lem:varpro_regularization_parameter_upper_bound}. 

By design, Algorithm~\ref{alg:trust_region_skeleton_vpboost} only accepts iterates that reduce the loss functional. 
Because the loss functional is bounded below
(Assumption~\ref{assump:loss_functional}:~\ref{assump:ell_bounded_below}),
we necessarily have\footnote{The loss functional value will decrease monotonically, but not necessarily strictly monotonically.} $\caL \downarrow \Lcal_N^*$ as $m\to \infty$ for some constant $\Lcal_N^* > -\infty$.
Following the bound in \eqref{eq:loss_functional_reduction_lower_bound_accept}, we have 
	\begin{align*}
		\caL - \Lcal_N^*  
			&\ge \caL - \Lcal_N[f^{(k_m)}] \\
			&\ge \left(\rho_{\rm accept} \cdot \frac{\kappa_{\rm align}}{\beta +\lambda_{\rm max}/\alpha_{\rm low}^2}\right) \|\nabla \Lcal_N[f^{(m)}]\|^2.
	\end{align*}
Because the left-hand side of the inequality must decay to zero and the lower bound is nonnegative, it follows that the lower bound decays to zero as well. 
Thus, $\|\Lcal_N[f^{(m)}]\| \to 0$ as $m\to \infty$. 
\end{proof}

 \subsection{Proof of Theorem~\ref{thm:vpboost_convergence_rate}} 
 \label{sec:proof_vpboost_convergence_rate}

\vpboostconvergencerate*

\begin{proof} 
The proof follows the logic of \citet[Theorem 4.9]{nocedal_numerical_2006}. 
Because \VPBoost builds a quadratic model using the exact Hessian of the loss function, $\caDDL$, we are able to re-use our previous lemmas directly.   
First, we argue that following \VPBoost (Algorithm~\ref{alg:trust_region_skeleton_vpboost}), the trust-region constraint will eventually become inactive for VarPro weak learners when the iterates are close to $f^*$. 
We then take a detour and consider an ensemble constructed from Newton weak learners and appeal to known quadratic convergence rates for Newton steps.  
We conclude the proof using the asymptotic similarity assumption of VarPro and Newton weak learners to show \VPBoost converges superlinearly to $f^*$.

Following Lemma~\ref{lem:varpro_regularization_parameter_upper_bound}, the regularization parameter $\caLambda$ cannot become arbitrarily large while proceeding through \VPBoost (Algorithm~\ref{alg:trust_region_skeleton_vpboost}). 
Recall from~\eqref{eq:trust_region_radius_formula}, $\caLambda$ is inversely proportional to the trust-region radius $\caDeltaLam$, and thus the trust-region radius cannot become arbitrarily small.  
Let $\Delta_{\rm min} > 0$ be the minimal such radius corresponding to $\lambda_{\rm max}$ in Lemma~\ref{lem:varpro_regularization_parameter_upper_bound}. 
It follows that, for large enough $M$, the sequence of \VPBoost ensembles must remain within the open ball centered at $f^*$ with radius $\Delta_{\rm min}$; that is,   $f^{(m)} \in B(f^*, \Delta_{\rm min})$ for all $m \ge M$ where 
	\begin{align*}
		 B(f^*, \Delta_{\rm min}) = \{f\in \Fcal \mid \|f - f^*\| < \Delta_{\min}\}. 
	 \end{align*}

We now recall some properties of Newton steps, translated to the weak learner setting. 
For some sufficiently large\footnote{Because the Hessian is continuous (Assumption~\ref{assump:ell_twice_continuously_differentiable}) and $\nabla^2 \SAA\LossFunctional[f^*] \succ 0$, we have that $\caDDL \succ 0$, and thus invertible, when $f^{(m)}$ is close enough to $f^*$.} positive integer $M$, let  
	\begin{align*}
	\cahN = -\caDDL^{-1} \caDL
	\end{align*}
denote the Newton weak learner for $m \ge M$. 
It is well-known \citep[Theorem 3.5]{nocedal_numerical_2006} that if $f^{(M)}$ is sufficiently close to $f^*$, then the sequence of Newton-based ensembles will converge quadratically to $f^*$; that is, 
	\begin{align}\label{eq:newton_quadratic}
		\|f^{(m)} + \cahN - f^*\| \le \kappa_{\diamond} \|f^{(m)}  - f^*\|^2
	\end{align}
for some fixed constant $\kappa_\diamond > 0$ related to the curvature of the Hessian at the stationary point \citep[Equation 3.33]{nocedal_numerical_2006}.  
We write \eqref{eq:newton_quadratic} in Big-O notation as $\|f^{(m)} + \cahN - f^*\| = O( \|f^{(m)}  - f^*\|^2)$. 
Using the triangle inequality, we recognize that 
	\begin{align*}
		\|\cahN \| \le \|f^{(m)} + \cahN  - f^*\|  +  \|f^{(m)} - f^*\|. 
	\end{align*}
Thus, $\|\cahN \|$ converges to zero at a rate  proportional to the dominant term $\|f^{(m)} - f^*\|$; in Big-O notation, $\|\cahN \| = O(\|f^{(m)} - f^*\|)$.

Assume we now choose a sufficiently large $M$ such that  for all $m\ge M$, $f^{(m)} \in B(f^*, \Delta_{\rm min})$ and $f^{(m)}$ is sufficiently close to $f^*$ so that Newton weak learners can be formed and updates remain within $B(f^*, \Delta_{\rm min})$. 
We now prove that the VarPro weak learners, $\cah$, are related to the Newton weak learners, $\cahN$, and thus that \VPBoost inherits convergence properties related to Newton ensemble convergence.  
We relate the error of the $(m+1)^{\rm st}$ ensemble based on appending a VarPro weak learner or a Newton weak learner using the triangle inequality via
	\begin{align*}
		\|f^{(m)} + \cah - f^*\| 
			&\le \|f^{(m)} + \cahN - f^*\| + \|\cah - \cahN\| \le O(\|f^{(m)} - f^*\|^2) + o(\|\cahN\|)
	\end{align*}
The final term comes from the asymptotic similarity assumption posed in the theorem statement. 
Because the Newton steps decay to zero, $\|\cahN\| \to 0$ and $\|\cahN\| = O(\|f^{(m)} - f^*\|)$, it necessarily follows that the upper bound becomes
	\begin{align*}
	\|f^{(m)} + \cah - f^*\| = o(\|f^{(m)} - f^*\|). 
	\end{align*}
Thus, \VPBoost converges superlinearly to $f^*$. 
\end{proof}

\end{document}